\newtheorem{theorem}{Theorem}[section]
\newtheorem{proposition}[theorem]{Proposition}
\newtheorem{lemma}[theorem]{Lemma}
\newtheorem{corollary}[theorem]{Corollary}
\newtheorem{claim}[theorem]{Claim}
\newtheorem{assumption}{Assumption}
\newcommand{\selectproc}{\textbf{Select}\xspace}
\newcommand{\tryselectproc}{\textbf{Try-Select}\xspace}
\newcommand{\optimproc}{\textbf{Optim}\xspace}
\newcommand\R{\mathbb{R}}
\newcommand{\vertiii}[1]{{\left\vert\kern-0.25ex\left\vert\kern-0.25ex\left\vert #1 
		\right\vert\kern-0.25ex\right\vert\kern-0.25ex\right\vert}}
\newcommand{\selarm}{\hat{i}}
\newcommand{\bestarm}{i^*}
\DeclareMathOperator*{\argmin}{arg\,min}
\title{Online Orthogonal Matching Pursuit}
\author{El Mehdi Saad\footnote{Laboratoire de Math\'ematiques d'Orsay, Univ.\ Paris-Sud, Universit\'e Paris-Saclay.} , Gilles Blanchard$^*$, Sylvain Arlot$^*$}
\date{}
\begin{document}
\maketitle

\begin{abstract}
Greedy algorithms for feature selection are widely used for recovering sparse high-dimensional vectors in linear models. In classical procedures, the main emphasis was put on the sample complexity, with little or no consideration of the computation resources required. We present a novel online algorithm: Online Orthogonal Matching Pursuit (OOMP) for online support recovery in the random design setting of sparse linear regression. Our procedure selects features sequentially, with one pass over data,
alternating between allocation of samples only as needed to candidate features, and optimization over the selected set of variables to estimate the regression coefficients. Theoretical guarantees about the output of this algorithm are proven and its computational complexity is analysed.
\end{abstract}

\section{Introduction}

In the context of large scale machine learning, one often deals with massive data-sets and a considerable number of features. While processing such large data-sets, one is often faced with scarce computing resources.
The adaptability of online learning algorithms to such constraints made them very popular in the machine learning community.

In the current work we address the problem of online feature selection, i.e support recovery algorithms restricted to a single training pass over the available data. This setting is particularly relevant when the system cannot afford several passes throughout the training set: for example, when dealing with massive amounts of data or when memory or processing resources are restricted, or when data is not stored but presented in a stream.


Suppose that there exists a vector $\beta^{*}\in \mathbb{R}^{d}$ with $\|\beta^{*}\|_0 =s^{*} \le d$ such that the response variable $y$ is generated according to the linear model $y = \langle x,\beta^{*}\rangle+\epsilon$, where $\epsilon$ satisfies $\mathbb{E}[\epsilon | x]=0$, let $S^* = \text{supp}\left(\beta^* \right)$. Throughout the article, we consider that the feature vector $x$ is random, and we assume that $|y|<1$ and $\|x\|_{\infty} <M$ almost surely for a known constant $M>0$. The straightforward formulation of sparse regression using a $l_{0}-$ pseudo-norm constraint is computationally intractable. This challenge motivated the rise of many computationally tractable procedures
whose statistical validity has been established under additional assumptions such as the Irrepresentable Condition (IC) and Restricted Isometry Property (RIP).


Many algorithms have been proposed for support recovery, the most popular procedures use a convex relaxation with the $l_1-$norm (LASSO based algorithms, \cite{tibshirani1996regression}), and greedy procedures such as
Orthogonal Matching Pursuit algorithm (OMP, \cite{mallat1993matching}), where features are selected sequentially. In this paper, we develop a novel online variant of OMP. Theoretical guarantees about OMP on support recovery were developed by \cite{zhang2011sparse}, under the IC+RIP assumption, and many variants have been developed \cite{blumensath2008gradient,combettes2019blended}, where different optimization procedures are used instead of ordinary least squares. However, the computational complexity remains of the order $\mathcal{O}(nd)$ for one variable selection step and $\mathcal{O}(s^*nd)$ for total support recovery, with a sample size satisfying $n =\Omega \paren{ \max\paren[1]{ s^*, \frac{1}{\min\{ |\beta^{*}_i|^{2}, \beta^{*}_i \neq 0\}}} }$ for exact support recovery with a high probability guarantee. A drawback of these procedures, besides the need to perform multiple passes over the training set, is that the sample size, hence the computational complexity of every step, depends on $(\min\{ |\beta^{*}_i|, \beta^{*}_i \neq 0\})^{-1}$. Intuition suggests that recovery of the larger coefficients of $\beta^*$ should be possible with less data and hence less computational complexity.
We propose a feature selection procedure that is consistent with this intuition.

If the support size $s^*$ is known, the proposed algorithm (OOMP) halts after recovering all features in $S^*$. Otherwise, it relies on some external criterion (such as a runtime budget), whenever halted, the procedure returns a set of features guarantees to belong to $S^*$ with high probability. Moreover, we show that support recovery is achieved in finite time and provide a control on the computational complexity necessary to attain this goal.

\subsection{Main contributions}

This paper is about the design and analysis of support recovery for linear models in the online setting. We make the following contributions:
\begin{itemize}
	\item We design a general modular procedure, where the learner can use any black-box optimization algorithm combined with an approximate best arm identification approach, provided those procedures come with suitable guarantees. We show that at any interruption time, it is guaranteed with high probability that the set of selected features $S$ satisfies: $S \subseteq S^*$.
	\item We instantiate the general design using a variant of the stochastic gradient descent for the optimization and a LUCB-type (Lower Upper Confidence Bound) procedure for approximate best arm selection. The proposed algorithm has the advantage of being adapted to the streaming setting (i.e. requiring only one pass over data).
	\item A prior knowledge on the support size $s^*$ or the magnitude of the smallest coefficient: $\min\{ |\beta^{*}_i|, \beta^{*}_i \neq 0\}$, is not necessary to run the procedure. We show that OOMP recovers the support $S^*$ in finite time and provide a control on the runtime necessary to achieve this objective.
	\item We compare the runtime required for support recovery using OOMP ($C^{\text{OOMP}}$) with the corresponding runtime using batch version OMP  ($C^{\text{OMP}}$). We show that when $d>(s^*)^3$, it always holds $C^{\text{OOMP}} = \mathcal{O}(C^{\text{OMP}}\log^2\left(C^{\text{OMP}}\right))$, and when the coefficients of $\beta^*$ have a different order of magnitude, $C^{\text{OOMP}}$ can be much smaller than $C^{\text{OMP}}$. We provide some examples (such as polynomially decaying coefficients) to illustrate the gain in computational complexity of OOMP with respect to OMP.
	\item OMP was shown to require less data than Lasso for \textit{support recovery} (\cite{zhang2009consistency}). We consider the streaming sparse regression algorithm (SSR) presented in \cite{steinhardt2014statistics}, which is conceptually related to Lasso, as a benchmark to compare OOMP with $l_1$-regularization type algorithms. We prove that when $d>(s^*)^3$, OOMP outperforms SSR in terms of computational complexity.
\end{itemize}

\paragraph{Organization} In section~\ref{sec:OMP}, we present high level ideas and key properties which underpin greedy feature selection principles such as the Orthogonal Matching Pursuit algorithm (in the batch as well as in the online setting). We then extend this idea and design a general Online OMP procedure which is built using two black-box procedures (namely \optimproc and \tryselectproc) in Section~\ref{sec:OOMP}. Then, we instantiate this general procedure using Algorithms~\ref{alg:optim} for \optimproc and~\ref{algo:select_mc} for \tryselectproc in Section~\ref{sec:instantiate}. Finally, we state theoretical guarantees about the output of the presented algorithm and provide a control on its runtime complexity. The last section presents simulations using synthetic data.

\subsection{Notations used}

Throughout the paper, we use the notation $[n]=\{1,\ldots,n\}$. We denote by $d$ the total input space dimension (total number of features),
and $s^{*}$ denotes the cardinality of the set $S^{*}$ of features to be recovered. For a vector
$\gamma \in \mbr^d$ and $F \subseteq [d]$, we denote $\gamma_{i:F}$ the coordinate of $\gamma$ corresponding to the $i$-th
element of $F$ ranked in increasing order, and $\gamma_F$ the vector of $\mbr^{|F|}$ such that $(\gamma_F)_i := \gamma_{i:F}$.
Similarly, for a matrix $M \in \mathbb{R}^{d \times d}$  we denote $M_F$ the matrix in $\mathbb{R}^{|F|\times |F|}$ obtained by restricting the matrix $M$ to the lines and columns with indices in $F$. For a random vector $x \in \mathbb{R}^{d}$, a random variable $y\in \mathbb{R}$ and $F \subseteq [d]$ we denote $\text{Cov}(x_F,y)$ the vector in $\mathbb{R}^{|F|}$ defined by $\text{Cov}(x_F,y)_i = \text{Cov}(x_{i:F}, y), \forall i \in [|F|]$. We denote $\Sigma$ the covariance matrix of $x$. For $\beta \in \mathbb{R}^{d}$ let us denote $\mathcal{R}(\beta) = \mathbb{E}_{(x,y)}[(y-\langle x, \beta \rangle)^{2}]$ the (population) squared risk function.

The prefix S refers to results presented in the supplementary material.

\section{Batch OMP and oracle version}\label{sec:OMP}

We start with recalling the standard batch OMP (Algorithm~\ref{algo:omp}) for reference.
Then we will introduce an ``oracle'' version when the data is random, which will serve as a guide for constructing the
online algorithm.

\subsection{Batch OMP}
Given a batch measurement matrix $\bm{X} \in \mathbb{R}^{n \times d}$ and a response vector $\bm{Y} \in \mathbb{R}^{n}$, at each iteration, OMP picks a variable that has the highest empirical correlation (in absolute value) with the ordinary linear least squares regression residue of the response variable with respect to features selected in the previous iterations. The algorithm stops when the maximum correlation is below a given threshold $\eta$.

\begin{algorithm}[tb] 
	\caption{OMP($\bm{X}$,$\bm{Y}$,$\eta$) \label{algo:omp}}
	\begin{algorithmic}
		\STATE $S = \emptyset$, $\bar{\beta} = 0$
		\WHILE {true}
		\STATE $\hat{i} \gets \text{argmax}_{j \notin S}|\bm{X}_{.j}^{t}(\bm{Y}-\bm{X}\bar{\beta})|$.
		\IF {$|\bm{X}_{.\hat{i}}^{t}(\bm{Y}-\bm{X}\bar{\beta})| < \eta$}
		\STATE \bfseries Break
		\ELSE
		\STATE $S \gets S \cup \{\hat i\}$
		\STATE $\bar{\beta} \gets \underset{\text{supp}(\beta) \subseteq S}{\text{argmin}} \|\bm{X}\beta-\bm{Y}\|^{2}$
		\ENDIF
		\ENDWHILE
		\STATE {\bfseries return}: $S$, $\bar{\beta}$.
	\end{algorithmic}
\end{algorithm}
\hfill
\begin{algorithm}[tb]
	\caption{Oracle OMP \label{algo:oracle_omp}}
	\begin{algorithmic}
		\STATE \textbf{Input}: integer $s^* (\infty \text{ if unknown}),\mu \in [0,1)$.
		\STATE Let $S=\emptyset$.
		\WHILE{$|S|<s^{*}$}
		\STATE Let $\beta^S = \underset{\text{supp}(\beta) \subseteq S}{\text{argmin}} \mathbb{E}_{(x,y)}[(y-\langle x, \beta \rangle)^{2}]$
		\STATE Let
		$Z_i^{S}  = \mathbb{E}[x_i(y-\langle x, \beta^S \rangle)], (i=1,\ldots,d).$
		\STATE Select $i^*$ such that:
		\STATE \hspace{5mm} $Z_{i^*}^{S} \in [\mu \max_{j\in [d]\setminus S}Z_j^{S}, \max_{j\in [d]\setminus S}Z_j^{S}]$
		\STATE \bfseries if  {$Z_{i^*}^{S} = 0$} then Break 
		\STATE $S \gets S \cup \{i^*\}$
		\ENDWHILE
		\STATE Output $S$.
		\STATE {\bf On interrupt:} {\bf return} $S$.
		
	\end{algorithmic}
\end{algorithm}


Each iteration of Algorithm 1 comprises a selection procedure, where one selects a feature based on its correlation with the current residuals, and an optimization procedure, in this case the ordinary least squares, where one optimizes the squared loss function over the space spanned by the set of selected features, and determines the new residuals for the next iteration.

\subsection{Oracle OMP}

To understand why OMP works, we consider the setting where the data is random and present an ``oracle'' (or population) version of OMP in order to give an insight about the core principle of its selection strategy, which we will adapt to the streaming setting. Throughout this work we assume the following on
the generating distribution of feature vector and noise:
\begin{assumption}
	\label{ass:ass4}
	$\mathbb{E}[x] =0$, $y= \inner{\beta^*,x} + \epsilon$, and the noise variable satisfies $\mathbb{E}[\epsilon|x] =0$.	
\end{assumption}

Let us introduce the following classical assumption in support recovery literature, which appears in \cite{tropp2004greed, zhao2006model} and \cite{zhang2009consistency} as the irrepresentable condition (IC). Consider a subset $S \subseteq [d]$ and denote 
\begin{equation*}
\mu_S = \max_{j \in [d] \setminus S} \|\Sigma_{S}^{-1} \text{Cov}(x_{S}, x_j)\|_1.
\end{equation*}

\begin{assumption}[Irrepresentable condition, IC]
	\label{ass:ass3}
	For all $S \subseteq [d]$ such that $|S| = s^{*}$,
	\begin{equation*}
	0\le \mu_S <1.
	\end{equation*}
\end{assumption}

\textbf{Remark:} The assumption $\mu_{S^{*}} < 1$ is often used for exact support recovery, it was shown in \cite{zhang2009consistency} that it is a {\em necessary} condition for the consistency of batch OMP feature selection. 

Consider for a subset $S \subseteq S^{*}$:
\begin{equation*}
\beta^S \in \underset{\text{supp}(\beta) \subseteq S}{\text{argmin}}\mathcal{R}(\beta).
\end{equation*}
We define the covariance between the oracle residuals with each feature as:
\begin{equation}\label{eq:defz}
Z_i^{S}  := \mathbb{E}[x_i(y-\langle x, \beta^S \rangle)], i=1,\ldots,d.
\end{equation}
The selection criterion used in oracle OMP relies on the quantities $Z_i^S$, thanks to the following lemma:
\begin{lemma} 
	\label{lem:ineqzi}
	Suppose Assumptions~\ref{ass:ass4} and~\ref{ass:ass3} hold. For any $S \subseteq S^{*}$, we have
	(with the convention $\max \emptyset = 0$):
	\begin{equation}\label{ineq:fond}
	\underset{j \notin S^{*}}{\max} |Z_{j}^{S}| \le \mu_{S^{*}} \underset{i \in S^{*} \setminus S}{\max} |Z_{i}^{S}|. 
	\end{equation}
\end{lemma}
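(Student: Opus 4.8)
The plan is to express every $Z_i^S$ in terms of the error vector $\delta := \beta^*-\beta^S$ and the covariance matrix $\Sigma$, then exploit the first-order optimality conditions defining $\beta^S$, and finally conclude via Hölder's inequality and the definition of $\mu_{S^*}$.

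First I would establish the identity $Z_i^S=(\Sigma\delta)_i$ for all $i\in[d]$. Since $\mathbb{E}[\epsilon\mid x]=0$ by Assumption~\ref{ass:ass4}, we have $\mathbb{E}[x_i\epsilon]=0$; combined with $\mathbb{E}[x]=0$ (so that $\Sigma=\mathbb{E}[xx^t]$) and $y=\inner{\beta^*,x}+\epsilon$, this gives $Z_i^S=\mathbb{E}[x_i\inner{\beta^*-\beta^S,x}]=(\Sigma\delta)_i$. Next I would record two structural facts: (i) $\delta$ is supported on $S^*$, because $\beta^*$ is supported on $S^*\supseteq S$ and $\beta^S$ on $S$; and (ii) $Z_i^S=0$ for every $i\in S$, which is exactly the normal equation obtained by differentiating the (strictly convex) quadratic $\mathcal{R}$ along the coordinate subspace indexed by $S$ — strict convexity, and hence well-posedness of $\beta^S$, follows from the invertibility of $\Sigma_{S^*}$ (implicit in Assumption~\ref{ass:ass3}) and of its principal submatrix $\Sigma_S$. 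From (ii), the vector $(Z_i^S)_{i\in S^*}$ is supported on $S^*\setminus S$, so that $\|(Z_i^S)_{i\in S^*}\|_\infty=\max_{i\in S^*\setminus S}|Z_i^S|$ (with the convention $\max\emptyset=0$ when $S=S^*$).

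Then I would restrict the identity $Z^S=\Sigma\delta$ to indices in $S^*$: using (i), this reads $(Z_i^S)_{i\in S^*}=\Sigma_{S^*}\,\delta_{S^*}$, hence $\delta_{S^*}=\Sigma_{S^*}^{-1}(Z_i^S)_{i\in S^*}$. For any $j\notin S^*$, again by (i),
\begin{equation*}
Z_j^S=(\Sigma\delta)_j=\inner{\text{Cov}(x_{S^*},x_j),\delta_{S^*}}=\inner{\Sigma_{S^*}^{-1}\text{Cov}(x_{S^*},x_j),\,(Z_i^S)_{i\in S^*}},
\end{equation*}
where the last step uses the symmetry of $\Sigma_{S^*}^{-1}$. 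Hölder's inequality together with the definition of $\mu_{S^*}$ then gives
\begin{equation*}
|Z_j^S|\le\|\Sigma_{S^*}^{-1}\text{Cov}(x_{S^*},x_j)\|_1\cdot\|(Z_i^S)_{i\in S^*}\|_\infty\le\mu_{S^*}\max_{i\in S^*\setminus S}|Z_i^S|,
\end{equation*}
and taking the maximum over $j\notin S^*$ yields~\eqref{ineq:fond}.

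I do not expect a serious obstacle: the argument is essentially a normal-equation computation followed by Hölder. The points that need care are justifying that $\beta^S$ is well-defined and that $Z_i^S$ vanishes on $S$ (i.e. translating ``minimizer over a coordinate subspace'' into the orthogonality condition), and tracking the degenerate case $S=S^*$, where $\delta=0$ forces $Z_j^S=0$ for all $j$ and both sides of~\eqref{ineq:fond} vanish, consistently with the $\max\emptyset=0$ convention.
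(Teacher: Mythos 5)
Your proposal is correct and follows essentially the same route as the paper's proof: both express $Z^S=\Sigma(\beta^{S^*}-\beta^S)$ with the difference vector supported on $S^*$, use the normal equations to get $Z_i^S=0$ on $S$, and conclude via the $\ell_1$--$\ell_\infty$ duality against the definition of $\mu_{S^*}$ (the paper phrases this as a variational lower bound on $\mu_{S^*}$ with $v=\Sigma_{S^*}(\beta^{S^*}_{S^*}-\beta^S_{S^*})$ rather than explicitly inverting $\Sigma_{S^*}$, but the computation is identical).
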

Algorithm~\ref{algo:oracle_omp} presents the resulting procedure, called Oracle version of OMP. In order to ease notations will use $\mu$ instead of $\mu_{S^{*}}$ in the remainder of this paper. 

\textbf{Remarks:} \\
$\bullet$ A similar result was used in~\cite{zhang2009consistency}
for the case of fixed design with random noise, where it was shown that either the empirical counterparts of $Z_i^S$ are small, or they satisfy an inequality analogous to~\eqref{ineq:fond}.  \\
$\bullet$ The right-hand side of~\eqref{ineq:fond} can be written as $\max_{i \in S^{*}} |Z_i^{S}|$, since $Z_i^{S} = 0$ for all $i \in S$.\\
$\bullet$ This lemma shows in particular that under Assumptions~\ref{ass:ass4}-\ref{ass:ass3}, if $S\subseteq S^*$ and $ \max_{i}|Z_i^{S}| >0$, then $\max_{i \notin S^{*}}|Z_i^{S}| < \max_{i \in S^{*}}|Z_i^{S}|$. Hence, unless $S^{*}= S$, picking the feature with the largest population correlation $|Z_i^{S}|$ guarantees that this feature belongs to $S^{*}$.\\
$\bullet$ In the oracle setting, the algorithm stops as soon as $\max_{i}|Z_i^{S}| =0$, since Lemma~\ref{lem:ineqzi}
guarantees that $S=S^*$ then. In the batch setting with a finite amount $n$ of available data, the algorihm stops
when the maximum empirical correlation is too small and and cannot guarantee $\max_{i}|Z_i^{S}| >0$ due to
estimation error. The threshold for stopping then depends on estimation error, hence on $n$, see~\cite{zhang2009consistency}.

\section{Online OMP}\label{sec:OOMP}

\subsection{Settings}

In a computation-resources-constrained setting, one aims at using the least possible queries of data points and features in order to gain in computational and memory efficiency. For a data point $(x,y) \in \mathbb{R}^{d}\times \mathbb{R}$, define $z \in \mathbb{R}^{d+1}$ by: $z_{[d]} = x$ and $z_{d+1} = y$.

In this paper, we focus on the the streaming data setting were one-pass over data is performed, as summarized above: 

The algorithm queries quantities through: \textbf{query-new}$(F)$, which takes as input $F \subseteq [d+1]$ and outputs the partial observation $z_{F}$ of a fresh data point  independent from all previously queried quantities. One call to \textbf{query-new}$(F)$ has a time complexity of $\mathcal{O}(|F|)$.

In what follows, we will split algorithms into subroutines and assume that the input of each subroutine only depends on the result of past queries. This ensures that all the new data accessed by a subroutine can be
considered as i.i.d. conditionally to its input.
More formally, let us denote by $\mathcal{F}_n$ the $\sigma$-algebra generated by all queried quantities up to the $n^{th}$ \textbf{query-new} query, and let $N$ be the (possibly random) number of queries made before the call to the current subroutine. Mathematically, $N$ is a stopping time; and, conditional to $\mathcal{F}_N$ the $K$ next calls to \textbf{query-new} produce an
i.i.d. sequence of (possibly partially observed) data points. We always assume that the input to each subroutine is $\mathcal{F}_N$-measurable.
Below we will analyse
each subroutine for a fixed input and derive probabilities with respect to the queried (i.i.d.) data;
in the global flow of the algorithm, under the above assumption the same probabilistic bounds will hold conditional to
$\mathcal{F}_N$.

\subsection{Algorithm}

Online OMP (Algorithm~\ref{algo:oomp}) selects variables sequentially.
In its general form, Algorithm \ref{algo:step} (\selectproc)~consists of two sub-routines: \optimproc and \tryselectproc. The first provides an approximation of the regression coefficients for features in $S$. The latter is an approximate best arm identification strategy which uses the output of \optimproc and queries data points in order to try to select feature $i$, such that $Z_i^S$ is large enough  (Lemma~\ref{lem:ineqzi} shows that such a feature is in $S^{*}$). We now describe how \optimproc and \tryselectproc operate:

\begin{algorithm}[tb]
	\caption{Online OMP($\delta, s^*$)\label{algo:oomp}}
	\begin{algorithmic}
		\STATE \textbf{Input}: $s^* (\infty \text{ if unknown})$, $\delta \in (0,1)$
		\STATE \textbf{Input}: $\mu \in(0,1),\rho > 0$ (globals)
		\STATE Let $S=\emptyset$.
		\WHILE{$|S| < s^*$} 
		\STATE $U \gets \selectproc(S, \frac{\delta}{2(|S|+1)(|S|+2)}, 1)$
		\STATE $S \gets S \cup U$
		\ENDWHILE
		\STATE Return: $S$
		\STATE {\bf On interrupt:} {\bf return} $S$
	\end{algorithmic}
\end{algorithm}
\hfill
\begin{algorithm}[tb]
	\caption{$\selectproc$($S$,$\delta$,$\xi$) \label{algo:step}}
	\begin{algorithmic}
		\STATE [Globals: $\mu \in (0,1),\rho \in (0,1)$]
		\STATE $\tilde{\beta} \gets \optimproc(S,\delta,\xi)$
		\STATE $(U, \text{Success}) \gets \tryselectproc(S,\delta, 
		\tilde{\beta},\xi)$
		\IF {$\neg$Success}
		\STATE Return: {\bfseries Select}($S, \delta/2,\xi/4$)
		\ELSE
		\STATE {\bfseries return} $U$
		\ENDIF
	\end{algorithmic}
\end{algorithm}

\paragraph{\optimproc sub-routine:}is assumed to be a black-box optimization procedure such that for any fixed subset $S \subseteq [d]$, positive number $\xi$ and $\delta \in (0,1)$, $\optimproc(S, \delta, \xi)$ queries fresh data points through \textbf{query-new}$(S\cup \{d+1\})$ and outputs an approximation $\tilde{\beta}^S$ for $\beta^S$. We say that \optimproc  satisfies the \textit{optimization confidence property}
if
\begin{equation}
\label{eq:condopt}
\prob{ \mathcal{R}\left(\tilde{\beta}^S \right)- \mathcal{R}\left( \beta^S\right) > \xi \,\Big|\, S,\delta,\xi} \leq \delta,
\end{equation}

where the probability is with respect respect to the data queried during the procedure, for any fixed input $(S, \delta, \xi)$.

\paragraph{\tryselectproc sub-routine:} Given a set of selected features $S$, an (approximate) regression coefficients vector $\tilde{\beta}^S$ and a confidence bound $\xi$ (on $\tilde{\beta}^S$), $\tryselectproc(S, \delta, \tilde{\beta}^S, \xi)$ queries fresh data points to approximate $Z_i^S$ defined by~\eqref{eq:defz} for $i \in [d]\setminus S^{*}$ and either returns {\tt Success=False}, or {\tt Success=True} along with a set $U$ of new selected features.

We say that \tryselectproc satisfies the \textit{selection property} if for any (fixed) input $(S, \delta, \tilde{\beta}^S, \xi)$, it holds for the (random)  output $({\tt Success}, U)$:
\begin{multline}
\label{eq:selcond}
\text{provided } S\subseteq S^* \text{ and } \mathcal{R}\big(\tilde{\beta}^S\big) - \mathcal{R}\left(\beta^S\right) \leq\xi, \text{it holds: }\\
\prob[2]{\overline{A}({\tt Success},U) \,\big|\, S, \delta, \tilde{\beta}^S, \xi } \leq \delta, \\
\text{ where }  \overline{A}({\tt Success},U) :=\big\{{\tt Success = True}; 
  \exists i \in U: \mu_{S^{*}} \max_{j \in S^*\setminus S} \abs[1]{ Z^S_j} \geq \abs[1]{Z^S_{i}}\big\},
\end{multline}
where the probability is with respect to all data queries made by \tryselectproc for fixed input.
This implies in particular that $U \subset S^*\setminus S$ with probability $1-\delta$, by Lemma~\ref{lem:ineqzi} (and in particular, with the convention $\max \emptyset = 0$, the
probability of returning $\tt Success=True$ when $S=S^*$ is less than $\delta$).

If \tryselectproc returns $\tt Success = False $, this suggests
that the bound $\xi$ is not tight enough, i.e. that the prescribed precision $\xi$ for the optimization part is insufficient to find a feature with the
guarantee~\eqref{eq:selcond} holding with the required probability.
In this case, 
using the doubling trick principle, $\selectproc$ is called 
recursively with the input $\left(S, \delta/2, \xi/4 \right)$.
Algorithm~\ref{algo:step} presents the general form of the procedure \selectproc.

If the cardinality $|S^*|=s^*$ is not known in advance, there is no stopping criterion
and the procedure is run indefinitely. We assume that Online OMP
will be interrupted externally by the user based on some arbitrary criterion, for
example a limit on total computation time or other resource. In this case the current
set $S$ of selected features is returned. The next lemma ensures that at any interruption
time, it is guaranteed with high probability that $S\subseteq S^*$.

\begin{lemma}\label{global-lem}
	Suppose that Assumptions~\ref{ass:ass3} and~\ref{ass:ass4} hold. Consider Algorithm~\ref{algo:oomp} with the procedure \selectproc given in Algorithm~\ref{algo:step}, assume that \optimproc satisfies the {\em optimization confidence property}~\eqref{eq:condopt} and that \tryselectproc satisfies the \textit{selection property}~\eqref{eq:selcond}. Then when  \textbf{OOMP}($\delta, s^*$) (Algorithm~\ref{algo:oomp}) is terminated, the variable $S$ satisfies with probability at least $1-2\delta$: $S \subseteq S^*$.
\end{lemma}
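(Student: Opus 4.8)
The plan is an induction on the number of features selected by \textbf{OOMP}: I will show that, on an event of probability at least $1-2\delta$, every call to \selectproc returns a feature lying in $S^*\setminus S$, so that $S\subseteq S^*$ is preserved throughout and in particular holds at any (possibly external) termination. The probability budget is split by a union bound, first over the at most $s^*$ iterations of Algorithm~\ref{algo:oomp}, and within each iteration over the recursive calls generated by the doubling trick in Algorithm~\ref{algo:step}.

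The core is the analysis of one iteration, say the $k$-th, conditionally on the induction hypothesis $S\subseteq S^*$ for the current set $S$, and with confidence parameter $\delta_k:=\frac{\delta}{2(|S|+1)(|S|+2)}=\frac{\delta}{2k(k+1)}$ passed to \selectproc. Index the recursive calls of Algorithm~\ref{algo:step} by $m=0,1,2,\dots$; the $m$-th one runs \optimproc and \tryselectproc with parameters $\delta'=\delta_k/2^m$ and $\xi'=4^{-m}$. For the $m$-th call I consider the failure event $\mathrm{FAIL}_m$ defined as: either \optimproc violates~\eqref{eq:condopt}, i.e.\ $\mathcal{R}(\tilde\beta^S)-\mathcal{R}(\beta^S)>\xi'$; or it does not, but \tryselectproc returns \texttt{Success}$=$\texttt{True} together with $\mu_{S^*}\max_{j\in S^*\setminus S}|Z^S_j|\ge|Z^S_{i^*}|$. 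These two alternatives are disjoint, the first has conditional probability at most $\delta'$ by~\eqref{eq:condopt}, and the second has conditional probability at most $\delta'$ by~\eqref{eq:selcond} (whose hypotheses $S\subseteq S^*$ and $\mathcal{R}(\tilde\beta^S)-\mathcal{R}(\beta^S)\le\xi'$ are met on that alternative), so $\mathbb{P}(\mathrm{FAIL}_m)\le 2\delta'=2\delta_k/2^m$ conditionally on reaching the $m$-th call. Crucially, if $\mathrm{FAIL}_m$ does not occur and \tryselectproc nonetheless returns \texttt{Success}$=$\texttt{True}, then \optimproc's guarantee holds and $\mu_{S^*}\max_{j\in S^*\setminus S}|Z^S_j|<|Z^S_{i^*}|$; Lemma~\ref{lem:ineqzi} (applicable since $S\subseteq S^*$) then gives $\max_{j\notin S^*}|Z^S_j|\le\mu_{S^*}\max_{j\in S^*\setminus S}|Z^S_j|<|Z^S_{i^*}|$, which forces $i^*\in S^*$, while $|Z^S_{i^*}|>0$ excludes $i^*\in S$ (recall $Z^S_i=0$ for $i\in S$); hence $i^*\in S^*\setminus S$. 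Since the output of \selectproc is produced by the first recursive call that returns \texttt{Success}$=$\texttt{True}, the event ``iteration $k$ returns a feature outside $S^*\setminus S$'' is contained in $\bigcup_{m\ge0}\mathrm{FAIL}_m$, of probability at most $\sum_{m\ge0}2\delta_k/2^m=4\delta_k$.

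Summing over iterations, the probability that some iteration ever returns a feature outside $S^*\setminus S$ is at most $\sum_{k\ge1}4\delta_k=\sum_{k\ge1}\frac{2\delta}{k(k+1)}=2\delta$, using the telescoping identity $\sum_{k\ge1}\frac{1}{k(k+1)}=1$ (and this already covers the case $s^*=\infty$). On the complementary event, each iteration appends to $S$ an element of $S^*\setminus S$, so $S\subseteq S^*$ after every iteration; since an external interrupt returns the value of $S$ at the start of the current \selectproc call, $S\subseteq S^*$ also holds at interruption, which is the claim.

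The step I expect to require the most care is making the nested union bound rigorous rather than heuristic: I would use the measurability framework of Section~3.1, noting that the input of each subroutine call, the event that a given recursive depth $m$ has been reached, and the induction-hypothesis event $\{S\subseteq S^*\}$ are all measurable with respect to the $\sigma$-algebra $\mathcal{F}_N$ generated by the queries preceding that call; conditionally on $\mathcal{F}_N$ the fresh data queried are i.i.d., so the bounds~\eqref{eq:condopt} and~\eqref{eq:selcond} apply conditionally, and the per-call probabilities $\delta_k/2^m$ are exactly the conditional failure probabilities along the realized execution path. Everything else reduces to the two geometric/telescoping sums above.
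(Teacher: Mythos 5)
Your proposal is correct and follows essentially the same route as the paper's proof: the same decomposition of each recursive call into an optimization-failure event and a selection-failure event conditional on optimization success (your $\mathrm{FAIL}_m$ matches the paper's $A_{k,i}$, $B_{k,i}$), the same geometric union bound over recursive depth yielding $4\delta_k$ per selection step, and the same telescoping sum over $k$ giving $2\delta$. The only cosmetic difference is how non-terminating calls to \selectproc are handled (the paper formally extends the sequence $s_p$ past the last terminating iteration, while you argue directly that an interrupt returns the set from the start of the current call), which does not affect the argument.
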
  

\textbf{Remark:} The above result only guarantees that the recovered features
belong to the true support. We will see later in Lemma~\ref{lem:tau} that for the instantiations of \tryselectproc and \optimproc considered in the next section, unless the support $S^*$ is completely recovered, the procedure \selectproc finishes in finite time. Together with the previous lemma, this guarantees that the support $S^*$ will be recovered in finite time with high probability, at which point \selectproc will enter an infinite loop of recursive calls until interruption. In Section~\ref{se:complexitytheory}, we will derive quantitative bounds on
the complexity for recovering the full support.

\textbf{About the stopping rule:} 
OOMP has access to a virtually infinite stream of data points, so unless it is halted externally by the user, the algorithm can (in principle) continue querying more data to search for potentially extremely small coefficients (in contrast to
the batch setting where the amount of available data is limited). However it is possible, in every call of the procedure Try-Select, to communicate to the user an upper bound on the maximal magnitude of the remaining
coefficients of variables in $S^*\setminus S$ 
(as shown in Section~\ref{sec:algos}). Therefore, the user can halt the procedure whenever that bound is small enough (alternatively, a threshold can be passed as an input to the algorithm and a corresponding stopping rule can be derived). We advocate an agnostic point of view where the user can decide for themselves when to halt the algorithm (based on the information on the magnitude of the remaining coefficients, but also possibly on limitations of the size of available data or computation time). Our recovery result guarantees that stopping at any time, the set of selected variables is (with high probability) a subset of $S^*$.

\section{Instantiation of the Optimization procedure and Selection Strategy}\label{sec:instantiate}
In this section we provide an instantiation of \tryselectproc and \optimproc procedures.

\subsection{Assumptions}
In addition to the Irrepresentable Condition (IC) (Assumption~\ref{ass:ass3} ) we will make an assumption of 
Restricted Isometry Property (RIP) \cite{tropp2004greed,zhang2009consistency,wainwright2009sharp} for the distribution of $(x,y)$.
Denote $\Lambda^{\min}_S$ and $\Lambda^{\max}_S$ the lowest and largest eigenvalue of $\Sigma_S$ respectively.
\begin{assumption} 
	\label{ass:ass1}
	[RIP] 
	For all $S \subseteq [d]$ such that $|S| = s^{*}$, it holds
	$	0<\rho\le \Lambda^{\min}_S, \Lambda^{\max}_S \le L.$	
\end{assumption} 
We also make the following assumption:
\begin{assumption}
	\label{ass:ass2}
	Assume that  $ |y|<1$ and $\|x\|_{\infty} < M$ (a.s.).
	
\end{assumption}

\subsection{Instantiation of \optimproc and \tryselectproc}

Recall that one call of the procedure \selectproc results in successive calls of \optimproc and \tryselectproc until (at least) a feature is selected. Moreover, the quantities queried in a sub-routine call (either \tryselectproc or \optimproc) are independent from quantities queried during the execution of previous functions.

\paragraph{Optimization procedure:}

We opted for the averaged stochastic gradient descent (Algorithm~\ref{algo:optimsagd}). High probability bounds on the output of this procedure were given in \cite{harvey2019simple}. We use this finding to build an optimization procedure satisfying the \textit{optimization confidence property}~\eqref{eq:condopt} for an input $(S, \delta, \xi)$.

\begin{proposition}\label{prop:optim_prop}
	Let Assumptions~\ref{ass:ass4},\ref{ass:ass3}, \ref{ass:ass1} and~\ref{ass:ass2} hold. Then Algorithm~\ref{alg:optim} satisfies the \textit{optimization confidence property}.
\end{proposition}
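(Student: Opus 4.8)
The plan is to recast the optimization carried out by \optimproc as a strongly convex stochastic program and to invoke the high‑probability guarantee of \cite{harvey2019simple} for averaged SGD. Identifying vectors supported on $S$ with elements of $\R^{|S|}$, the restricted population risk is $\tilde{\mathcal R}(\theta) := \mbe\brac{(y - \inner{x_S,\theta})^2}$; by Assumption~\ref{ass:ass4} ($\mbe[x]=0$) it has gradient $\nabla\tilde{\mathcal R}(\theta) = -2\,(\text{Cov}(x_S,y) - \Sigma_S\,\theta)$ and constant Hessian $2\Sigma_S$, and its unique minimizer is $\beta^S$ (viewed in $\R^{|S|}$). Whenever \optimproc is called inside Algorithm~\ref{algo:oomp} we have $|S|<s^*$, so choosing any $S'\supseteq S$ with $|S'|=s^*$ and combining Cauchy's interlacing theorem for principal submatrices with Assumption~\ref{ass:ass1} yields $\Lambda^{\min}_S\ge\Lambda^{\min}_{S'}\ge\rho$ and $\Lambda^{\max}_S\le\Lambda^{\max}_{S'}\le L$; hence $\tilde{\mathcal R}$ is $2\rho$-strongly convex.

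Next I would bound the data-dependent quantities entering the SGD analysis. By Assumption~\ref{ass:ass2} and $\mbe[x]=0$, every entry of $\text{Cov}(x_S,y)$ equals some $\mbe[x_{i:S}\,y]$, of absolute value at most $M$, so $\norm{\text{Cov}(x_S,y)}_2\le M\sqrt{|S|}$ and therefore $\norm{\beta^S}_2 \le \norm{\Sigma_S^{-1}}\,M\sqrt{|S|}\le M\sqrt{|S|}/\rho$. Consequently the Euclidean ball of radius $R:=M\sqrt{s^*}/\rho$ --- which I take to be the projection set in Algorithm~\ref{alg:optim} --- contains $\beta^S$, so the constrained and unconstrained minimizers coincide, and on this ball the instantaneous gradient $g(\theta;x,y) = -2\,x_S(y-\inner{x_S,\theta})$ satisfies, almost surely, $\norm{g(\theta;x,y)}_2 \le 2M\sqrt{|S|}\,(1+M\sqrt{|S|}\,R) =: G$, a quantity depending only on $M,\rho,s^*$. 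Moreover, by the data-access conventions described above, conditional on the ($\mathcal F_N$-measurable) input $(S,\delta,\xi)$ the points queried by \optimproc are i.i.d., so these stochastic gradients form an unbiased, a.s.\ bounded sequence of estimates of $\nabla\tilde{\mathcal R}$.

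I would then apply the high-probability bound of \cite{harvey2019simple}: projected averaged SGD on a $2\rho$-strongly convex objective, with a.s.\ gradient bound $G$, run for $T$ steps with the step-size and averaging schedule of Algorithm~\ref{alg:optim}, returns an output $\tilde{\beta}^S$ such that, with probability at least $1-\delta$,
\begin{equation*}
\mathcal R(\tilde{\beta}^S) - \mathcal R(\beta^S) = \tilde{\mathcal R}(\tilde{\beta}^S) - \tilde{\mathcal R}(\beta^S) \le \frac{C\,G^2\log(1/\delta)}{\rho\,T}
\end{equation*}
for an absolute constant $C$. It then remains only to check that the number of iterations $T$ that Algorithm~\ref{alg:optim} performs --- a function of the input $(S,\delta,\xi)$ and of the globals $\rho,M,s^*$ --- is chosen so that the right-hand side is at most $\xi$, i.e.\ $T\ge C\,G^2\log(1/\delta)/(\rho\,\xi)$; this gives exactly~\eqref{eq:condopt}.

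The main obstacle is the very first point: to work with an \emph{almost sure} gradient bound (which is what the clean high-probability analysis needs) one must ensure the iterates never leave a region where $|y-\inner{x_S,\theta}|$ is controlled, which is why the explicit projection radius $R$ together with the a priori estimate $\norm{\beta^S}_2\le M\sqrt{|S|}/\rho$ are essential rather than cosmetic. Once this is in place, the $\log(1/\delta)$ (as opposed to $\delta^{-1}$) confidence --- the whole reason for using ASGD with the analysis of \cite{harvey2019simple} instead of a crude Markov-inequality bound on the expected suboptimality --- comes for free, and the rest (tracking the constants in $G$ and $T$, and the interlacing step that transfers the RIP constant from sets of size exactly $s^*$ to the set $S$ at hand) is routine.
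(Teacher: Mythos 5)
Your overall strategy coincides with the paper's: use RIP to make the restricted risk strongly convex, obtain an almost-sure bound on the stochastic gradients over a projection ball that contains $\beta^S$, invoke the high-probability guarantee of \cite{harvey2019simple} for projected averaged SGD to get an excess-risk bound of the form $C G^2\log(1/\delta)/(\rho T)$, and check that the prescribed $T$ makes this at most $\xi$. The one substantive divergence is the a priori bound on $\norm{\beta^S}_2$, and it creates a mismatch with the algorithm as actually specified. Algorithm~\ref{alg:optim} projects onto $\mathcal{B}_{|S|}\left(0,\tfrac{2}{\sqrt{\rho}}\right)$ and computes $T$ from a gradient constant $G$ derived for that radius; the paper justifies this via Lemma~\ref{lem:tech}, which shows $\norm{\beta^S}_2\le \tfrac{2}{\sqrt{\rho}}$ by comparing risks ($\mathcal{R}(\beta^S)\le\mbe[y^2]\le 1$, hence $\rho\norm{\beta^{S^*}-\beta^S}_2^2\le 1$, plus the triangle inequality). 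You instead bound $\norm{\beta^S}_2=\norm{\Sigma_S^{-1}\mathrm{Cov}(x_S,y)}_2\le M\sqrt{|S|}/\rho$ and then \emph{redefine} the projection set to have radius $M\sqrt{s^*}/\rho$. Your inequality is correct, but it does not establish $\beta^S\in\mathcal{B}\left(0,\tfrac{2}{\sqrt{\rho}}\right)$ (for large $M$ or $|S|$ your radius exceeds the algorithm's), so as written you prove the optimization confidence property for a modified algorithm with a different projection set, a different $G$, and a different $T$ --- not for Algorithm~\ref{alg:optim} as stated. The fix is precisely the paper's risk-comparison bound; once substituted, the remainder of your argument (eigenvalue interlacing to transfer the RIP constants from sets of size $s^*$ to $S$, the almost-sure gradient bound on the ball, the $\log(1/\delta)$ confidence from the ASGD analysis, and the choice of $T$) matches the paper's proof.
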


\begin{algorithm}[tb]
	\caption{\optimproc($S$, $\delta$, $\xi$)\label{alg:optim} \label{algo:optimsagd}}
	\begin{algorithmic}
		\STATE {\bfseries Input:} initial  $\beta_0$, $\delta$, $\xi$
		\STATE Let $\tilde{\beta}_0 = \beta_0$, $\mathcal{X} = \mathcal{B}_{|S|}(0,\frac{2}{\sqrt{\rho}})$
		\STATE $G \gets 10 \left|S\right| \frac{M^2}{\sqrt{\rho}} + 2 \sqrt{\left|S\right|} M$
		\STATE Let $T \gets 21G^2 \log\left(1/\delta\right)/\left( \rho \xi\right)$
		\FOR{$t\gets 0,...,T-1$}
		\STATE $\eta_t \gets \frac{2}{\rho (t+1)}$, $\nu_t \gets \frac{2}{t+1}$
		\STATE $(X,Y) \gets \textbf{query-new}(S\cup \{d+1\})$
		\STATE $\gamma_{t+1} \gets \beta_{t} - 2\eta_t (X^{t}\beta_{t}-Y)X$
		\STATE $\beta_{t+1} \gets \Pi_{\mathcal{X}}(\gamma_{t+1})$ 
		\STATE //\text{where $\Pi_{\mathcal{X}}$ is the projection operator on $\mathcal{X}$}
		\STATE $\tilde{\beta}_{t+1} \gets (1-\nu_{t})\tilde{\beta}_{t}+\nu_{t} \beta_{t+1}$
		\ENDFOR
		\STATE $\textbf{return   } \tilde{\beta}_T$
	\end{algorithmic}
\end{algorithm}

\paragraph{\tryselectproc Strategy:} 

Different approximate best arm identification strategies were developed in the literature. In this work, we opt for a LUCB-type strategy were we use some ideas from \cite{mason2020finding}. 
We approximate $Z_i^{S}$ by (i) replacing $\beta^S$ by an approximation $\tilde{\beta}^S$ assumed to satisfy the condition
$\mathcal{R}\left(\tilde{\beta}^S\right) - \mathcal{R}\left( \beta^{S}\right) \leq \xi $; (ii) replacing the expectation by an empirical counterpart using
queried quantities. Given an i.i.d sequence {\bf $(X_h,Y_h), h \ge 1$}, we define   $\tilde{Z}_{i,n}^{S}(\tilde{\beta}^S)$ and $\tilde{V}_{i,n}(\tilde{\beta}^S)$ for $n \ge 2$, using {\bf $(X_h,Y_h), 1 \leq h \leq n$} written in matrix and vector
form as $\bm{X} \in \mathbb{R}^{n \times d}$, $\bm{Y} \in \mathbb{R}^n$ by:
\begin{align*}
\tilde{Z}_{i,n}^{S}(\tilde{\beta}^S) &:= \frac{1}{n} \bm{X}_{.i}^{t}(\bm{X}\tilde{\beta}^S-\bm{Y}),
i=1,\ldots,d;\\
\begin{split}
\tilde{V}_{i,n}(\tilde{\beta}^S) &:= \frac{1}{n(n-1)}\\ 
&\sum_{1\le h,l \le n} \left( \bm{X}_{i,h}(\bm{X}\tilde{\beta}^S -\bm{Y})_h - \bm{X}_{i,l}(\bm{X}\tilde{\beta}^S -\bm{Y})_l  \right)^{2};
\end{split}\\
\tilde{V}_{i,n}^+(\tilde{\beta}^S) &:= \max\left\lbrace \tilde{V}_{i,n}(\tilde{\beta}^S); \frac{1}{1000}\frac{LM^2}{\rho} \right \rbrace.
\end{align*}

Note that $\tilde{V}_{i,n}(\tilde{\beta}^S)^+$ represents a thresholded version of the empirical variance $\tilde{V}_{i,n}(\tilde{\beta}^S)$.
Proposition~\ref{prop:conczi} gives a concentration inequality for $\tilde{Z}_{i,n}^{S}$, using empirical Bernstein bounds \cite{maurer2009empirical}. 	
For $i \in [d]\setminus S, n\ge 2$ and $\delta \in (0,1)$, define $\tilde B(\tilde{\beta}^S) := M^2 \|\tilde{\beta}^S\|_1+ M$ and:
\begin{equation}
\label{eq:defconf}
\mathrm{conf}(i,n,\delta) := \sqrt{\frac{8\tilde{V}_{i,n}^+(\tilde{\beta}^S)\log(8dn^2/\delta)}{n}} 
+ \frac{28 \tilde B(\tilde{\beta}^S)\log(8dn^2/\delta)}{3(n-1)}.
\end{equation}

\begin{proposition}
	\label{prop:conczi}
	Consider a fixed subset $S \subseteq S^{*}$
	and put $k:=|S|$. Suppose Assumptions~\ref{ass:ass4}, \ref{ass:ass3}, \ref{ass:ass1} and \ref{ass:ass2} hold. 
	Assume to be given a fixed $\tilde{\beta}^S \in \mathbb{R}^d$ with support $S$, satisfying~$\mathcal{R}\left(\tilde{\beta}^S\right) - \mathcal{R}\left( \beta^{S}\right) \leq \xi  $. For all $\delta \in (0,1)$, with probability at least $1-\delta$ it holds:
	\begin{equation}\label{eq:conczi}
	\text{for all } i\in [d] \setminus S, \text{ and }  n\ge 2:
	\quad |\tilde{Z}_{i,n}^S(\tilde{\beta}^S)-Z_i^S| \le
	\frac{1}{2} \mathrm{conf}(i,n,\delta) +M\sqrt{\xi}.
	\end{equation}
\end{proposition}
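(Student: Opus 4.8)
The plan is to decompose the error $\tilde{Z}_{i,n}^S(\tilde{\beta}^S)-Z_i^S$ into two pieces: a \emph{statistical} error coming from replacing the expectation by an empirical average, and an \emph{optimization} error coming from using $\tilde{\beta}^S$ in place of $\beta^S$. Concretely, write
\begin{equation*}
\tilde{Z}_{i,n}^S(\tilde{\beta}^S)-Z_i^S = \underbrace{\paren[2]{\tilde{Z}_{i,n}^S(\tilde{\beta}^S) - \e{x_i(\inner{x,\tilde\beta^S}-y)}}}_{\text{statistical}} + \underbrace{\paren[2]{\e{x_i(\inner{x,\tilde\beta^S}-y)} - Z_i^S}}_{\text{optimization}},
\end{equation*}
using that $Z_i^S = -\e{x_i(\inner{x,\beta^S}-y)}$. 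The statistical term is handled by the empirical Bernstein inequality of \cite{maurer2009empirical} applied to the i.i.d.\ random variables $\bm{X}_{i,h}(\bm{X}\tilde\beta^S-\bm{Y})_h$: these are bounded in absolute value by $\tilde B(\tilde\beta^S) = M^2\norm{\tilde\beta^S}_1 + M$ (since $\abs{x_i}<M$, $\abs{\inner{x,\tilde\beta^S}}\le M\norm{\tilde\beta^S}_1$ by Hölder, and $\abs{y}<1\le$ \dots, adjusting constants), and their empirical variance is exactly $\tilde V_{i,n}(\tilde\beta^S)$ up to the standard $U$-statistic representation of sample variance. Empirical Bernstein then gives, for fixed $i$ and $n$, a deviation bound of the shape $\sqrt{2\tilde V_{i,n}\log(\cdot)/n} + \tfrac{7\tilde B\log(\cdot)/3}{n-1}$; replacing $\tilde V_{i,n}$ by the larger $\tilde V_{i,n}^+$ only weakens it, and the factor-$\tfrac12$ and the constants $8, 28$ in~\eqref{eq:defconf} are chosen to absorb a union bound over all $i\in[d]\setminus S$ and all $n\ge 2$ (the $8dn^2$ inside the log, summed via $\sum_n 1/n^2 \le 2$, yields total failure probability $\le \delta$).

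For the optimization term, I would bound it by $M\sqrt{\xi}$ as follows. We have
\begin{equation*}
\e{x_i(\inner{x,\tilde\beta^S}-y)} - Z_i^S = \e{x_i\inner{x,\tilde\beta^S-\beta^S}} = \inner{\Sigma_{\cdot i},\,\tilde\beta^S-\beta^S},
\end{equation*}
so by Cauchy--Schwarz in the $\Sigma$-inner product (or directly, writing it as $\e{x_i\cdot\inner{x,\tilde\beta^S-\beta^S}}$ and applying Cauchy--Schwarz to the pair $x_i$, $\inner{x,\tilde\beta^S-\beta^S}$), this is at most $\sqrt{\e{x_i^2}}\,\sqrt{\e{\inner{x,\tilde\beta^S-\beta^S}^2}} = \sqrt{\Sigma_{ii}}\,\norm{\tilde\beta^S-\beta^S}_\Sigma$. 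Now the key identity for the least-squares risk: since $\beta^S$ minimizes $\mathcal R$ over vectors supported on $S$ and $\tilde\beta^S$ is also supported on $S$, the first-order optimality of $\beta^S$ gives $\mathcal R(\tilde\beta^S) - \mathcal R(\beta^S) = \norm{\tilde\beta^S - \beta^S}_\Sigma^2$ (the cross term vanishes because the gradient of $\mathcal R$ at $\beta^S$ restricted to coordinates in $S$ is zero). Hence $\norm{\tilde\beta^S-\beta^S}_\Sigma \le \sqrt{\xi}$ by hypothesis, and $\sqrt{\Sigma_{ii}}\le M$ since $\Sigma_{ii} = \e{x_i^2} \le M^2$ under Assumption~\ref{ass:ass2}. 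This gives the $M\sqrt\xi$ term exactly, and notably this part is \emph{deterministic} — it holds on the whole probability space — so it does not consume any of the $\delta$ budget.

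Combining: on the event of probability $\ge 1-\delta$ where the empirical Bernstein bound holds simultaneously for all $i$ and $n$, the statistical term is at most $\tfrac12\mathrm{conf}(i,n,\delta)$ and the optimization term is always at most $M\sqrt\xi$, giving~\eqref{eq:conczi}. The main obstacle I anticipate is bookkeeping the constants and the union bound carefully: matching the precise numerical constants ($8$ and $28/3$ in $\mathrm{conf}$, the threshold $\tfrac{1}{1000}\tfrac{LM^2}{\rho}$ in $\tilde V^+$) to a clean off-the-shelf statement of empirical Bernstein, and verifying that $\sum_{n\ge 2}\delta/(8dn^2) \le \delta$ after also unioning over the $d-\abs S$ indices — this requires the $n^2$ (not $n$) in the logarithm, which is why $\mathrm{conf}$ carries $\log(8dn^2/\delta)$. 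A secondary subtlety is confirming the $U$-statistic formula $\frac{1}{n(n-1)}\sum_{h,l}(a_h-a_l)^2 = \frac{2}{n-1}\sum_h(a_h-\bar a)^2 = 2\widehat{\mathrm{Var}}_n(a)$ so that $\tilde V_{i,n}$ really is (twice) the unbiased sample variance feeding into the Bernstein bound, and tracking the resulting factor of $2$ through to the $\sqrt{8\cdot}$ versus $\sqrt{2\cdot}$ in the two displays. The threshold on $\tilde V^+$ is presumably there so that later, in \tryselectproc, the confidence width is never degenerately small; here it only makes the bound larger, so it is harmless for this proposition.
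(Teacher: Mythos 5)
Your proposal is correct and follows essentially the same route as the paper's proof: the same decomposition into a statistical term (handled by the empirical Bernstein inequality of Maurer--Pontil with the bound $\tilde B(\tilde\beta^S)$ and a union bound over $i$ and $n$ via the $\log(8dn^2/\delta)$ factor) plus a deterministic optimization term bounded by $M\sqrt{\xi}$ through the Pythagorean identity $\mathbb{E}\bigl[\bigl(x^t(\tilde\beta^S-\beta^S)\bigr)^2\bigr]=\mathcal{R}(\tilde\beta^S)-\mathcal{R}(\beta^S)$. The only cosmetic difference is that the paper reaches the factor $M$ by using $|x_i|\le M$ almost surely before applying Jensen, whereas you apply Cauchy--Schwarz to the pair $x_i$, $\langle x,\tilde\beta^S-\beta^S\rangle$ and bound $\sqrt{\Sigma_{ii}}\le M$; both yield the identical estimate.
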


Proposition~\ref{prop:conczi} entails the following: conditionally to $S \subseteq S^{*}$, for all $\delta \in (0,1)$, with probability at least $1-\delta$: for all $ i \in [d]\setminus S, n \ge 2$, the condition
$2M \sqrt{\xi}  < \mathrm{conf}(i,n,\delta)$ implies
\begin{equation}\label{eq:prac}
|\tilde{Z}_{i,n}^{S}-Z_i^{S}| \le \mathrm{conf}(i,n,\delta).
\end{equation}
Provided inequality~\eqref{eq:prac} holds true, and let $\hat{i} \in \text{argmax} \{ | \tilde{Z}^S_{i,n}| + \mathrm{ conf}(i,n,\delta)\}$, then, if $j \in [d]\setminus S$ satisfies the following condition:
\begin{equation}\label{eq:cond}
|\tilde{Z}_{j,n}^S| - \mathrm{conf}(j,n,\delta) \ge \mu \left(|\tilde{Z}_{\hat{i},n}^S|+ \mathrm{conf}(\hat{i},n,\delta) \right) ,
\end{equation}
then it holds that $\abs[1]{Z_{j}^S} > \mu \max_{i \in S^*} \abs{Z_{i}^S}$ (see Lemma~\ref{lem:selprop} for a proof). Thus, in view of Proposition~\ref{prop:conczi}, under the above conditions, an algorithm selecting features $j$ satisfying~\eqref{eq:cond} satisfies the {\em selection property}.

Using this observation, we build Algorithm~\ref{algo:select_mc} as follows:
the procedure repeatedly queries fresh data points $(x,y)$ and updates the quantities $\tilde{Z}_{i,n}^S$ simultaneously for all $i\in [d] \setminus S$. After each iteration, we pick $\hat{i} \in \text{argmax} \{ | \tilde{Z}^S_{i,n}| + \text{ conf}(i,n,\delta)\}$ and we eliminate features
for $j$ which we are certain that $j \not\in \text{argmax}_i | Z^S_{i}|$ (i.e suboptimal features) with high probability through the test:
\begin{equation*}
\left| \tilde{Z}^S_{j,n}\right|  + \text{ conf}(j,n,\delta) < \left| \tilde{Z}^S_{\hat{i},n}\right|  - \text{ conf}(\hat{i},n,\delta).
\end{equation*}
Moreover, we select features satisfying the condition~\eqref{eq:cond}. The procedure halts when the condition:
\begin{equation*}
\left| \tilde{Z}^S_{\hat{i},n}\right| \le \frac{2}{1- \mu} \text{ conf}(\hat{i},n,\delta)
\end{equation*}
is no longer satisfied. The algorithm then returns the set of selected features $U$. Lemma~\ref{lem:tau} shows that unless the support $S^*$ is completely recovered, $U \neq \emptyset$ and the procedure halts in finite time almost surely. 
A concise version of \tryselectproc is given in Algorithm~\ref{algo:select_mc} (the detailed version is in Algorithm~\ref{algo:select_mc_full}).


\begin{algorithm}[H]
	\caption{\tryselectproc($S$, $\delta$, $\tilde{\beta}$, $\xi$) 
 \label{algo:select_mc}}
	\begin{algorithmic}
		\STATE {\bfseries Input:} $S$, $\delta$, $\tilde{\beta}$, $\xi$ \quad \COMMENT{$\tilde{\beta}$ is of dim. $|S|$}
		
		\STATE {\bfseries Output:} $S$, $\text{Success}$
		\STATE Let $v,Z,\text{conf}$ be $d$-arrays \\
		\hspace{2mm} \COMMENT{will store $\tilde{V}_{i,n},\tilde{Z}_{i,n}^S$ and $\text{conf}(i,n)$}
		\STATE $n \gets 0$, $Z \gets \bm{0}$, $v \gets \bm{0}, U \gets \emptyset, L \gets [d+1] \setminus S$
		\WHILE{True}
		\STATE $n \gets n+1$
		\STATE $(X,Y) \gets \textbf{query-new}(L)$
		\FORALL{$i \in \{1,\ldots d\}$}
		\STATE $Z[i] \gets \frac{1}{n} X_i (Y-X_S^{t}\tilde{\beta}) + \frac{n-1}{n} Z[i]$
		\STATE Update $v[i]$
		\STATE $\text{conf}[i] \gets \text{conf}(i,n)$
		\ENDFOR
		\IF {$2M \sqrt{\xi} > \min_i \text{conf}[i]$}
		\STATE $\text{Success} \gets \text{False}$, \textbf{break}
		\ENDIF
		\STATE $\hat{i} \gets \underset{i \in [d]\setminus S}{\text{argmax}} \{|Z[i]|+\text{conf}[i]\} $
		\FORALL{$i \in L \setminus \{d+1\}$}
		\IF {$|Z[i]| + \text{ conf}[i]\le   \left|Z[\hat{i}] \right| -   \text{ conf}[\hat{i}]$}
		\STATE $L \gets L \setminus \{i\}$
		\ENDIF
		\IF {$|Z[i]|- \text{ conf}[i] \ge  \mu \left( \left| Z[\hat{i}]\right| + \text{ conf}[\hat{i}]\right) $}
		\STATE $U \gets U \cup \{i\}$
		\ENDIF
		\ENDFOR
		\IF {$|Z[\hat{i}]|> \frac{2}{1-\mu} \text{ conf}[\hat{i}]$}
		\STATE $\text{Success} \gets \text{True}$, \textbf{break}
		\ENDIF
		\ENDWHILE
		\STATE $\textbf{return  } U, \text{Success}$
	\end{algorithmic}
\end{algorithm}
\hfill

\section{Theoretical Guarantees and Computational Complexity Analysis}

\label{se:complexitytheory}

Consider one call of $\selectproc(S, \delta, 1)$, for a fixed $S \subseteq S^{*}$. Lemma~\ref{lem:tau} below shows that, unless the support of $S^{*}$ is totally recovered, the procedure $\selectproc(S, \delta, 1)$ halts in finite time and updates $S$ with a non-empty set of features.

\begin{lemma}\label{lem:tau}
	Suppose Assumptions~\ref{ass:ass4},\ref{ass:ass3},\ref{ass:ass1} and~\ref{ass:ass2} hold. Consider one call of  $\selectproc(S,\delta,1)$ where \tryselectproc is given by Algorithm~\ref{algo:select_mc}, and \optimproc is given by Algorithm~\ref{alg:optim}. Denote by $\tau$ the stopping time where $\selectproc(S,\delta,1)$ updates $S$ with the set of selected features $U$   (i.e the subroutine \tryselectproc returns $U$ and $\tt Success = True$), then :
	
	If $S\subsetneq S^{*}$: $\mathbb{P}(\tau <+\infty \text{ and } U \neq \emptyset) = 1$.
	
	If $S = S^{*}$: $\mathbb{P}(\tau = +\infty) \ge 1-2\delta$.
\end{lemma}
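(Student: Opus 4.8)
The plan is to reduce the claim to a statement about the depth of the doubling recursion inside \selectproc, and then analyse the two cases separately. First I would record two preliminary observations. \emph{(a) Both subroutines terminate.} \optimproc (Algorithm~\ref{alg:optim}) is a \texttt{for} loop of deterministic finite length $T$, hence terminates. For \tryselectproc, Assumption~\ref{ass:ass2} together with the fact that the iterates of \optimproc are projected onto $\mathcal{B}_{|S|}(0,2/\sqrt{\rho})$ makes $\tilde V^{+}_{i,n}$ and $\tilde B(\tilde\beta^S)$ bounded by deterministic constants, so $\mathrm{conf}(i,n,\delta)\le \bar g(n,\delta)$ for a deterministic function with $\bar g(n,\delta)\downarrow 0$; in the data-stream setting this forces the test $2M\sqrt{\xi}>\min_i\mathrm{conf}[i]$ to fire in finitely many steps, while in the database setting, if Algorithm~\ref{algo:select_db} ran forever then some feature would be updated infinitely often, its confidence would eventually drop below $2M\sqrt{\xi}$, and the failure test would fire the next time it is the argmax — a contradiction. \emph{(b) Reduction.} By (a), $\{\tau=+\infty\}$ is exactly the event that the recursion never stops; write $\selectproc(S,\delta_k,\xi_k)$ for the $k$-th call, so $\delta_k=\delta/2^{k}$ and $\xi_k=4^{-k}$. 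Let $G_k$ be the event that the $k$-th call is reached, that \optimproc returns $\tilde\beta^S$ with $\mathcal R(\tilde\beta^S)-\mathcal R(\beta^S)\le\xi_k$, and that the concentration bound~\eqref{eq:conczi} holds with parameter $\delta_k$; by Propositions~\ref{prop:optim_prop} and~\ref{prop:conczi}, and using that the fresh queries made by each subroutine are i.i.d.\ conditionally on its input, $\mathbb{P}(G_k^{c}\mid\text{$k$-th call reached})\le 2\delta_k$.

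For the case $S\subsetneq S^{*}$ I would first check $\zeta:=\max_{i\in S^{*}\setminus S}|Z_i^{S}|>0$: were it zero, Lemma~\ref{lem:ineqzi} would give $Z_i^{S}=0$ for all $i$, i.e.\ $\Sigma(\beta^{S}-\beta^{*})=0$; since $\beta^{S}-\beta^{*}$ is supported on $S^{*}$ with $|S^{*}|=s^{*}$, Assumption~\ref{ass:ass1} forces $\beta^{S}=\beta^{*}$, contradicting $\mathrm{supp}(\beta^{S})\subseteq S\subsetneq S^{*}=\mathrm{supp}(\beta^{*})$. The key step is then: \emph{there is a deterministic $K$ such that for all $k\ge K$, on $G_k$ the $k$-th call to \tryselectproc returns} \texttt{Success=True}. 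On $G_k$, before the failure test fires one has $\mathrm{conf}(i,n,\delta_k)\ge 2M\sqrt{\xi_k}$ (for all candidates in the data-stream setting, for the current argmax in the database setting), so the clean bound~\eqref{eq:prac} applies; comparing the UCB index of the maximiser $i^{*}$ with that of a feature achieving $\zeta$ yields $|\tilde Z^{S}_{i^{*},n}|\ge \zeta-\mathrm{conf}(i^{*},n,\delta_k)$, up to an extra additive $O(M\sqrt{\xi_k})$ in the database variant. On the other hand $\mathrm{conf}\ge\underline g(n,\delta):=\sqrt{8c_0\log(8dn^{2}/\delta)/n}$ with $c_0=\tfrac{1}{1000}LM^{2}/\rho$, so the failure test cannot fire before $n$ is so large that $\bar g(n,\delta_k)\le C M\sqrt{\xi_k}$ for a deterministic $C$; hence by the time the failure test could fire, the algorithm has reached a step with $\mathrm{conf}(i^{*},n,\delta_k)\le C M\sqrt{\xi_k}$ while the clean bound still holds, and once $k$ is large enough that $C M\sqrt{\xi_k}<\tfrac{1-\mu_{S^{*}}}{2}\zeta$ the selection test~\eqref{eq:cond} has already fired. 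Granting this, for any $m\ge K$ we have $\{\tau=+\infty\}\subseteq\{\text{$m$-th call reached}\}\cap G_m^{c}$, so $\mathbb{P}(\tau=+\infty)\le 2\delta_m\to 0$, i.e.\ $\mathbb{P}(\tau<+\infty)=1$.

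For the case $S=S^{*}$, under Assumption~\ref{ass:ass4} one has $\mathcal R(\beta)=\|\beta-\beta^{*}\|_{\Sigma}^{2}+\mathbb{E}[\epsilon^{2}]$, so $\beta^{S^{*}}=\beta^{*}$ and therefore $Z_i^{S^{*}}=\mathbb{E}[x_i\epsilon]=0$ for every $i$. On $G_k$: whenever the selection test~\eqref{eq:cond} is examined, the failure test has just been passed, so $\mathrm{conf}(i^{*},n,\delta_k)\ge 2M\sqrt{\xi_k}$, i.e.\ $M\sqrt{\xi_k}\le\tfrac12\mathrm{conf}(i^{*},n,\delta_k)$; combining this with~\eqref{eq:conczi} and $Z^{S^{*}}_{i^{*}}=0$ gives $|\tilde Z^{S}_{i^{*},n}|\le\mathrm{conf}(i^{*},n,\delta_k)\le\tfrac{1+\mu_{S^{*}}}{1-\mu_{S^{*}}}\mathrm{conf}(i^{*},n,\delta_k)$ (using $0\le\mu_{S^{*}}<1$), so~\eqref{eq:cond} never holds and the $k$-th \tryselectproc does not select. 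Hence on $\bigcap_{k}G_k$ the recursion never stops, and a union bound over the recursion levels, the exceptional probabilities being summable ($\sum_k\delta_k=2\delta$), gives $\mathbb{P}(\tau=+\infty)\ge 1-2\delta$.

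The routine parts are the a.s.\ termination of the subroutines and the $S=S^{*}$ case; the technical heart is the "key step" of the $S\subsetneq S^{*}$ case — showing that on the favourable event $G_k$ the selection test is triggered \emph{before} the failure test $2M\sqrt{\xi_k}>\mathrm{conf}[i^{*}]$. This is immediate in the data-stream setting (all features share the same $n$), but in the database setting it requires a careful treatment of the asynchronous UCB updates, in particular using that consecutive values of $\mathrm{conf}(i,\cdot,\delta)$ have bounded ratio, to rule out the argmax's confidence jumping directly from above $\tfrac{1-\mu_{S^{*}}}{2}\zeta$ to below $2M\sqrt{\xi_k}$.
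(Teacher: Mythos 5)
Your proof is correct in outline, but it takes a genuinely different route from the paper for the main case $S\subsetneq S^*$. The paper argues by contradiction: assuming \tryselectproc returns {\tt Success=False} at every recursion level $p$, it extracts from the two failure inequalities that $n_p\gtrsim 2^p$ and that $|\tilde Z^S_{i^*,n_p-1}|\to 0$ on an event of positive probability, and then invokes a martingale variance bound along the stopping times $n_p$ (Lemma~\ref{lem:martlemma}) to conclude that $\tilde Z^S_{i^*,n_p}\to Z^S_{i^*}$ in probability, forcing $Z^S_{i^*}=0$ and contradicting Claim~\ref{lem:evid}. Your argument is instead quantitative: you exhibit a deterministic recursion depth $K$ (depending on $\zeta=\max_i|Z_i^S|$) beyond which, on the good event $G_k$, the selection test must fire before the failure test, so that $\prob{\tau=\infty}\le \prob{G_m^c}\to 0$. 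This buys more --- it is essentially a weaker form of the recursion-depth bound (Lemma~\ref{lem:boundp}) that the paper only establishes later for Theorem~\ref{th:main} --- at the price of the ``bounded ratio of consecutive confidence values'' argument needed to rule out $\mathrm{conf}$ jumping from above $\tfrac{1-\mu_{S^*}}{2}\zeta$ to below $2M\sqrt{\xi_k}$ in one update; you correctly identify this, and the asynchronous database setting, as the technical heart (note the paper's contradiction proof must also split the database case according to whether $i^*$ is updated infinitely often, so neither route avoids that difficulty). Your preliminary step (a) and the $S=S^*$ case match the paper's treatment (the latter is a repetition of the union-bound argument of Lemma~\ref{global-lem}). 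One small bookkeeping remark: with $\prob{G_k^c}\le 2\delta_k$ and $\sum_k\delta_k=2\delta$ your union bound yields $4\delta$ rather than the stated $2\delta$; the paper's own accounting for a call with initial confidence parameter $\delta$ has the same looseness, so this is a shared constant-factor issue rather than a gap in your argument.
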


%
%

Let  $S \subsetneq S^{*}$ be a fixed subset and denote $k:=|S|$. Recall that running $\selectproc(S,\delta,1)$ results in executing \optimproc~and \tryselectproc~alternatively (see Algorithm~\ref{algo:step}). Let us denote by $C_{\optimproc}^S$ the cumulative computational complexity of \optimproc when running $\selectproc(S,\delta,1)$ and by $C_{\tryselectproc}^S$ the cumulative computational complexity of \tryselectproc when running $\selectproc(S,\delta,1)$.

\begin{theorem}\label{th:main}
	Suppose Assumptions~\ref{ass:ass4}, \ref{ass:ass3}, \ref{ass:ass1} and ~\ref{ass:ass2} hold. Consider the procedure \selectproc given by Algorithm~\ref{algo:step}, \tryselectproc given by Algorithm~\ref{algo:select_mc}, and \optimproc as in Algorithm~\ref{alg:optim}. Assume that $S \subsetneq S^*$ and denote $k:=|S|$. Then $\selectproc(S, \delta, 1)$ selects a non-empty set of additional features $ U$ such that: 
	\begin{equation*}
	\mathbb{P} \left( U \subset S^*\right) \ge 1-2\delta.
	\end{equation*} 
	Moreover, the computational complexity of $\selectproc(S, \delta, 1)$ subroutines \optimproc and \tryselectproc satisfy  with probability at least $1-\delta$:	
	\begin{align*}
	C_{\optimproc}^S &\le \kappa k^3 \max\left\lbrace \frac{1}{W_{i^*}^2} , \frac{\sqrt{k}}{W_{i^*}} \right\rbrace \log \left( \frac{\bar{k}}{ \delta W_{i^*}}\right); \\
	C_{\tryselectproc}^S  & \le \kappa \sum_{i \in [d]\setminus S} \max\left\lbrace \frac{1}{W_{i}^2} ; \frac{\sqrt{\bar{k}}}{W_{i}} \right \rbrace
	 \log \left( \frac{d}{ \delta W_{i^*}}\right)\log \left( \frac{\bar{k}}{ W_{i^*}}\right);\\
	\end{align*}
	where $i^* \in \mathop{\mathrm{argmax}}_{i \in S^*\setminus S} \abs{Z^S_i}$;
	$W_i := \max((1-\mu)\abs{Z_i^S},\abs{Z_{i^*}^S} - \abs{Z_i^S})$;
	$\bar{k} = \max\left\lbrace 1, k\right \rbrace$ and $\kappa$ is a constant depending only on $\rho, L$ and $M$.	
\end{theorem}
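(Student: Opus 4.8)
The plan is to combine the two per-subroutine guarantees (optimization confidence property~\eqref{eq:condopt} and selection property~\eqref{eq:selcond}) with the doubling-trick structure of \selectproc (Algorithm~\ref{algo:step}) to control (i) correctness of the selected feature, and (ii) the total work done across all recursive calls, by summing a geometric series in the parameter $\xi$. I will first establish correctness: by Lemma~\ref{global-lem} (or rather its single-step analogue obtained from~\eqref{eq:condopt} and~\eqref{eq:selcond} with the union over the doubling levels $\delta, \delta/2, \delta/4,\dots$), the event ``every call of \optimproc meets its precision'' and the event ``\tryselectproc never returns {\tt Success=True} with $\mu_{S^*}\max_{j\in S^*\setminus S}|Z_j^S| \ge |Z_{i^*}^S|$'' both hold up to total failure probability $\sum_j \delta/2^j = 2\cdot(\delta/2) = \delta$ for the selection part, and similarly for the optimization part; on this event the output $\hat i$ lies in $S^*\setminus S$ by Lemma~\ref{lem:ineqzi}. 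Combining the two gives $\mathbb P(\hat i\in S^*)\ge 1-2\delta$.

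Next, the complexity bound. The key quantitative fact is a \emph{stopping-level} estimate: there is a threshold $\xi_0$ (depending on $W_{i^*}$, $\mu_{S^*}$, $L$, $M$, $\rho$) such that, on the good event above, once the recursion reaches a level with $\xi \le \xi_0$, the term $2M\sqrt{\xi}$ in~\eqref{eq:prac} is small enough that the concentration inequality~\eqref{eq:conczi}/\eqref{eq:prac} is active with $\mathrm{conf}(i,n,\delta)$ governed by the empirical-Bernstein rate, and then \tryselectproc is guaranteed to return {\tt Success=True} after a bounded number of fresh queries — this is exactly the content that makes $\tau<\infty$ in Lemma~\ref{lem:tau}, made quantitative. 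Concretely, I would show that at level $\xi\le\xi_0$ the arm $i^*$ (the $\max_{i\in S^*\setminus S}|Z_i^S|$ arm) satisfies~\eqref{eq:cond} after $n = O\big(\max\{W_{i^*}^{-2},\sqrt{\bar k}\,W_{i^*}^{-1}\}\log(\cdot)\big)$ samples in the data-stream setting (the $\sqrt{\bar k}$ factor coming from the $M\|\tilde\beta^S\|_1\le M\sqrt{k}\cdot\|\tilde\beta^S\|_2 = O(\sqrt k)$ bias/bound term $\tilde B$), and that in the database setting each suboptimal arm $i$ is pulled only $O\big(\max\{W_i^{-2},\sqrt{\bar k}\,W_i^{-1}\}\log(\cdot)\big)$ times before the UCB index of $i^*$ dominates and~\eqref{eq:cond} fires — this is the standard best-arm-identification sample-complexity argument (cf.\ \cite{audibert2010best,kaufmann2016complexity}) adapted to the gaps $W_i$. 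I also need the value of $\xi_0$ to enter only logarithmically, which it does because $\xi$ appears inside $T = 21G^2\log(1/\delta)/(\rho\xi)$ and inside $\mathrm{conf}$ only through the additive $M\sqrt\xi$ term, so $\xi_0^{-1} = \mathrm{poly}(W_{i^*}^{-1})$ contributes $O(\log(1/W_{i^*}))$.

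Finally I sum over the doubling levels. The levels are $\xi_\ell = 4^{-\ell}$ with confidence $\delta_\ell = \delta/2^\ell$; the recursion terminates (on the good event) at the first level $\ell^*$ with $\xi_{\ell^*}\le \xi_0$, i.e. $\ell^* = O(\log(1/\xi_0)) = O(\log(W_{i^*}^{-1}))$. For \optimproc, the cost at level $\ell$ is $O(|S|\cdot T_\ell) = O\big(k\cdot G^2\log(1/\delta_\ell)/(\rho\xi_\ell)\big)$ with $G = O(k M^2/\sqrt\rho)$, so $C_{\optimproc} = \sum_{\ell\le\ell^*} O(k^3 4^\ell \log(2^\ell/\delta)) = O(k^3 4^{\ell^*}\log(\cdot))$, and $4^{\ell^*}\le 4/\xi_0 = O(\max\{W_{i^*}^{-2},\sqrt k\,W_{i^*}^{-1}\})$ gives the claimed bound. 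For \tryselectproc, the per-level cost is dominated by the last level (geometric sum again, since the number of samples needed grows polynomially in $\xi_\ell^{-1}$ only through $\xi_0$, and actually the early levels may fail fast via the $2M\sqrt\xi > \min_i\mathrm{conf}[i]$ break after only $\mathrm{poly}(d)$ work), yielding $C_{\tryselectproc}^{ds}$ and $C_{\tryselectproc}^{db}$ as stated, with the extra $\log(d/(\delta_k W_i))$ factor in the stream setting coming from the union bound over $d$ arms inside $\mathrm{conf}$ and the outer $\log(\bar k/(\delta_k W_i))$ from the sum over doubling levels. The probability bookkeeping: the complexity bound holds on the intersection of the correctness good event and the per-level concentration events~\eqref{eq:conczi}, which by a union bound over levels costs an additional $\sum_\ell \delta_\ell = O(\delta)$, so ``with probability at least $1-\delta$'' (after rescaling constants absorbed in $\kappa$).

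The main obstacle I expect is the careful handling of the interaction between the doubling trick and the two failure-probability budgets: one must be sure that the concentration inequality~\eqref{eq:conczi} (which is stated for a \emph{fixed} $\tilde\beta^S$) can be applied \emph{conditionally} at each level given the previously queried data (using the $\sigma$-algebra / stopping-time setup of Section~3.1 so that fresh queries are i.i.d.\ conditional on the input), and that the random $\tilde\beta^S$ produced by \optimproc at that level can be conditioned upon — this is where the ``input to each subroutine is $\mathcal F_N$-measurable'' assumption does the work, and it must be invoked cleanly rather than hand-waved. Secondarily, pinning down $\xi_0$ so that~\eqref{eq:cond} provably fires at $i^*$ requires relating the thresholded empirical variance $\tilde V_{i^*,n}^+$ to the population-level scale $LM^2/\rho$ and showing the $\frac{1+\mu_{S^*}}{1-\mu_{S^*}}\mathrm{conf}$ threshold is crossed — a somewhat delicate deterministic inequality once the concentration event is granted.
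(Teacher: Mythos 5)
Your plan is correct and follows essentially the same route as the paper's proof: correctness via the subroutine properties and a union bound over doubling levels, then per-level sample-complexity bounds driven by the gaps $W_i$ (with the $\sqrt{\bar k}$ branch coming from the $\tilde B \lesssim M^2\sqrt{k/\rho}$ term in $\mathrm{conf}$, and the per-arm UCB pull counts in the database case), a bound on the number of recursion levels obtained from the thresholded variance $\tilde V^+_{i,n} \geq \frac{1}{1000}\frac{LM^2}{\rho}$, and a geometric summation over levels. The only cosmetic difference is that you bound the terminal level by a sufficient condition (``once $\xi\le\xi_0$ the level succeeds'') whereas the paper extracts a necessary condition from the last level returning ${\tt Success=False}$ (Lemma~\ref{lem:boundp}); both yield $4^p \lesssim \max\{W_{i^*}^{-2},\sqrt{k}\,W_{i^*}^{-1}\}$, and the two delicate points you flag (conditioning the fixed-$\tilde\beta^S$ concentration on the $B_{k,q}$ events, and pinning down when~\eqref{eq:cond} fires via a deterministic envelope on $\mathrm{conf}$) are exactly the ones the paper resolves with Claims~\ref{cl:bconf} and~\ref{cl:calpure}.
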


Theorem~\ref{th:main} provides high probability bounds on the computational complexity for a call to the procedure \selectproc.
A crucial point is that the complexity of the  
$k$-th step depends on the largest correlation $\abs{Z_i^S}$ over the remaining (yet unselected)
features, which in turn can be related to the average of the corresponding coefficients of $\beta^*$
(see Lemma~\ref{lem:ord}). By contrast,
due to the batch nature of OMP, its complexity is driven by the minimum coefficient of $\beta^*$,
which determines the minimum amount of needed data for full recovery.

Let us introduce the following notation: let $\left(\beta_{(i)} \right)_{1 \le i \le s^*}$ be the coefficients of $\beta^*$ ordered in decreasing sequence of magnitude. Let $\tilde{\beta}_{(s^*-k+1)}^2$ denote the average of the square of the $k$ smallest non-zero coefficients of $\beta^*$: $\tilde{\beta}_{(s^*-k+1)}^2 := \frac{1}{k} \sum_{i=s^*-k+1}^{s^*} \beta_{(i)}^2$.

\begin{corollary}\label{corollary1}
	Under the same assumptions as theorem~\ref{th:main}. The computational complexity of $\selectproc(S, \delta, 1)$ subroutines \optimproc and \tryselectproc satisfy  with probability at least $1-\delta$:	
	\begin{align*}
	C_{\optimproc}^S &\le \kappa  \frac{k^3}{\tilde{\beta}^2_{(k+1)}}  \log\left(\frac{\bar{k}}{\delta \tilde{\beta}^2_{(k+1)}}\right); \\
	C_{\tryselectproc}^S  & \le \kappa  \frac{d}{\tilde{\beta}^2_{(k+1)}}  \log\left(\frac{\bar{k}}{ \tilde{\beta}^2_{(k+1)}}\right)\log\left(\frac{d}{\delta \tilde{\beta}^2_{(k+1)}}\right);
	\end{align*}
	where $\kappa$ is a constant depending only on $\rho, L, M, \mu$, and $\bar{k} = \max\{k,1\}$.
\end{corollary}
We use bounds of corollary~\ref{corollary1} to compare the computational complexity of OOMP with the computational complexity of OMP using the sample size prescribed by \cite{zhang2009consistency} for full support recovery. Then, we compare OOMP with the SSR algorithm presented in \cite{steinhardt2014statistics} for streaming sparse regression, as a Lasso-type procedure. We use Theorem 8.2 in \cite{steinhardt2014statistics} to derive a sufficient sample size to achieve full support recovery.

We denote by $C^{\text{OOMP}}$ the total runtime necessary for OOMP in order to recover the support completely, and denote by $C^{OMP}$ and $C^{SSR}$ the corresponding quantities for OMP and SSR respectively.

\begin{corollary}\label{corollary2}
	Under the same assumptions as theorem~\ref{th:main}. If $d>(s^*)^3$, we have with probability at least $1-\delta$:
	\begin{align*}
	\frac{C^{\text{OOMP}}}{C^{\text{OMP}}} &\le  \kappa\log^2\left(\frac{s^*}{\beta_{(s^*)}^2}\right) \frac{1}{s^*}\sum_{i=1}^{s^*} \frac{\beta_{(s^*)}^2}{\tilde{\beta}_{(i)}^2} ; \\
	\frac{C^{\text{OOMP}}}{C^{\text{SSR}}} &\le  \kappa \log^2\left(\frac{s^*}{\beta_{(s^*)}^2}\right) \frac{1}{(s^*)^2}\sum_{i=1}^{s^*} \frac{\beta_{(s^*)}^2}{\tilde{\beta}_{(i)}^2} ;
	\end{align*}
	where $\kappa$ is a constant depending only on $\rho, L, M$ and $\mu$.
\end{corollary}

Recall that we have $\forall i \in [s^*]: \beta^2_{(s^*)} \le \tilde{\beta}^2_{(i)}$. Hence: $\frac{1}{s^*}\sum_{i=1}^{s^*} \frac{\beta_{(s^*)}^2}{\tilde{\beta}_{(i)}^2} \le 1$, with equality only if all the square of the coefficients are equal. The SSR complexity bound have and additional factor $\frac{1}{s^*}$, the same factor appears when comparing the sample size used by OMP for support recovery $n^{\text{OMP}}$ in \cite{zhang2009consistency}, with the corresponding quantity for Lasso $n^{\text{Lasso}}$ in \cite{zhao2006model}: $n^{\text{OMP}} = \mathcal{O}(\frac{n^{\text{Lasso}}}{s^*})$. Since our objective is support recovery, we will focus on the comparison between OOMP and OMP in the remainder of this paper. 

In order to illustrate the advantage of OOMP over OMP,  we consider the specific
situation where the coefficients of $\beta^*$ decay polynomially as: $\beta_i = \frac{1}{\sqrt{s^*}} \left(1 - \frac{i-1}{s^*}\right)^{\gamma}$, for $i \in S^*$ and $\beta_i = 0$ for $i\notin S^*$; with $\gamma\geq 0$ and we assume that $d>(s^*)^3$. Then we have, with probability at least $1-\delta$:
\begin{equation}\label{eq:specific_sc}
\frac{C^{\text{OOMP}}}{C^{\text{OMP}}} \le \kappa \frac{\log^2\left(s^*\right)}{(s^*)^{\min\left\lbrace 2\gamma,1\right\rbrace}}.
\end{equation}
%
where $\kappa$ is a constant depending only on $\rho, L, M$ and $\mu$.
See section~\ref{sec:comp_compare} for a proof of the results above. Thus,
in a typical scenario of coefficient decay ($\gamma>0$), OOMP reduces the complexity of OMP by
a large factor (observe that the worst case in this scenario is $\gamma=0$, i.e. when
all coefficients all are of the same order, which is not the typical case in practice).

\section{Simulations}\label{sec:sims}

In this section, we aim at comparing the computational complexities of OOMP and OMP. We denote $n^{\text{OMP}}$ the sample size prescribed by-
\cite{zhang2011sparse} (recalled as Theorem~\ref{th:omp})
to fully recover the support using OMP. We consider $C^{\text{OMP}} = s^*dn^{\text{OMP}} + (s^*)^2 n^{\text{OMP}}$ as a proxy for the computational complexity of OMP. For OOMP, we use Lemma~\ref{lem:upc} and evaluate $C^{\text{OOMP}}$ as a function of the quantity of data points queried.

From a practical point of view,  the number of iterations theoretically
prescribed in the optimization procedure (the number $T$ in Algorithm~\ref{alg:optim}),
and coming from \cite{harvey2019simple} is very pessimistic, due to the large numerical constant up to which the confidence bounds of the averaged stochastic gradient descent were developed.
Taking this theoretical prescription to the letter resulted in the \optimproc step demanding
an inordinate amount of data compared to \tryselectproc, while we expect the latter step to
carry the larger part of the complexity burden due to the influence of the dimension $d$.
For this reason, in our simulation we opted to significantly reduce this numerical constant,
while ascertaining  (since we know the ground truth) that the {\em optimization confidence property}~\eqref{eq:condopt} was still satisfied in practice in all simulations.

We generate samples $(x_t, y_t)$ with each coordinate of $x_t$ distributed as $\mathrm{Unif}\left[-B ; B\right]$ with $B=0.5$ and $y_t = \langle x_t, \beta^* \rangle + \epsilon_t$. We pick $\beta^*$ to be a sparse vector with $s^* = \log_2(d)$ non zero coordinates and $\epsilon_t \sim \mathrm{Unif} \left([-\eta, \eta]\right)$, where $\eta = 0.5$. We consider the case where the coefficients of $\beta^*$ decay linearly: $\beta^*_i = \frac{1}{\sqrt{s^*}} \left(1 - \frac{i-1}{s^*}\right)$ for $i\in [s^*]$ and $\beta_i^* = 0$ if $i>s^*$. We consider two scenarios for the structure of the correlation matrix $\Sigma$: the orthogonal design $\Sigma_{\mathrm{orth}} = I_d$ and the power decay
Toeplitz design, with parameter $\phi = 0.1$:
\begin{equation*}                     
\Sigma_{\text{Toeplitz}} = \begin{pmatrix}
1      & \phi & \cdots & \phi^{d-1} \\ 
\phi & \ddots & \ddots & \vdots \\ 
\vdots & \ddots & \ddots & \phi \\
\phi^{d-1}  & \cdots & \phi & 1 
\end{pmatrix}
\end{equation*}

We run OOMP for $d \in \left\lbrace 2^2,  2^3, \dots, 2^8 \right \rbrace$, we average the number of queried quantities over 20 runs and plot the ratio $\frac{C^{\text{OOMP}}}{C^{\text{OMP}}}$ in the logarithmic scale with base 2 as a function of $\log_2 d$ (Figure~\ref{fig:complexity_comp}). We set $\delta = 0.1$. 
In all our simulation runs, the support $S^*$ was correctly recovered.
The results reported in Figure~\ref{fig:complexity_comp} show a significant reduction of the complexity between OOMP and OMP. 

\begin{figure}[H]
  \centering
  \includegraphics[height=90mm, width=90mm,scale=1]{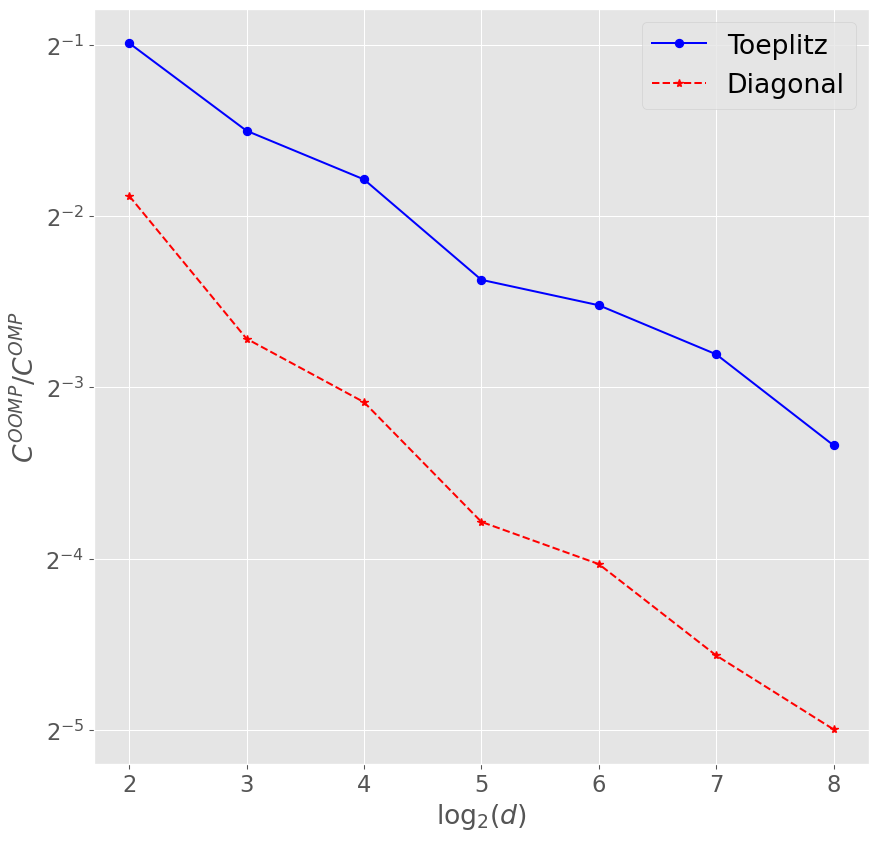}
  \caption{ \label{fig:complexity_comp} Comparison of computational complexities.
    \small
    The ratio $\frac{C^{\text{OOMP}_{.}}}{C^{\text{OMP}_{.}}}$ is plotted as a function of $\log_2\left(d\right)$ for both the Diagonal and Toeplitz covariance matrix. }
  \label{failure_oomp}
\end{figure}

\bibliographystyle{plainnat}
\bibliography{bib_data_base}

\appendix

\section{Proof of Lemma~\ref{lem:ineqzi}} 
Suppose Assumptions~\ref{ass:ass4} and~\ref{ass:ass3} hold. For any subset $S \subseteq [d]$ define $\beta^S := \argmin_{\text{supp}(\beta) \subseteq S} \mathcal{R}(\beta)$, with $\mathcal{R}(\beta) =  \mathbb{E}_{(x,y)}\left[\left(y-\langle x, \beta \rangle\right)^{2}\right]$ .

Let us fix $S \subseteq S^*$, recall that
$Z_i^S = \mathbb{E}\left[ x_i (y-x^{t}\beta^S) \right]$; at first we only use the fact that the support $S$
of $\beta^S$ is a subset of $S^*$. 
We have, if $S^*\neq \emptyset$:
\begin{align*}
\max_{i \in S^*} |Z_i^S| = \underset{i \in S^*}{\max}\left|\text{Cov}\left( x_i, y-x^{t}\beta^S \right)\right| & =  \underset{i \in S^*}{\max} \left| \text{Cov} \left( x_i, x^{t}(\beta^{S^*}-\beta^S) \right) \right| \\
& =  \underset{i \in S^*}{\max} \left| \mathbb{E}\left[x_i x^{t} (\beta^{S^*}-\beta^S) \right] \right| \\
& =  \underset{i \in S^*}{\max} \left| \mathbb{E} \left[e_i^{t}xx^{t} (\beta^{S^*}-\beta^S) \right] \right|\\
& =  \underset{i \in S^*}{\max} \left| e_i^{t} \Sigma \left( \beta^{S^*}-\beta^S \right) \right|\\
& =  \norm[2]{\Sigma \paren[1]{ \beta^{S^*}-\beta^S }}_{\infty}.
\end{align*}
(The above remains true for $S^*=\emptyset$ with the convention $\max \emptyset=0$).
Recall that $S \subseteq S^*$, 
hence the support of $\beta_S$ is included in $S^*$. Moreover by definition of $\beta^{S^*}$, its support is in $S^*$. Therefore, we have:
%
\begin{equation*}
\max_{i \in S^*} |Z_i^S| = \|\Sigma_{S^*} \left( \beta^{S^*}_{S^*}-\beta^S_{S^*} \right) \|_{\infty}.
\end{equation*} 
Let $v = \Sigma_{S^*} \left( \beta^{S^*}_{S^*}-\beta^S_{S^*} \right)$, and assume $v\neq 0$ (the case $v=0$ is trivial). By definition of $\mu_{S^*}$, we have
for any $j\notin S^*$, using Assumption~\ref{ass:ass3} and the previous display:
\begin{align*}
\mu_{S^*} &= \underset{j \notin S^*}{\max} \left\|\Sigma_{S^*}^{-1} \text{Cov}\left( x_{S^*}, x_j \right)\right\|_1 \\
& \ge  \frac{ \left| \text{Cov}(x_{S^*}, x_j)^{t}\Sigma_{S^*}^{-1}v \right| }{\|v\|_{\infty}} \\
&= \frac{ \left| \text{Cov}(x_{S^*}, x_j)^{t}(\beta_{S^*}^{S^*}-\beta^S_{S^*}) \right| }{\|v\|_{\infty}} \\
&= \frac{ \left| \mathbb{E} \left[ x_j x_{S^*}^{t}(\beta_{S^*}^{S^*}-\beta^S_{S^*})\right] \right|}{\|v\|_{\infty}} \\
&= \frac{ \left| \mathbb{E} \left[ x_j (y-x^{t}\beta^S) \right] \right| }{\|v\|_{\infty}}\\
& = \frac{|Z_j^S|}{\max_{i\in S^*} |Z_i^S|}.
\end{align*}
We now use the actual definition of $\beta^S$, namely $\beta^S = \argmin_{\text{supp}(\beta) \subseteq S} \mathcal{R}(\beta)$, with $\mathcal{R}(\beta) =  \mathbb{E}_{(x,y)}\left[\left(y-\langle x, \beta \rangle\right)^{2}\right]$. Since $\partial_i \mathcal{R}(\beta) = -2 \mathbb{E}_{(x,y)}\left[x_i \left(y-\langle x, \beta \rangle\right)\right]$, we must have $ 0 = \partial_i \mathcal{R}(\beta^S) = -2Z_i^S$ for all $i \in S$.
We conclude that $\max_{i\in S^*} |Z_i^S| = \max_{i\in S^*\setminus S} |Z_i^S|$
(including in the case $S=S^*$ where the latter right-hand side is 0 by convention),
yielding the desired conclusion in conjunction
with the last display.

\section{Technical Results}\label{sec:tecres}

In this section we collect some technical results we will need for the proofs below.
Recall that we assume the exact linear model: 
\begin{equation*}
y=\langle x,\beta^{S^*} \rangle+\epsilon,
\end{equation*}
with $\mathbb{E}[\epsilon|x]=0$. 
In the result to come we restrict our attention to vectors $\beta$ having support included in $S$ for a fixed $S \subseteq S^*$ and denote $k:=\left| S\right|$.
Consequently we can with some abuse of notation assume that the ambient dimension is reduced to $k$ (i.e $x\in \mathbb{R}^{k}$, $\beta^S \in \mathbb{R}^{k}$);
let us denote by $\mathcal{R}:\mathbb{R}^{k} \to \mathbb{R}$ the loss function defined by: $\mathcal{R}(\beta) = \mathbb{E}[(y-x^{t}\beta)^2]$, $g:\mathbb{R}^{k} \to \mathbb{R}^{k}$ the gradient function defined by $g(\beta) = \nabla \mathcal{R}(\beta) = \mathbb{E}[2(x^{t}\beta- y)x]$ and for a sample $(x,y)$ define: $\hat{g}_{(x,y)}(\beta) = 2(x^{t}\beta- y)x$. Denote by $\mathcal{B}_{k} \left( 0,r \right) $ the closed ball centred at the origin with radius $r$ in $\mathbb{R}^{k}$.    


\begin{lemma}\label{lem:tech}
	Suppose Assumptions~\ref{ass:ass1} and~\ref{ass:ass2} hold. Considering
	the restrictions of functions $g,\hat{g},\mathcal{R}$ to vectors $\beta$ having support in $S^*$ and
	reducing implicitly the ambient dimension to $s^*=|S^*|$, we have:
	\begin{enumerate}
		\item for any $S \subseteq S^*$: $\left\| \beta^S \right\|_2 \le \frac{2}{\sqrt{\rho}}$.
		\item $\forall \beta \in \mathcal{B}_{k} \left( 0,\frac{2}{\sqrt{\rho}} \right) $:  $\left\| \hat{g}_{(x,y)}(\beta)\right\|_2 \le 4k\frac{M^{2}}{\sqrt{\rho}}+2\sqrt{k}M$  (a.s).
		\item $\forall \beta \in \mathcal{B}_{k} \left( 0,\frac{2}{\sqrt{\rho}} \right)$: $\left\| g(\beta)\right\|_2 \le 4k\frac{M^{2}}{\sqrt{\rho}}+2\sqrt{k}M$.
		\item $\mathcal{R}:\mathbb{R}^{k} \rightarrow \mathbb{R}$ is $\rho$-strongly convex. 
	\end{enumerate}
\end{lemma}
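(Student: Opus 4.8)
The plan is to prove the four items in order, exploiting the RIP bounds $\rho \le \Lambda^{\min}_{S^*} \le \Lambda^{\max}_{S^*} \le L$ from Assumption~\ref{ass:ass1} and the boundedness $|y|<1$, $\|x\|_\infty < M$ from Assumption~\ref{ass:ass2}. Throughout I reduce the ambient dimension to $k:=|S|$ (resp. $s^*=|S^*|$) as stated.

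For item~4, I would note that $\mathcal{R}(\beta) = \mathbb{E}[(y-x^t\beta)^2]$ has Hessian $\nabla^2 \mathcal{R}(\beta) = 2\,\mathbb{E}[xx^t] = 2\Sigma_{S}$ (after dimension reduction), which by Assumption~\ref{ass:ass1} satisfies $2\Sigma_S \succeq 2\rho\, I$... actually the smallest eigenvalue is at least $\rho$, so I should be careful: the statement claims $\rho$-strong convexity, and $\nabla^2\mathcal R = 2\Sigma_S \succeq 2\rho I \succeq \rho I$, so the claim follows (the factor of $2$ only helps). I should double-check the convention: here $\Sigma_S$ for $S \subseteq S^*$ still has eigenvalues bounded below by $\rho$ since it is a principal submatrix of $\Sigma_{S^*}$, whose eigenvalues lie in $[\rho, L]$ by Assumption~\ref{ass:ass1} — eigenvalue interlacing gives $\Lambda^{\min}_{S^*} \le \Lambda^{\min}_{S}$, so $\Lambda^{\min}_S \ge \rho$. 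This is the small technical point worth stating explicitly.

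For item~1, $\beta^S$ is the minimizer of $\mathcal{R}$ over vectors supported in $S$, so $\mathcal{R}(\beta^S) \le \mathcal{R}(0) = \mathbb{E}[y^2] \le 1$ using $|y|<1$. By $\rho$-strong convexity and optimality of $\beta^S$, $\mathcal{R}(0) - \mathcal{R}(\beta^S) \ge \frac{\rho}{2}\|\beta^S\|_2^2$, hence $\frac{\rho}{2}\|\beta^S\|_2^2 \le \mathcal{R}(0) \le 1$, giving $\|\beta^S\|_2 \le \sqrt{2/\rho} \le 2/\sqrt{\rho}$. For items~2 and~3, I bound $\|\hat{g}_{(x,y)}(\beta)\|_2 = 2|x^t\beta - y|\,\|x\|_2$; on the ball $\|x\|_2 \le \sqrt{k}\,\|x\|_\infty < \sqrt{k}\,M$, and $|x^t\beta - y| \le |x^t\beta| + |y| \le \|x\|_2\|\beta\|_2 + 1 < \sqrt{k}M \cdot \frac{2}{\sqrt{\rho}} + 1$, so $\|\hat{g}_{(x,y)}(\beta)\|_2 < 2(\frac{2\sqrt{k}M}{\sqrt\rho} + 1)\sqrt{k}M = \frac{4kM^2}{\sqrt\rho} + 2\sqrt{k}M$ almost surely. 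Item~3 then follows from item~2 and Jensen: $\|g(\beta)\|_2 = \|\mathbb{E}[\hat g_{(x,y)}(\beta)]\|_2 \le \mathbb{E}\|\hat g_{(x,y)}(\beta)\|_2 \le \frac{4kM^2}{\sqrt\rho} + 2\sqrt{k}M$.

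None of the steps is a serious obstacle — this is a collection of routine boundedness estimates. The only point requiring a moment of care is the eigenvalue interlacing argument ensuring that the RIP lower bound, stated for $|S|=s^*$, also controls principal submatrices indexed by $S \subsetneq S^*$; and making sure the strong-convexity constant and the radius $2/\sqrt\rho$ match up (the slack in $\sqrt{2/\rho}$ versus $2/\sqrt\rho$, and the factor $2$ in the Hessian, both go the right way).
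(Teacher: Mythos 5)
Your proposal is correct, and items 2--4 follow the paper's argument essentially verbatim (Cauchy--Schwarz plus $\|x\|_2\le\sqrt{k}\|x\|_\infty$ for the gradient bounds, Jensen for item 3, the Hessian $2\Sigma$ for item 4 --- where your explicit tracking of the factor $2$ and of the interlacing $\Lambda^{\min}_S\ge\Lambda^{\min}_{S^*}\ge\rho$ for principal submatrices is cleaner than the paper's). The one place you genuinely diverge is item 1. The paper invokes the exact linear model of Assumption~\ref{ass:ass4}: it writes $\mathbb{E}[(y-x^t\beta^S)^2]=\mathbb{E}[(x^t(\beta^{S^*}-\beta^S))^2]+\mathbb{E}[\epsilon^2]$, deduces $\rho\|\beta^{S^*}-\beta^S\|_2^2\le 1$ for every $S\subseteq S^*$, specializes to $S=\emptyset$ to get $\|\beta^{S^*}\|_2\le 1/\sqrt{\rho}$, and concludes by the triangle inequality, yielding exactly the constant $2/\sqrt{\rho}$. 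You instead compare $\beta^S$ with the feasible point $0$ via $\rho$-strong convexity of the restricted risk: $\tfrac{\rho}{2}\|\beta^S\|_2^2\le\mathcal{R}(0)-\mathcal{R}(\beta^S)\le\mathbb{E}[y^2]\le 1$. Your route buys two things: it only uses Assumptions~\ref{ass:ass1} and~\ref{ass:ass2} --- which are the only hypotheses the lemma actually lists, whereas the paper's proof silently imports the linear model --- and it gives the slightly sharper bound $\sqrt{2/\rho}$. The paper's route, in exchange, produces the intermediate estimate $\|\beta^{S^*}-\beta^S\|_2\le 1/\sqrt{\rho}$, which is of independent use elsewhere. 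Both are valid; no gap.
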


\begin{proof} Recall that from Assumption~\ref{ass:ass1}, then
	the eigenvalues of the matrix $\Sigma_{S^*}$ belong to $[\rho,L]$.

	\begin{enumerate}
		\item 
		Since $\mathbb{E}[\epsilon | x] = 0$, and $y = x^t \beta^{S^*}+\epsilon$, we have
		for any $S\subseteq S^*$:
		\begin{equation*}
		\mathbb{E}\left[ \left( y-x^{t}\beta^S \right)^{2}\right] = \mathbb{E} \left[ \left(x^{t}\left( \beta^{S^*}-\beta^S \right) \right)^{2}\right]+ \mathbb{E} \left[ \epsilon^{2} \right].
		\end{equation*}	
		By definition of $\beta^S$, it holds $\mathbb{E}\left[ \left(y-x^{t}\beta^S \right)^{2} \right] \le \mathbb{E}\left[ y^{2} \right] \le 1$, together with the above it gives:
		\begin{equation*}
		\rho \|\beta^{S^*}-\beta^S\|_2^{2} \leq
		\left( \beta^{S^*}-\beta^S \right)^{t} \Sigma_{S^*} \left( \beta^{S^*}-\beta^S \right)
		= \mathbb{E}\left[\left( x^{t} \left( \beta^{S^*}-\beta^S \right) \right)^{2} \right]\le 1.
		\end{equation*}
		In particular for
		$S = \emptyset$, we have: $\| \beta^{S^*} \|_2 \le \frac{1}{\sqrt{\rho}}$.	
		By the triangle inequality, for an arbitrary $S \subseteq S^*$:	
		\begin{equation*}
		\| \beta^S \|_2 \le \frac{2}{\sqrt{\rho}}.
		\end{equation*}

		\item 
		Let $\beta \in \mathcal{B}_{k} \left( 0,\frac{2}{\sqrt{\rho}} \right)$, we have:
		\begin{align*}
		\|\hat{g}_{(x,y)}(\beta)\|_2 = \| 2(x^{t}\beta-y)x\|_2 &\le |2x^{t}\beta| \|x\|_2+2|y| \|x\|_2 \\
		&\le 2  \|\beta\|_2 \|x\|_2^2 + 2|y|\|x\|_2\\
		&\le 2k \|x\|_{\infty}^{2} \|\beta\|_2 + 2\sqrt{k}\|x\|_{\infty}\\
		&\le 4k \frac{M^{2}}{\sqrt{\rho}}+2\sqrt{k}M;
		\end{align*}
		where we used: $\|x\|_2 \le \sqrt{k}\|x\|_{\infty}$, and the assumptions $\|x\|_{\infty} \le M$, $|y| \le 1$.

		\item 
		Let $\beta \in \mathcal{B}_{k} \left( 0,\frac{2}{\sqrt{\rho}} \right)$, we have:
		%
		\begin{align*}
		\left\|g(\beta)\right\|_2  &= \left\|\mathbb{E}\left[ \hat{g}_{(x,y)}(\beta)\right] \right\|_2\\
		&\le \mathbb{E} \left[ \|\hat{g}_{(x,y)}(\beta)\|_2\right] \\
		&\le 4k \frac{M^{2}}{\sqrt{\rho}}+2\sqrt{k}M;
		\end{align*}
		using the estimate of the previous point.
		
		\item 
		Recall that $\mathcal{R}$ is twice differentiable and its Hessian is given by $\mathbb{E}[xx^{t}] =  \Sigma_{S^*}\geq \rho I_{s^{*}}$,
		therefore $\mathcal{R}$ is $\rho$-strongly convex.
	\end{enumerate}
\end{proof}

\section{Proof of Lemma~\ref{global-lem}}

Let us start by restating Lemma~\ref{global-lem}.
\begin{lemma}
	Suppose that Assumptions~\ref{ass:ass3} and~\ref{ass:ass4} hold. Consider Algorithm~\ref{algo:oomp} with the procedure \selectproc given in Algorithm~\ref{algo:step}, assume that \optimproc satisfies the {\em optimization confidence property} and that \tryselectproc satisfies the {\em selection property}. Then when the \textbf{OOMP}($\delta, s^*$) (Algorithm~\ref{algo:oomp}) is terminated, the variable $S$ satisfies with probability at least $1-2\delta$: $S \subseteq S^*$.
\end{lemma}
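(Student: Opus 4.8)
The plan is to track the flow of Algorithm~\ref{algo:oomp} and identify all the "bad events" that could make the final set $S$ fail to be a subset of $S^*$, then bound their total probability by $2\delta$ using a union bound together with the doubling-trick structure of \selectproc. The key observation is that, conditionally on the invariant $S\subseteq S^*$ holding at the start of a given iteration of the \texttt{while} loop, the only way a feature $i^*\notin S^*$ can be added is if \tryselectproc returns $\tt Success=True$ while failing the guarantee in~\eqref{eq:selcond}, \emph{or} if \optimproc failed to deliver an output satisfying $\mathcal{R}(\tilde\beta^S)-\mathcal{R}(\beta^S)\le\xi$ (so that the premise of the selection property is violated). So for each call to \selectproc I must account for the failure probability of the \optimproc step and of the \tryselectproc step, across all the recursive calls generated by the doubling trick.

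First I would analyze a single top-level call $\selectproc(S,\delta_0,1)$ made from the main loop, where by the algorithm $\delta_0 = \frac{\delta}{2(|S|+1)(|S|+2)}$. Inside \selectproc, the recursive calls use parameters $(S,\delta_0,1)$, $(S,\delta_0/2,1/4)$, $(S,\delta_0/4,1/16)$, \dots, i.e.\ at recursion depth $j\ge 0$ the confidence parameter is $\delta_0/2^j$ and the precision is $4^{-j}$. For the $j$-th level, by the optimization confidence property~\eqref{eq:condopt} the event $\{\mathcal{R}(\tilde\beta^S)-\mathcal{R}(\beta^S) > 4^{-j}\}$ has probability at most $\delta_0/2^j$, and by the selection property~\eqref{eq:selcond}, conditional on $S\subseteq S^*$ and on the complement of that event, the event that \tryselectproc returns $\tt Success=True$ with $\mu_{S^*}\max_{j\in S^*\setminus S}|Z^S_j| < |Z^S_{i^*}|$ (in particular with $i^*\notin S^*$) has probability at most $\delta_0/2^j$. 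Summing the geometric series over $j\ge 0$ gives $\sum_j \delta_0/2^j + \sum_j \delta_0/2^j = 4\delta_0$ — hmm, that is too lossy; more carefully, I should bundle the two failure sources per level as $2\cdot\delta_0/2^j$ and sum to $4\delta_0$, or better, observe that the parameter passed is $\delta/2$ at the first recursion so the bookkeeping gives exactly $\sum_{j\ge 0} 2^{-j}\delta_0 \le 2\delta_0$ for each of the two sources if one is slightly more careful about which $\delta$ is passed where. In any case, the probability that the call $\selectproc(S,\delta_0,1)$ returns some $i^*\notin S^*$ (given $S\subseteq S^*$ on entry) is at most $C\delta_0$ for an absolute constant $C$ absorbed by the choice of $\delta_0$; the design of the factor $\frac{1}{2(|S|+1)(|S|+2)}$ is precisely to make the next union bound work.

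Next I would union-bound over the (at most $s^*$, possibly infinitely many if $s^*$ unknown) iterations of the main loop. When the $k$-th feature is about to be selected, $|S| = k-1$ and the failure probability of that \selectproc call, conditioned on $S\subseteq S^*$, is at most $\frac{\delta}{2(k)(k+1)}$ — wait, with $|S|=k-1$ this is $\frac{\delta}{2k(k+1)}$. Summing over $k\ge 1$, $\sum_{k\ge1}\frac{\delta}{2k(k+1)} = \frac{\delta}{2}$. Combining with the remark after~\eqref{eq:selcond} that the probability of a spurious $\tt Success=True$ when $S=S^*$ is also at most $\delta_0$ (already subsumed in the same bound), a straightforward induction on the loop iterations shows that the event "$S\subseteq S^*$ fails at some point before termination" has probability at most $\frac{\delta}{2}$, hence at termination $\prob{S\subseteq S^*}\ge 1-\frac{\delta}{2}\ge 1-2\delta$. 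The main obstacle, and the part requiring the most care, is making the doubling-trick summation rigorous: one must verify that conditioning is handled correctly (the recursive calls are nested subroutines with $\mathcal{F}_N$-measurable inputs, so the conditional bounds in~\eqref{eq:condopt} and~\eqref{eq:selcond} apply at each level) and that the number of recursive levels, though a.s.\ finite by Lemma~\ref{lem:tau}, does not inflate the union bound — which is exactly why the geometric decay $\delta_0/2^j$ in the recursion is essential rather than a fixed per-level budget.
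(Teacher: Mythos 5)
Your proposal follows essentially the same route as the paper's proof: decompose the failure event by selection iteration $k$ and recursion depth $i$, charge each depth both an \optimproc failure and a \tryselectproc failure of probability $\delta_{k,i}=\delta/(k(k+1)2^{i})$ (valid conditionally on $S\subseteq S^*$ and on $\mathcal{F}_N$-measurable inputs), and sum the geometric and telescoping series over $i$ and $k$. One bookkeeping correction: with the two failure sources per level the total is $2\sum_{k,i}\delta_{k,i}=2\delta$ rather than the $\delta/2$ you state, which is precisely the $1-2\delta$ bound the lemma asserts, so the conclusion stands.
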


\begin{proof}
	First consider an idealized setting where the algorithm runs indefinitely.
	Let $U_p$ denote the set of selected features at the $p$-th iteration of the main {\bf while}
	loop of Algorithm~\ref{algo:oomp}. It can happen that the call to {\bf Select}
	never terminates (this is actually the expected behaviour if all relevant features have been
	already discovered), so if $\bar{\tau}$ denotes the (random) last terminating iteration,
	we formally define $U_p=U_{\bar{\tau}}$ if $p>\bar{\tau}$ (this is of course irrelevant in practice
	but is just needed to always have a formally well defined $U_p$ for all integers $p$). Denoting $S_p:=\bigcup\limits_{i=1}^p U_i$,
	we see that with this definition, for any integer $k\geq 1$:
	\[
	\mathbb{P}\left(U_k \not\subset S^* | S_{k-1} \subseteq S^*\right)
	= \mathbb{P}\left(U_k \not\subset S^*; \bar{\tau} \geq k | S_{k-1} \subseteq S^* \right).
	\]
	The event $\bar{\tau}\geq k$ implies that all iterations including the $k^{th}$ one
	have terminated. Furthermore, the $k$th selection iteration then consisted in calling
	repeatedly the \tryselectproc  with allowed error probability $\delta_{k,i} = (k(k+1)2^i)^{-1} \delta$ at the $i$-th call,
	until it returned {\tt Success=true} (indicating termination of the $k$-th main selection iteration). Let us denote $B_{k,i}$ the event ``the $i$-th call to \optimproc  during
	the $k$-th selection iteration, if it took place, returned $\tilde{\beta}^S$ such that the optimization confidence property~\eqref{eq:condopt} holds'', and $A_{k,i}$ the event ``the $i$-th call to \tryselectproc  during
	the $k$-th selection iteration, if it took place, returned {\tt Success=true} and a subset of features $U \not\subset S^*$.''
	It holds $\mathbb{P}(B_{k,i}^c| S_{k-1} \subseteq S^*)\leq \delta_{k,i}$ by the optimization confidence property, and
	$\mathbb{P}(A_{k,i} | S_{k-1} \subseteq S^*,B_{k,i})\leq \delta_{k,i}$ by the selection property, so we have
	\begin{align*}
	\mathbb{P}(U_k \not\subset S^*; \bar{\tau} \geq k | S_{k-1} \subseteq S^*)
	&\leq \prob{\bigcup_{i=1}^\infty A_{k,i} \Big| S_{k-1} \subseteq S^*}\\
	&\leq \sum_{i=1}^\infty \mathbb{P}(A_{k,i} | S_{k-1} \subseteq S^*)\\
	&\leq \sum_{i=1}^\infty \mathbb{P}(A_{k,i} \cap B_{k,i} | S_{k-1} \subseteq S^*)+ \mathbb{P}(B_{k,i}^c | S_{k-1} \subseteq S^*) \\
	&\leq \sum_{i=1}^\infty \mathbb{P}(A_{k,i} | S_{k-1} \subseteq S^*, B_{k,i})+ \mathbb{P}(B_{k,i}^c | S_{k-1} \subseteq S^*) \\
	&\leq 2 \sum_{i=1}^\infty \delta_{k,i}.
	\end{align*}
	
	Now, the algorithm may be interrupted at a completely arbitrary time, and returns the
	last active set $S=S_{\tau}$ for some $\tau \leq \bar{\tau}$. We then have
	\begin{align*}
	\prob{S_{\tau} \nsubseteq S^*}  \leq \prob{ \exists k\geq 1: S_k \nsubseteq S^*}
	&\leq \prob{ \exists k\geq 1: U_k \not\subset S^*; S_{k-1} \subseteq S^*}\\
	& \leq  \sum_{k\geq 1} \prob{ U_k \not\subset S^*; S_{k-1} \subseteq S^*}\\
	& \leq  \sum_{k\geq 1} \prob{ U_k \not\subset S^*| S_{k-1} \subseteq S^*}\\
	&\leq 2\sum_{k,i=1}^\infty \delta_{k,i} = 2\delta.
	\end{align*}
\end{proof}

\section{Proof of Proposition~\ref{prop:optim_prop}}

In this section we give high probability bounds on the output of the averaged stochastic gradient descent (ASGD, Algorithm~\ref{alg:gen_optim}). Theorem~\ref{thm:conoptim} below is a slight modification of the main result in \cite{harvey2019tight}, which consists in assuming that the error on the stochastic sub-gradients is bounded by a constant $G>0$ instead of $1$. We denote by $\Pi_{\mathcal{X}}$ the projection operator on $\mathcal{X} := \mathcal{B}\left(0, \frac{2}{\sqrt{\rho}}\right)$.

\begin{algorithm}[H]
	\caption{ASGD($T$, $\beta_0$)\label{alg:gen_optim}}
	\begin{algorithmic}
		\STATE {\bfseries Input:} initial  $\beta_0$, $T$
		\FOR{$t\gets 0,...,T-1$}
		\STATE $\eta_t \gets \frac{2}{\rho (t+1)}$, $\nu_t \gets \frac{2}{t+1}$
		\STATE $(X,Y) \gets \textbf{query-new}(S\cup \{d+1\})$
		\STATE $\gamma_{t+1} \gets \beta_{t} - 2\eta_t (X^{t}\beta_{t}-Y)X$
		\STATE $\beta_{t+1} \gets \Pi_{\mathcal{X}}(\gamma_{t+1})$
		\STATE $\tilde{\beta}_{t+1} \gets (1-\nu_{t})\tilde{\beta}_{t}+\nu_{t} \beta_{t+1}$
		\ENDFOR
		\STATE $\textbf{return   } \tilde{\beta}_T$
	\end{algorithmic}
\end{algorithm}

We use the same notations as in Section~\ref{sec:tecres}, we assume with some abuse of notation that the ambient dimension is reduced to $k:=|S|$ (i.e $x\in \mathbb{R}^{k}$, $\beta^S \in \mathbb{R}^{k}$).
We recall that we denote by $\mathcal{R}:\mathbb{R}^{k} \to \mathbb{R}$ the loss function defined by: $\mathcal{R}(\beta) = \mathbb{E}[(y-x^{t}\beta)^2]$, $g:\mathbb{R}^{k} \to \mathbb{R}^{k}$ the gradient function defined by $g(\beta) = \nabla \mathcal{R}(\beta) = \mathbb{E}[2(x^{t}\beta- y)x]$; in addition we consider $ \hat{g}_{n}:\mathbb{R}^{k} \to \mathbb{R}^{k}$ defined by $\hat{g}_{n}(\beta) = 2((x_S^{(n)})^{t}\beta-y^{(n)})x^{(n)}$, where $(x^{(n)},y^{(n)})$ are the output of the $n^{th}$ call of $\textbf{query-new}$ during Algorithm~\ref{alg:optim}. Denote by $\mathcal{B}_{k} \left( 0,r \right) $ the closed ball centred at the origin with radius $r$ in $\mathbb{R}^{k}$.

Lemma~\ref{lem:tech} shows that (under Assumptions~\ref{ass:ass1}-\ref{ass:ass2}), we have
via the triangle inequality:
\begin{equation}\label{eq:g}
\| \hat{g}_{t+1}(\beta_{t}) - g(\beta_t) \| \le 8k\frac{M^2}{\sqrt{\rho}}+4\sqrt{k}M .
\end{equation}
Where $\beta_t$ are the iterates of Algorithm~\ref{alg:optim}. We denote by $G$ the upper bound in equation~(\ref{eq:g}). 

\begin{theorem}\label{thm:conoptim}
	Suppose Assumptions~\ref{ass:ass1} and~\ref{ass:ass2} hold. Let $\delta \in (0,1)$ and $S \subseteq S^*$ such that $S \neq \emptyset$. Denote by $\tilde{\beta}_T$ the output of $\text{ASGD}( T, 0)$ (Algorithm~\ref{alg:gen_optim}).
	
	Then, with probability at least $1-\delta$ with respect to the samples queried during Algorithm~\ref{alg:gen_optim}:
	\begin{equation*}
	\mathcal{R}(\tilde{\beta}_T) - \mathcal{R}(\beta^S) \le \frac{21G^2 \log(1/\delta)}{\rho T},
	\end{equation*}
	where $G := 8k\frac{M^2}{\sqrt{\rho}}+4\sqrt{k}M $. 
\end{theorem}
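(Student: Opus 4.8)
The plan is to obtain Theorem~\ref{thm:conoptim} as a direct instantiation of the high-probability convergence bound for strongly convex stochastic gradient descent proved in \cite{harvey2019simple}, whose hypotheses are met verbatim by Algorithm~\ref{alg:gen_optim}. As noted in the text, the only genuine adaptation is that \cite{harvey2019simple} state their result for stochastic subgradient estimates whose error is bounded by $1$ almost surely, whereas here the error is bounded by an arbitrary constant $G>0$; the single modification I would make is to carry this $G$ through their argument, which multiplies the final rate by $G^2$. Accordingly, I would first verify the three structural hypotheses on which their bound rests, and then invoke the $G$-scaled version of their theorem.

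First, strong convexity: by Lemma~\ref{lem:tech}(4) the restricted risk $\mathcal{R}:\R^k\to\R$ is $\rho$-strongly convex, so their strong-convexity modulus is $\mu=\rho$. Second, the almost-sure control of the gradient noise: because the projection step $\beta_{t+1}=\Pi_{\mathcal{X}}(\gamma_{t+1})$ keeps every iterate inside $\mathcal{X}=\mathcal{B}_k(0,2/\sqrt{\rho})$, the estimate~\eqref{eq:g} applies at each step, giving $\norm{\hat{g}_{t+1}(\beta_t)-g(\beta_t)}\le G$ almost surely with $G=8k M^2/\sqrt{\rho}+4\sqrt{k}M$; here the factor $2$ relative to Lemma~\ref{lem:tech}(2)--(3) comes from bounding $\norm{\hat{g}-g}\le\norm{\hat{g}}+\norm{g}$, which is exactly why the theorem's $G$ is twice the per-term bound. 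Third, the step sizes $\eta_t=\frac{2}{\rho(t+1)}$ and averaging weights $\nu_t=\frac{2}{t+1}$ match the schedule analysed in \cite{harvey2019simple}: the recursion defining $\tilde{\beta}_t$ implements precisely the $t$-weighted average of the iterates for which their bound is stated, so nothing on the algorithm side needs to change.

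With these checks done, the remaining work is to re-run their analysis tracking $G$ in place of $1$. The core of the argument, and where the main difficulty lies, is the martingale concentration step: expanding the one-step change of the squared distance $\norm{\beta_t-\beta^S}^2$ via non-expansiveness of the projection and $\rho$-strong convexity, one writes $\mathcal{R}(\tilde{\beta}_T)-\mathcal{R}(\beta^S)$ as a weighted sum of deterministic decrease terms plus a martingale whose increments are the gradient-noise inner products $\inner{\hat{g}_{t+1}(\beta_t)-g(\beta_t),\,\beta_t-\beta^S}$. The hard part is that a crude bound on this martingale yields only an in-expectation (or Markov-type $1/\delta$) rate; the sharp $\log(1/\delta)$ tail requires a generalized Freedman inequality in which the conditional variance of each increment is itself bounded, via $\norm{\beta_t-\beta^S}^2\le \frac{2}{\rho}(\mathcal{R}(\beta_t)-\mathcal{R}(\beta^S))$, by the current random suboptimality. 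This produces a self-bounding recursion which, once resolved, gives the clean high-probability guarantee. Propagating $G$ through this argument multiplies the increments by $G$ and the variance proxy by $G^2$, so the rate scales as $G^2$; reading off the resulting numerical constant yields $\mathcal{R}(\tilde{\beta}_T)-\mathcal{R}(\beta^S)\le \frac{21 G^2\log(1/\delta)}{\rho T}$, which is the stated bound.
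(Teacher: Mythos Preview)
Your proposal is correct and matches the paper's approach: the paper does not give a self-contained proof of Theorem~\ref{thm:conoptim} but simply states that it is ``a slight modification of the main result in \cite{harvey2019simple}, which consists in assuming that the error on the stochastic sub-gradients is bounded by a constant $G>0$ instead of $1$,'' relying on Lemma~\ref{lem:tech} and~\eqref{eq:g} to verify the hypotheses. Your write-up in fact supplies more detail than the paper itself, spelling out the verification of each hypothesis and sketching the martingale/Freedman step where the $G^2$ scaling enters.
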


\vspace{4mm}
The following corollary results by simply choosing $T$ large enough such that the optimization confidence property is satisfied by Algorithm~\ref{alg:gen_optim}.
\begin{corollary}
	Suppose assumptions Suppose Assumptions~\ref{ass:ass1} and~\ref{ass:ass2} hold. Let $\xi >0, \delta \in (0,1)$. Consider algorithm~\ref{alg:gen_optim} with inputs $(T,0)$ such that:
	\begin{equation*}
	T = \frac{21 G^2 \log\left(1/\delta\right)}{\rho \xi},
	\end{equation*}
	where $k:= \left|S\right|$ and $G:= 8k\frac{M^2}{\sqrt{\rho}}+4\sqrt{k}M$. Then the output $\tilde{\beta}_T$ satisfies with probability at least $1 - \delta$:
	\begin{equation*}
	\mathcal{R}(\tilde{\beta}_T) - \mathcal{R}(\beta^S) \le \xi.
	\end{equation*}
\end{corollary}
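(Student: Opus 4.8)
The plan is to read the Corollary off directly from Theorem~\ref{thm:conoptim}, choosing $T$ so that the explicit bound proved there does not exceed $\xi$. First I would dispose of the degenerate input $S=\emptyset$ (excluded from Theorem~\ref{thm:conoptim}): there $k=|S|=0$, $G=0$, $\beta^S=0$, and the iterates of Algorithm~\ref{alg:gen_optim} live in $\mathbb{R}^{0}$, so $\tilde\beta_T=0=\beta^S$ and $\mathcal{R}(\tilde\beta_T)-\mathcal{R}(\beta^S)=0\le\xi$ trivially, with probability $1$. For $S\neq\emptyset$, Theorem~\ref{thm:conoptim} gives, with probability at least $1-\delta$ with respect to the samples queried during the run,
\[
\mathcal{R}(\tilde\beta_T)-\mathcal{R}(\beta^S)\ \le\ \frac{21\,G^2\log(1/\delta)}{\rho\,T},\qquad G=8k\frac{M^2}{\sqrt\rho}+4\sqrt k\,M,\ \ k=|S|.
\]

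Then I would simply substitute the prescribed value of $T$. With $T=21G^2\log(1/\delta)/(\rho\xi)$ the right-hand side equals exactly $\xi$; since an iteration count must be a positive integer one in fact uses $T=\lceil 21G^2\log(1/\delta)/(\rho\xi)\rceil$, and because the right-hand side is nonincreasing in $T$, passing to the ceiling can only push it below $\xi$. Hence $\mathcal{R}(\tilde\beta_T)-\mathcal{R}(\beta^S)\le\xi$ with probability at least $1-\delta$, for the fixed input $(S,\delta,\xi)$ and with the probability taken over the freshly queried i.i.d.\ data (per the subroutine/query conventions of Section~3); this is precisely the \emph{optimization confidence property}~\eqref{eq:condopt}.

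There is essentially no obstacle here: the Corollary is a one-line consequence of Theorem~\ref{thm:conoptim}, the only bookkeeping being the integer rounding of $T$ just mentioned. The single extra remark needed to conclude Proposition~\ref{prop:optim_prop} for the \optimproc instantiation of Algorithm~\ref{alg:optim} is that the constant it uses to set $T$, namely $10|S|\,M^2/\sqrt\rho+2\sqrt{|S|}\,M$, dominates the $G=8kM^2/\sqrt\rho+4\sqrt k M$ of the Corollary whenever $k\ge1$ (using $\rho\le M^2$, since $\Lambda^{\min}_S$ is at most any diagonal entry of $\Sigma_S$, each below $M^2$); enlarging $G$ only enlarges $T$ and tightens the bound, so Theorem~\ref{thm:conoptim} applies verbatim. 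Combining these observations yields the claim.
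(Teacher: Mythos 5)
Your proof is correct and matches the paper's, which simply reads the corollary off Theorem~\ref{thm:conoptim} by choosing $T$ so that the bound $21G^2\log(1/\delta)/(\rho T)$ equals $\xi$. The extra bookkeeping you supply (the $S=\emptyset$ case, integer rounding of $T$, and the comparison with the constant used in Algorithm~\ref{alg:optim}) is sound but goes beyond what the paper records, which states the deduction in one sentence.
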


\section{Proof of Proposition~\ref{prop:conczi}}

\subsection{Technical Results} \label{subsec:tech}

The following result is a straightforward modification of the empirical Bernstein inequality from \cite{DBLP:conf/colt/MaurerP09}, which consists in assuming that the random variables $U_i$ belong to $[-B,B]$ for a $B>0$, instead of $[0,1]$.

\begin{lemma}{ \cite{DBLP:conf/colt/MaurerP09}}\label{lem:empber}
	Let $U, U_1,\ldots, U_n$ be i.i.d. random variables with values in $[-B,B]$ and let $\delta > 0$. Then with probability at least $1-\delta$ we have: 
	\begin{equation*}
	\left| \frac{1}{n} \sum_{i=1}^{n} U_i - \mathbb{E}\left[ U \right] \right|  \le \sqrt{\frac{2V_n \ln(2/\delta)}{n}}+\frac{14B\ln(2/\delta)}{3(n-1)},
	\end{equation*}
	where:
	\begin{equation*}
	V_n = \frac{1}{n(n-1)}\sum_{1\le i < j \le n} (U_i-U_j)^{2}.
	\end{equation*}
\end{lemma}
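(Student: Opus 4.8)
The plan is to deduce the statement from the empirical Bernstein inequality of \cite{maurer2009empirical} for $[0,1]$-valued variables by a simple affine rescaling of the range. First I would recall the original (one-sided) bound in the form I need: if $Z,Z_1,\dots,Z_n$ are i.i.d.\ with values in $[0,1]$, then with probability at least $1-\delta$,
\[
\mathbb{E}[Z] - \frac1n\sum_{i=1}^n Z_i \;\le\; \sqrt{\frac{2\,\widehat{V}_n \ln(1/\delta)}{n}} + \frac{7\ln(1/\delta)}{3(n-1)},
\qquad \widehat{V}_n := \frac{1}{n-1}\sum_{i=1}^n\Big(Z_i-\tfrac1n\textstyle\sum_j Z_j\Big)^2 .
\]
I would also note the elementary identity $\sum_{i<j}(Z_i-Z_j)^2 = n\sum_i(Z_i-\bar{Z}_n)^2$, which gives $\frac{1}{n(n-1)}\sum_{i<j}(Z_i-Z_j)^2 = \widehat{V}_n$; thus the quantity $V_n$ appearing in Lemma~\ref{lem:empber} is exactly the Bessel-corrected sample variance used in \cite{maurer2009empirical}.

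Next, given the $[-B,B]$-valued variables $U_i$, set $Z_i := (U_i+B)/(2B)\in[0,1]$. Since this map is affine with slope $1/(2B)$, the empirical-minus-true mean scales by $1/(2B)$, i.e.\ $\frac1n\sum Z_i - \mathbb{E}[Z] = \frac{1}{2B}\big(\frac1n\sum U_i - \mathbb{E}[U]\big)$, while the sample variance scales by $1/(4B^2)$ (a translation leaves the variance unchanged), i.e.\ $\widehat{V}_n^{(Z)} = \frac{1}{4B^2} V_n$. Substituting these relations into the $[0,1]$-bound and multiplying through by $2B$ yields, with probability at least $1-\delta$,
\[
\mathbb{E}[U] - \frac1n\sum_{i=1}^n U_i \;\le\; \sqrt{\frac{2\,V_n \ln(1/\delta)}{n}} + \frac{14B\ln(1/\delta)}{3(n-1)} .
\]
Applying the same argument to the (also $[-B,B]$-valued) variables $-U_i$, noting $V_n$ is invariant under $U_i\mapsto -U_i$, and combining the two resulting events by a union bound with confidence $\delta/2$ each, one obtains the two-sided inequality with $\ln(2/\delta)$ in place of $\ln(1/\delta)$, which is precisely the claim (alternatively, one may simply feed the rescaled variables into an already two-sided formulation of the \cite{maurer2009empirical} bound).

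There is essentially no genuine obstacle here: the argument is a change of variables. The only points needing (routine) care are the variance identity used to match $V_n$ with the sample variance of \cite{maurer2009empirical}, the differing homogeneity of the two error terms under the rescaling (the factor $V_n$ already carries the $B^2$ scaling, so the square-root term comes out $B$-free after multiplying by $2B$, whereas the additive Bernstein term picks up the factor $B$), and the bookkeeping of the factor $2$ inside the logarithm produced by the union bound that turns the one-sided estimate into a two-sided one.
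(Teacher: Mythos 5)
Your rescaling argument is exactly the paper's (implicit) proof: the paper states this lemma as a ``straightforward modification'' of the $[0,1]$-valued empirical Bernstein inequality of \cite{maurer2009empirical} with no further detail, and your bookkeeping --- the identity matching the $U$-statistic $V_n$ with the Bessel-corrected sample variance, the cancellation of $B$ in the square-root term, and the factor $14B$ in the additive term --- is correct. The only caveat is that the one-sided Theorem~4 of \cite{maurer2009empirical} already carries $\ln(2/\delta)$ (its proof union-bounds Bennett's inequality with the concentration of the sample variance), so a strictly literal two-sided union bound would produce $\ln(4/\delta)$; this only changes a constant inside the logarithm, and the paper's statement glosses over it in the same way.
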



We are interested in applying the Lemma above to the quantities $\tilde{Z}_{i,n}^S$. Let $\left(X,Y\right)$ be a queried sample, the following claim shows that the random variable $U := X_{i} (X^{t} \tilde{\beta}_S - Y)$ for $i\in [d]$, where $X_{i}$ is the $i^{th}$ feature $X$, satisfies the conditions of Lemma~\ref{lem:empber}.

\begin{claim}\label{claim:bound}
	Suppose Assumption~\ref{ass:ass2} holds. Let $(X,Y)$ be a sample, $\beta \in \mathbb{R}^d$ of support $S \subseteq [d]$ and such that $\|\beta\|_2 \le \frac{2}{\sqrt{\rho}}$. Fix $i \in [d]$ and define $U = X_{i} \left( X^t \beta - Y\right)$.
	Then it holds almost surely:
	\begin{equation*}
	\left| U\right| \le 2\sqrt{\frac{|S|}{\rho}} M^2 + M.
	\end{equation*}
\end{claim}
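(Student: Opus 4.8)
The plan is a direct chain of elementary bounds using the triangle inequality, Cauchy--Schwarz, and the almost-sure bounds of Assumption~\ref{ass:ass2}. First I would split
\[
\abs{U} = \abs{X_i\,(X^t\beta - Y)} \le \abs{X_i}\,\abs{X^t\beta} + \abs{X_i}\,\abs{Y}.
\]
For the second term, since $\abs{X_i} \le \norm{X}_\infty \le M$ and $\abs{Y} < 1$ almost surely, we immediately get $\abs{X_i}\,\abs{Y} \le M$.

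For the first term, I would again use $\abs{X_i} \le M$ and then control $\abs{X^t\beta}$. Because $\beta$ has support $S$, we have $X^t\beta = X_S^t\beta_S$, so Cauchy--Schwarz gives $\abs{X^t\beta} \le \norm{X_S}_2\,\norm{\beta_S}_2$. Passing from $\ell_\infty$ to $\ell_2$ on the $\abs{S}$-dimensional vector $X_S$ yields $\norm{X_S}_2 \le \sqrt{\abs{S}}\,\norm{X_S}_\infty \le \sqrt{\abs{S}}\,M$, and the hypothesis $\norm{\beta}_2 \le 2/\sqrt{\rho}$ gives $\norm{\beta_S}_2 \le 2/\sqrt{\rho}$. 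Hence $\abs{X^t\beta} \le 2\sqrt{\abs{S}/\rho}\,M$, so that $\abs{X_i}\,\abs{X^t\beta} \le 2\sqrt{\abs{S}/\rho}\,M^2$.

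Adding the two contributions gives $\abs{U} \le 2\sqrt{\abs{S}/\rho}\,M^2 + M$, which is exactly the claimed bound. There is no genuine obstacle here: it is a routine estimate. The only points worth tracking are that restricting to the support $S$ is what lets us pay only a $\sqrt{\abs{S}}$ (rather than $\sqrt{d}$) factor when converting the $\ell_\infty$ bound on $X$ into an $\ell_2$ bound on $X_S$, and that the bound available on $\beta$ (from Lemma~\ref{lem:tech}, applied within the analysis of \optimproc) is in $\ell_2$ rather than $\ell_\infty$, which is precisely what makes the Cauchy--Schwarz step the natural route.
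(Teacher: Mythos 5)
Your proof is correct and follows essentially the same route as the paper's: the paper likewise bounds $\abs{U}\le\abs{X_i}\left(\norm{X_S}_2\norm{\beta}_2+\abs{Y}\right)$ via Cauchy--Schwarz and then applies $\norm{X_S}_2\le\sqrt{\abs{S}}\,M$, $\norm{\beta}_2\le 2/\sqrt{\rho}$, and $\abs{Y}<1$. No issues.
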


\begin{proof}
	Using the Cauchy-Schwartz inequality, we have:
	\begin{align*}
	\left| U \right| &\le |X_{i}| \left( \|X_S\| \|\beta\| + |Y|\right) \\
	& \le M \left( \sqrt{|S|}M \frac{2}{\sqrt{\rho}} +1\right).
	\end{align*}
	
\end{proof}
Moreover, a straightforward calculation yields the result below.
\begin{claim}\label{claim:boundprac}
	Suppose Assumption~\ref{ass:ass2} holds. Let $(X,Y)$ be a sample, $\beta \in \R^d$ of support $S \subseteq [d]$. Fix $i \in [d]$ and define $U := X_{i} \left( X^t \beta - Y\right)$.
	Then it holds
	\begin{equation*}
	\left| U\right| \le M^2 \|\beta\|_1 + M.
	\end{equation*}
\end{claim}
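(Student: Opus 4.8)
The plan is to bound $|U|$ directly by combining the triangle inequality with the Hölder pairing between the $\ell_\infty$ and $\ell_1$ norms, exactly as in the proof of Claim~\ref{claim:bound} but replacing the Cauchy–Schwarz step by Hölder's inequality. This is precisely what makes the bound depend on $\|\beta\|_1$ rather than on $\sqrt{|S|}\,\|\beta\|_2$, and it has the further advantage of requiring no control on $\|\beta\|_2$ (in contrast with Claim~\ref{claim:bound}, which assumes $\|\beta\|_2 \le 2/\sqrt{\rho}$).

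First I would write $|U| = |X_i|\,\bigl|X^t\beta - Y\bigr| \le |X_i|\bigl(|X^t\beta| + |Y|\bigr)$ by the triangle inequality. Then I would estimate the inner product via Hölder's inequality: since $\beta$ has support $S$, $|X^t\beta| = \bigl|\sum_{j\in S} X_j \beta_j\bigr| \le \|X\|_\infty\,\|\beta\|_1 \le M\|\beta\|_1$, where the last step uses Assumption~\ref{ass:ass2}. The same assumption gives $|X_i| \le \|X\|_\infty < M$ and $|Y| < 1$. Substituting these three estimates yields $|U| \le M\bigl(M\|\beta\|_1 + 1\bigr) = M^2\|\beta\|_1 + M$ almost surely, which is the claim.

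I do not expect any genuine obstacle: the statement is an elementary deterministic (almost sure) inequality, and the only real choice is to use the correct dual-norm pairing so that the sparsity pattern $S$ does not enter the bound. It is worth noting, though, that this version of the estimate—not Claim~\ref{claim:bound}—is the one feeding into the quantity $\tilde B(\tilde\beta^S) = M^2\|\tilde\beta^S\|_1 + M$ that appears in the empirical Bernstein confidence term $\mathrm{conf}(i,n,\delta)$ of~\eqref{eq:defconf}, so the bound is deliberately stated in exactly that form.
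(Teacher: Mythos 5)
Your argument is correct and is essentially identical to the paper's own proof, which likewise bounds $|U| \le |X_i|\left(\|X\|_\infty \|\beta\|_1 + |Y|\right) \le M\left(M\|\beta\|_1 + 1\right)$ via the triangle inequality and the $\ell_\infty$--$\ell_1$ H\"older pairing. Nothing is missing.
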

\begin{proof}
	We have:
	\begin{align*}
	\left| U \right| &\le \left| X_i\right| \left( \norm{X}_{\infty} \norm{\beta}_1 + \left| Y\right|_{\infty} \right) \\
	& \le M \left( M \norm{\beta}_1 + 1\right).
	\end{align*}
\end{proof}

\subsection{Proof of Proposition~\ref{prop:conczi}}

Consider an i.i.d sequence $\left(X_h, Y_h\right)$. Let $n \ge 1$ and denote $\left(X_h, Y_h\right)_{1\le h \le n}$ in matrix and vector form as: $\bm X \in \mathbb{R}^{n \times d}, \bm Y \in \mathbb{R}^{n}$. 

Let us first fix a set $S \subseteq S^*$, a feature $i \in [d]\setminus S$ and a vector $\beta \in \mathbb{R}^d$.
Denote for all $j \in [n]$: $U_{j} := \bm{X}_{j,i} (\bm{X}_j^{t} \beta - \bm{Y}_j)$, where $\bm{X}_{j,i}$ is the $i^{th}$ feature of the $j^{th}$ sample $\bm{X}_j$. Recall that  $\tilde{Z}_{i,n}^S(\beta) = \frac{1}{n} \sum_{j=1}^{n} U_{j}$ and $Z_i^S = \mathbb{E}_{(x,y)} [x_i \left( x^t\beta^S-y \right)]$ . We have:

\begin{align*}
\left|\tilde{Z}_{i,n}^S(\beta)-Z_i^{S} \right| &= \left|\frac{1}{n} \sum_{j=1}^{n} U_{j}- \mathbb{E}_{(x,y)} \left[ x_i \left( x^t\beta^S-y \right) \right] \right| \\
&\le \left|\frac{1}{n} \sum_{j=1}^{n} U_{j}- \mathbb{E}_{(x,y)} \left[ x_i \left( x^t\beta-y \right) \right] \right|+ \left|\mathbb{E}_{(x,y)} \left[ x_i \left( x^t\beta-y \right) \right]-\mathbb{E}_{(x,y)} \left[ x_i \left( x^t\beta^S-y \right) \right] \right| \\
&\le \left| \frac{1}{n} \sum_{j=1}^{n} U_{j} - \mathbb{E}_{(x,y)}\left[ U_{1}\right] \right|+ \left|\mathbb{E}_{(x,y)} \left[ x_i x^t \left( \beta-\beta^S \right) \right] \right| \\
&\le \left| \frac{1}{n} \sum_{j=1}^{n} U_{j} - \mathbb{E}_{(x,y)}\left[ U_{1}\right] \right|+ M\left|\mathbb{E}_{(x,y)} \left[ \left| x^t \left( \beta-\beta^S \right)\right|  \right] \right| \\
&\le \left|\frac{1}{n} \sum_{j=1}^{n} U_{j} - \mathbb{E}_{(x,y)}[U_{1}] \right| +M\sqrt{\mathcal{R} \left(\beta \right)-\mathcal{R}\left(\beta^S \right)}.
\end{align*}

Let us denote $\tilde B(\beta):= M^2 \norm{\beta}_1 + M$, and $\tilde V_n(\beta) := \frac{1}{n(n-1)}\sum_{1\le p < q \le n}(U_{q}-U_{p})^{2}$. Since $(U_{j})_{j \in [n]}$ are i.i.d and belong to $[-B,B]$ (Claim~\ref{claim:boundprac}, following from
Assumption~\ref{ass:ass1} and Lemma~\ref{lem:tech}~(i)), we have using Lemma~\ref{lem:empber}: for any $\delta \in (0,1)$, with probability at least $1-\frac{\delta}{4dn^2}$:

\begin{equation}\label{eq:empber}
\left|\frac{1}{n} \sum_{j=1}^{n} U_{j} - \mathbb{E}_{(x,y)}\left [U_{1} \right] \right| \le \sqrt{\frac{2\tilde V_n(\beta) \log(8dn^2/\delta)}{n}}+\frac{14\tilde B(\beta)\log(8dn^2/\delta)}{3(n-1)}.
\end{equation}

Now we apply a union bound over the sample size $n\ge 1$ and features $i \in [d]\setminus S$, we obtain: with probability at least $1-\frac{\delta}{2}$, bound~\eqref{eq:empber} holds for all $n$ and $i$. To conclude, we choose $\beta = \tilde{\beta}^S$ and  we use the risk bound~\eqref{eq:condopt} to have: with probability at least $1-\delta$:
\begin{equation*}
\forall i \in [d], \forall n\geq 1: \qquad
\left|\tilde{Z}_{i,n}^S(\tilde{\beta}^S)-Z_i^{S} \right| \le \sqrt{\frac{2\tilde V_n(\tilde{\beta}^S) \log(8dn^2/\delta)}{n}}+\frac{14\tilde B(\tilde \beta^S)\log(8dn^2/\delta)}{3(n-1)} + M \sqrt{\xi} .
\end{equation*}   
Recall:
\begin{equation*}
\tilde V_n^{+}(\beta) := \max\left( \tilde V_n(\beta), \frac{1}{1000} \frac{LM^2}{\rho}\right).
\end{equation*}
Using the fact that $\tilde V_n(\beta) \le \tilde V_n^{+}(\beta)$, combining with the above
inequality 
we get the announced claim.

\section{Detailed algorithms for \tryselectproc}\label{sec:algos}

Algorithm~\ref{algo:select_mc_full} is a detailed version of Algorithm~\ref{algo:select_mc} 
(the shortened version in the main body of the paper).

\begin{algorithm}
	\caption{\tryselectproc($S$, $\delta$, $\tilde{\beta}$, $\xi$), Data Stream setting \label{algo:select_mc_full}}
	\begin{algorithmic}
		\STATE {\bfseries Input:} $S$, $\delta$, $\tilde{\beta}$, $\xi$
		\STATE {\bfseries Output:} $S$, $\text{Success}$
		\STATE let $n \gets 0$ be the number of queried samples.
		\STATE let $v \gets 0$ be an array to store the quantities $\tilde{V}_{i,n}$.
		\STATE let conf be an array to store the confidence bound values.
		\STATE let $Z$ be an array to store the quantities $\tilde{Z}_{i,n}^S$.
		\STATE let $U \gets \emptyset$ denote the set of selected variables.
		\STATE let $L \gets [d+1]\setminus S$ denote the set of candidate variables.
		\STATE //\textsc{beginning of initialization}
		\STATE $n \gets 1$
		\STATE $(X,Y) \gets \textbf{query-new}([d+1])$
		\STATE $\tilde{Z}_i \gets X_i \left(Y-X_S^{t}\tilde{\beta}\right)$, for all $i\in [d]\setminus S$.
		\STATE //\textsc{initialization for empirical variance quantities}
		\STATE $s_i \gets 0$, $m_i \gets X_i$, for all $i\in [d]\setminus S$.  
		\STATE //\textsc{ end of initialization}
		\WHILE{True}
		\STATE $(X,Y) \gets \textbf{query-new}([d+1])$
		\STATE $n \gets n+1$
		\STATE $\forall$ $i$: $Z_i \gets X_i \left(Y-X_S^t\tilde{\beta}\right)$
		\STATE $\forall$ $i$: $\tilde{Z}_i \gets \frac{1}{n} Z_i + \frac{n-1}{n} \tilde{Z}_i$.
		\STATE //\textsc{ updating the empirical variance}
		\STATE $\forall$ $i$: $\text{temp}_i \gets m_i$
		\STATE $\forall$ $i$: $m_i \gets m_i + (Z_i - m_i)/n_i$
		\STATE $\forall$ $i$: $ s_i \gets s_i + (Z_i - \text{temp}_i)*(Z_i - m_i)$
		\STATE $\forall$ $i$: $ v_i \gets s_i/\left(n_i-1\right)$
		\STATE $\forall$ $i$: $\text{conf}(i) \gets \sqrt{\frac{8v_i \log(8dn^2/\delta)}{n_{i}}} + \frac{28 B\log(8dn^2/\delta)}{3\left( n_{i} - 1\right)}$
		\IF {$2M \sqrt{\xi} > \min_i \{\text{conf}(i)\}$}
		\STATE $\text{Success} \gets \text{False}$, \textbf{break}
		\ENDIF
		\STATE let $\hat{i} \gets \underset{i \in [d]\setminus S}{\text{argmax}}\{ |\tilde{Z}_{i}|+\text{conf}(i) \}$
		\STATE //\textsc{Communicating an upper bound on the mean of the non-recovered coefficients}
		\STATE \textbf{ Communicate:} $ \sqrt{\frac{L}{\rho^3} \left( |\tilde{Z}_{\hat{i}}|+\text{conf}(\hat{i})\right)}$
		\FORALL {$i \in  L \setminus \{ d+1 \}$}
		\IF {$\left|Z_i\right| + \text{ conf}(i) \le \left|Z_{\hat{i}}\right| - \text{ conf}(\hat{i})$}
		\STATE $L \gets L \setminus \{i\}$
		\ENDIF
		\IF {$\left|Z_i\right| - \text{ conf}(i) \ge \mu \left( \left|Z_{\hat{i}}\right| + \text{ conf}(\hat{i})\right)$}
		\STATE $U \gets U \cup \{i\}$
		\ENDIF
		\ENDFOR
		\IF {$|\tilde{Z}_{\hat{i}}|> \frac{2}{1-\mu} \text{ conf}(\hat{i})$}
		\STATE $\text{Success} \gets \text{True}$, \textbf{break}
		\ENDIF
		
		\ENDWHILE
		\STATE $\textbf{return  } U, \text{Success}$
	\end{algorithmic}
\end{algorithm}

\paragraph{On the upper bound of the mean of the non-recovered coefficients:}
The bound communicated through the command:
\begin{equation*}
\textbf{ Communicate:} \quad\sqrt{\frac{L}{\rho^3} \left( |\tilde{Z}_{\hat{i}}|+\text{conf}(\hat{i})\right)}
\end{equation*}
Is a direct consequence of the bound in lemma~\ref{lem:ord} along with proposition~\ref{prop:conczi}.

\section{Proof of the selection property}

The proof that the proposed Algorithm~\ref{algo:select_mc} satisfies the selection property hinges on the following lemma:
\begin{lemma}
	\label{lem:selprop} 
	Let $S\subseteq S^*$ be fixed. Let $(\tilde{\beta}^S)$ be given. Assume there exists $n\ge1$, $\selarm, j \in [d] \setminus S$ and positive numbers $(\eps_i)_{i\in [d]\setminus S}$ are such that:
	\begin{align}
	\selarm & \in \mathrm{Argmax}_{i \in [d]\setminus S} \{ | \tilde{Z}^S_{i,n}| + \eps_i \}; \label{eq:selarm}\\
	\forall i \in [d]\setminus S: |\tilde{Z}_{i,n}^{S}-Z_{i}^{S}| & \le \eps_i; \label{eq:unifcont} \\
	|\tilde{Z}_{j,n}^S|- \eps_{j} & \ge \mu \left(|\tilde{Z}_{\selarm,n}^S|+\eps_{\selarm} \right). \label{eq:condarm}
	\end{align}
	Then it holds $\abs[1]{Z_{j}^S} \ge \mu \max_{i \in S^*} \abs{Z_{i}^S} $.
\end{lemma}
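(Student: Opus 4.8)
The plan is a short chain of inequalities, organised around the observation that the single quantity $\eps_{\selarm}$ controls \emph{both} the estimation error at $\selarm$ and, via the argmax property~\eqref{eq:selarm}, the gap between $\abs{Z^S_{\selarm}}$ and $\max_{i\in S^*}\abs{Z^S_i}$; the threshold $\tfrac{1+\mu_{S^*}}{1-\mu_{S^*}}$ in~\eqref{eq:condarm} is then exactly calibrated to close the argument. Write $\mu:=\mu_{S^*}\in[0,1)$ and read the subscript in~\eqref{eq:condarm} as $n_{\selarm}$. First I would recall that $Z^S_i=0$ for every $i\in S$ (first-order optimality of $\beta^S$, as in the proof of Lemma~\ref{lem:ineqzi}), hence $\max_{i\in S^*}\abs{Z^S_i}=\max_{i\in S^*\setminus S}\abs{Z^S_i}$; this lets me restrict to indices in $[d]\setminus S$, where hypotheses~\eqref{eq:selarm}--\eqref{eq:condarm} are available.

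Next I would produce two bounds. A lower bound on $\abs{Z^S_{\selarm}}$, using~\eqref{eq:unifcont} at $i=\selarm$ and then~\eqref{eq:condarm}:
\[
\abs{Z^S_{\selarm}}\;\ge\;\abs{\tilde Z^S_{\selarm,n_{\selarm}}}-\eps_{\selarm}\;>\;\frac{1+\mu}{1-\mu}\eps_{\selarm}-\eps_{\selarm}\;=\;\frac{2\mu}{1-\mu}\,\eps_{\selarm}.
\]
And an upper bound on the competing correlations: for any $i\in S^*\setminus S\subseteq[d]\setminus S$, chain~\eqref{eq:unifcont}, then the maximality~\eqref{eq:selarm} of $\selarm$, then~\eqref{eq:unifcont} again at $\selarm$:
\[
\abs{Z^S_i}\;\le\;\abs{\tilde Z^S_{i,n_i}}+\eps_i\;\le\;\abs{\tilde Z^S_{\selarm,n_{\selarm}}}+\eps_{\selarm}\;\le\;\abs{Z^S_{\selarm}}+2\eps_{\selarm},
\]
so that $\max_{i\in S^*}\abs{Z^S_i}\le\abs{Z^S_{\selarm}}+2\eps_{\selarm}$ (vacuously fine when $S=S^*$, the claimed right-hand side being $0$ then).

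Finally I would combine the two: the lower bound rearranges to $(1-\mu)\abs{Z^S_{\selarm}}>2\mu\,\eps_{\selarm}$, and adding $\mu\abs{Z^S_{\selarm}}$ to both sides yields $\abs{Z^S_{\selarm}}>\mu\bigl(\abs{Z^S_{\selarm}}+2\eps_{\selarm}\bigr)\ge\mu\max_{i\in S^*}\abs{Z^S_i}$, which is the assertion. There is essentially no obstacle: everything is elementary once one notices that $2\eps_{\selarm}$ is simultaneously the correct control on both gaps, so the constant $\tfrac{1+\mu}{1-\mu}=1+\tfrac{2\mu}{1-\mu}$ is precisely the smallest one making the two bounds fit together. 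The only points needing a word of care are the degenerate cases $S=S^*$ and $\mu=0$, both handled above.
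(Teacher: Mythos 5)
Your proof is correct, and it reaches the conclusion by a direct argument where the paper argues by contradiction: the paper assumes $\abs{Z^S_{\selarm}}\le\mu_{S^*}\max_{i\in S^*\setminus S}\abs{Z^S_i}$ and chains~\eqref{eq:unifcont}, \eqref{eq:selarm}, and~\eqref{eq:unifcont} again to contradict~\eqref{eq:condarm} --- exactly the same three inequalities you use, read in the opposite direction --- so the mathematical content is identical and the threshold $\tfrac{1+\mu_{S^*}}{1-\mu_{S^*}}$ plays the same role in both. The one substantive difference is that the paper's route invokes Lemma~\ref{lem:ineqzi} to place the population argmax $\bestarm$ of $\abs{Z^S_i}$ over $[d]\setminus S$ inside $S^*$, whereas you bypass that entirely: you bound every $i\in S^*\setminus S$ individually via the argmax property~\eqref{eq:selarm} of $\selarm$, and only use the first-order optimality fact $Z^S_i=0$ for $i\in S$ to replace $\max_{i\in S^*}$ by $\max_{i\in S^*\setminus S}$. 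This makes your version marginally more self-contained (no appeal to the irrepresentable condition beyond $\mu_{S^*}\in[0,1)$), and it lets the degenerate case $S=S^*$ fall out of the same chain of inequalities rather than requiring the separate vacuous-truth argument the paper gives. Both proofs are equally short; there is nothing to fix.
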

\begin{proof}
	First assume $S\subsetneq S^*$. Let $\bestarm \in \mathrm{Argmax}_{i \in [d]\setminus S} \{ | Z^S_{i}| \} $. We have:
	
	\eqref{eq:selarm} implies that:
	\begin{equation*}
	|\tilde{Z}_{i^*,n}^S|+\eps_{i^*} \le |\tilde{Z}_{\selarm,n}^S|+\eps_{\selarm}
	\end{equation*}
	Moreover, using \eqref{eq:unifcont} twice along with \eqref{eq:condarm}:
	\begin{equation*}
	\left|Z_j^S\right| \ge \left|\tilde{Z}_{j,n}^S\right|-\eps_j \ge \mu \left(|\tilde{Z}_{\selarm,n}^S|+\eps_{\selarm} \right) \ge \mu \left(|\tilde{Z}_{\bestarm,n}^S|+\eps_{\bestarm} \right) \ge \mu\left|Z_{\bestarm}^S\right|
	\end{equation*} 
	
	In the case $S= S^*$, we have that $Z_i^S=0$ for all $i$,
	Therefore the claimed conclusion holds.
\end{proof}
Since Proposition~\ref{prop:conczi} ensures that~\eqref{eq:unifcont} is satisfied with probability 
$1-\delta$ (for $\eps_i=\mathrm{conf}(i,n_i,\delta)$, and uniformly for all values of $n_i$),
provided $2M \sqrt{\xi} < \text{conf}(i,n,\delta)$ for all $i$, Algorithm~\ref{algo:select_mc}, which checks the latter condition and selects $j$ satisfying~\eqref{eq:condarm},
satisfies the selection property.

\section{Proof of Lemma~\ref{lem:tau}}

Lemma~\ref{lem:tau} shows that the procedure \selectproc given in Algorithm~\ref{algo:step}, where \tryselectproc is given by  Algorithm~\ref{algo:select_mc} in the Data Stream setting and \optimproc given by Algorithm~\ref{alg:optim}, finishes in finite time if $S \subsetneq S^*$ and with high probability doesn't select any feature if $S = S^*$.

We start by stating the two following technical claim.  

\begin{claim}\label{lem:evid}
	Let Assumptions \ref{ass:ass4} and \ref{ass:ass3} hold, and $S \subsetneq S^*$.	
	Then $\max_{i \in [d]\setminus S} \{ \left| Z_i^S \right| \} > 0$.
\end{claim}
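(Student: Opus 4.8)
The plan is to argue by contradiction, reusing the elementary computations already performed in the proof of Lemma~\ref{lem:ineqzi}. Suppose, contrary to the claim, that $\max_{i \in [d]\setminus S} \abs{Z_i^S} = 0$. Recall that by the first-order optimality condition for $\beta^S = \argmin_{\text{supp}(\beta)\subseteq S}\mathcal{R}(\beta)$ — namely $0 = \partial_i \mathcal{R}(\beta^S) = -2 Z_i^S$ for all $i\in S$, as noted in the proof of Lemma~\ref{lem:ineqzi} — we also have $Z_i^S = 0$ for every $i\in S$. Combining the two, $Z_i^S = 0$ for all $i\in [d]$, i.e. $\mathbb{E}[x(y-\langle x,\beta^S\rangle)] = 0$.

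Next I would expand this identity using Assumption~\ref{ass:ass4}: writing $y = \langle x,\beta^*\rangle + \epsilon$ with $\mathbb{E}[\epsilon|x]=0$ and $\mathbb{E}[x]=0$, one gets $\mathbb{E}[x(y-\langle x,\beta^S\rangle)] = \mathbb{E}[xx^t(\beta^*-\beta^S)] + \mathbb{E}[x\epsilon] = \Sigma(\beta^*-\beta^S)$, since $\mathbb{E}[x\epsilon] = \mathbb{E}[x\,\mathbb{E}[\epsilon|x]] = 0$. Hence $v := \beta^*-\beta^S$ lies in $\ker\Sigma$. Both $\beta^*$ and $\beta^S$ are supported in $S^*$ (the support of $\beta^S$ is contained in $S\subseteq S^*$), so $v$ is supported in $S^*$ and $\Sigma v = 0$ forces $v_{S^*}^{t}\Sigma_{S^*} v_{S^*} = 0$. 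Since $\Sigma_{S^*}$ is a positive semidefinite matrix which is moreover invertible — invertibility being implicitly required for $\mu_{S^*}$ in Assumption~\ref{ass:ass3} to be well defined — it is positive definite, so $v_{S^*} = 0$, hence $v = 0$ and $\beta^S = \beta^*$.

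Finally I would extract the contradiction: since $S \subsetneq S^*$ there exists $j\in S^*\setminus S$, and then $\beta^*_j\neq 0$ (as $S^* = \{i:\beta^*_i\neq 0\}$) whereas $\beta^S_j = 0$ because $\text{supp}(\beta^S)\subseteq S$ and $j\notin S$; this contradicts $\beta^S = \beta^*$. Therefore $\max_{i \in [d]\setminus S}\abs{Z_i^S} > 0$, as claimed.

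I do not expect any real obstacle here, since the whole argument is essentially a repackaging of the computation inside the proof of Lemma~\ref{lem:ineqzi}; the only point that deserves care in the write-up is making explicit that Assumption~\ref{ass:ass3} entails the positive definiteness of $\Sigma_{S^*}$, which is what legitimises the step ``$\Sigma v = 0 \Rightarrow v=0$'' for $v$ supported on $S^*$. As an alternative route one could avoid the contradiction argument entirely: the proof of Lemma~\ref{lem:ineqzi} yields the identity $\max_{i\in S^*}\abs{Z_i^S} = \norm{\Sigma_{S^*}(\beta^{S^*}_{S^*}-\beta^S_{S^*})}_\infty$, and since $\beta^{S^*} = \beta^*$ and $\beta^S$ is supported strictly inside $S^*$, the right-hand side is nonzero by invertibility of $\Sigma_{S^*}$; the claim then follows from $\max_{i\in[d]\setminus S}\abs{Z_i^S}\ge \max_{i\in S^*\setminus S}\abs{Z_i^S} = \max_{i\in S^*}\abs{Z_i^S}$, the last equality because $Z_i^S=0$ on $S$.
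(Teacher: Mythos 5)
Your proof is correct, but it takes a genuinely different route from the paper's. The paper disposes of this claim in one line by invoking Lemma~\ref{lem:ord}, the quantitative lower bound $\max_{i \in [d]\setminus S}\abs{Z_i^S} \ge \sqrt{\rho^3/L}\,(s^*-k)^{-1/2}\norm{\beta^{S^*}-\beta^S}_2$, whose proof (in Appendix~\ref{sec:pr_low_bound}) adapts an argument of Zhang and explicitly uses the RIP (Assumption~\ref{ass:ass1}) and boundedness (Assumption~\ref{ass:ass2}) in addition to Assumptions~\ref{ass:ass4} and~\ref{ass:ass3}. Your contradiction argument is more elementary: it needs only the first-order optimality of $\beta^S$, the model identity $\mathbb{E}[x(y-\langle x,\beta^S\rangle)]=\Sigma(\beta^*-\beta^S)$, and the invertibility of $\Sigma_{S^*}$ (which, as you rightly point out, is already implicit in the definition of $\mu_{S^*}$ in Assumption~\ref{ass:ass3}); this matches the claim's stated hypotheses more faithfully than the paper's own derivation does. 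What you lose is the quantitative lower bound itself, which the paper needs elsewhere (it feeds into the complexity bounds of Theorem~\ref{th:main} via $n_q$ and the comparison with batch OMP), so the paper's detour through Lemma~\ref{lem:ord} is not wasted work; but as a standalone proof of the stated claim, yours is cleaner, and your alternative non-contradiction route via $\max_{i\in S^*}\abs{Z_i^S}=\norm[1]{\Sigma_{S^*}(\beta^{S^*}_{S^*}-\beta^S_{S^*})}_\infty>0$ is essentially a one-line corollary of the computation already done in the proof of Lemma~\ref{lem:ineqzi}.
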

This claim is a direct consequence of Lemma~\ref{lem:ord} (see the proof of this lemma in Section~\ref{sec:pr_low_bound}).

Consider a set of i.i.d samples $(\bm{X}_j, \bm{Y}_j)_{j \in [n]}$, recall the following notation:

\begin{align}
U_{i,j} &:= \bm{X}_{j,i}  \left(\bm{X}_j^{t} \tilde{\beta}^S-\bm{Y}_j \right);\\
\tilde{Z}_{i,n}^S &:= \frac{1}{n} \sum_{j=1}^{n} U_{i,j}\,;\\ 
\tilde{V}_{i,n} &:= \frac{1}{n(n-1)} \sum_{1 \le p < q \le n} \left( U_{i,p}-U_{i,q}\right)^{2} ;\\
\tilde{V}_{i,n}^{+} &:= \max\left(\tilde{V}_{i,n}, \frac{1}{1000} \frac{LM^2}{\rho} \right)  \label{eq:def_vn};\\
\tilde{B} &:= M^2 \|\tilde{\beta}^S\|_1+ M;\\
\text{conf}\left( i,n, \delta \right) &:= \sqrt{\frac{8\tilde{V}_{i,n}^{+}\log(2d n^2/\delta)}{n}} + \frac{28 \tilde B\log(2d n^2/\delta)}{3(n-1)}. \label{eq:def_conf}
\end{align}

\paragraph{Proof of Lemma~\ref{lem:tau}.}
For the situation $S=S^*$, the argument is a repetition of the proof of
Lemma~\ref{global-lem} (only considered at the particular selection iteration $k$ where $S_k=S^*$).

We now deal with the situation $S \subsetneq S^*$.
We assume $S$ to be fixed, denote $k=|S|$. As explained in the main body of
the paper, the argument to follow, for fixed $S$, can be transposed directly as a reasoning conditional to $\mathcal{F}_{N_k}$, $N_k$ being the number of data used before starting the $k$-th selection step, with a random $S$ assumed to be $\mathcal{F}_{N_k}$-measurable.

Let $i^* := \text{argmax}_{i \in [d] \setminus S} \{ \left| Z_i^S\right| \}$ (a deterministic quantity). Proceeding by proof via contradiction, suppose that with positive probability, during the execution of \selectproc$(S, \delta_k, 1)$, \tryselectproc either never finishes, or always returns $\tt Success = False$. Assume for the rest of the argument that this event
is satisfied. We can rule out the fact \tryselectproc never stops, since there is
a stopping condition of the type $\mathrm{conf}(i,n,2^{-p}\delta_k) < \mathrm{cst}$,
which is eventually met since $n\rightarrow \infty$ during \tryselectproc, so that
the left-hand side goes to zero and
the right-hand-side constant is positive. Therefore, for all $p \ge 0$
representing the number of recursive calls, \tryselectproc returns
$\tt Success = False$, after having queried a (random) number $n_p$ of data points,
satisfying
(see Algorithms~\ref{algo:step} and~\ref{algo:select_mc}) that

\begin{align}\label{sys:tau_ds}
\left\{
\begin{aligned}
2M \sqrt{\frac{1}{4^p}}   &> \text{conf} \left( i_p, n_p, \frac{\delta_k}{2^{p}}\right);\\
\frac{2}{1-\mu_{S^*}} \text{conf} \left( i^*,n_p-1, \frac{\delta_k}{2^{p}} \right) &> \left| \tilde{Z}_{i^*,n_p-1}^S \right| . \\
\end{aligned}
\right.
\end{align}	

Using the definition of $\text{conf}$ in~\eqref{eq:def_conf}, the first inequality of~\eqref{sys:tau_ds} implies (using the fact that: $\tilde B > M$):
\begin{equation*}
2M\sqrt{\frac{1}{4^p}} > \frac{28M\log \left( 2^{p+1}dn_p^2/\delta_k\right)}{3(n_p-1)}.
\end{equation*}

This implies that $n_p \geq c 2^{p}$ for some factor $c=c(M,\rho,k,d,\delta_k)$, and in particular that
$\lim\limits_{p \to \infty} n_p = +\infty$.

Now Claim~\ref{claim:bound} shows that $\tilde{V}_{i^*,n}^+$ defined by~\eqref{eq:def_vn} is bounded almost surely by a constant independent of~$p$. Hence, from the definition~\eqref{eq:def_conf}:
\begin{equation*}
\lim_{p \to \infty} \mathrm{conf}\paren{i^*,n_p-1,\frac{\delta_k}{2^{p+1}}}
= 0.
\end{equation*}
We use the second inequality of~\eqref{sys:tau_ds} to conclude that $\lim\limits_{p \to \infty} \abs[1]{\tilde{Z}^S_{i^*, {n_p-1}}} = 0$. By the contradiction hypothesis we assumed that this happens on an event
of positive probability.
On the other hand, since the variables $\tilde{Z}^S_{i^*,n}$ are averages of i.i.d. variables $(\xi_{j})_{1\leq j \leq n}$, and $n_p$
is a stopping time that is lower bounded by $c2^{p}$, Lemma~\ref{lem:martlemma} implies that the variance
of $\tilde{Z}^S_{i^*,n_p}$ goes to 0 as $p$ grows, hence
$\tilde{Z}^S_{i^*,n_p}$ converges in probability to $ Z_{i^*}^S$.
Finally, we have $\tilde{Z}^S_{i^*,n_p} = \frac{1}{n_p} \xi_p + \frac{n_p-1}{n_p} \tilde{Z}^S_{i^*,n_p-1}$, hence $\abs{\tilde{Z}^S_{i^*,n_p}- \tilde{Z}^S_{i^*,n_p-1}}\leq \frac{2B}{n_p}$, so that $\tilde{Z}^S_{i^*,n_p-1}$ converges in probability to $ Z_{i^*}^S$ as well. 
Therefore $\left|Z_{i^*}^S\right| = 0$, which contradicts the fact that $\max_i \left|Z_i^S\right| > 0$ (see Claim~\ref{lem:evid}).

%
%
%
%
%
%
%

We used the following result:
\begin{lemma}
	\label{lem:martlemma}
	Let $(M_n)_{n\geq 1}$ be a martingale with respect to the filtration $(\mathcal{F}_n)_{n\geq 1}$ and $N$ be a stopping time.
	Let $U_n:= M_n - M_{n-1}$, for $n\geq 1$ (putting $M_0=\e{M_n}$). Assume $\e{U_n^2} \leq A^2$ for all $n\geq 1$,
	and that $N\geq n_0$ a.s. Then:
	\[
	\mathrm{Var}\paren{\frac{M_N}{N}} \leq A^2 \paren{\frac{1}{n_0} + \sum_{i>n_0} i^{-2}}. 
	\]
\end{lemma}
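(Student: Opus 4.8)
The plan is to bound the uncentered second moment $\mathbb{E}[(M_N/N)^2]$, which dominates $\mathrm{Var}(M_N/N)$, and to exploit the martingale structure through the ``Lyapunov function'' $\phi_n:=M_n^2/n^2$. Throughout I assume $\mathbb{E}[M_n]=0$ (equivalently $M_0=0$), which is the situation in which the lemma is invoked; then the increments $U_n$ are orthogonal, mean-zero, with $\mathbb{E}[U_n^2]\le A^2$. The technical device for making everything rigorous is to work first with the bounded stopping times $N\wedge n$, for which all expectations are finite by optional stopping, and only pass to $N$ at the very end.

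First I would write, for every $n\ge n_0$ (using that $N\ge n_0$ a.s., so $N\wedge n_0=n_0$),
\[
\mathbb{E}\!\left[\phi_{N\wedge n}\right]=\mathbb{E}\!\left[\phi_{n_0}\right]+\sum_{m=n_0+1}^{n}\mathbb{E}\!\left[\ind{N\ge m}\big(\phi_m-\phi_{m-1}\big)\right].
\]
This identity is legitimate because $\mathbb{E}[M_{N\wedge n}^2]\le nA^2<\infty$ (optional stopping for the bounded stopping time $N\wedge n$, together with orthogonality of the $U_j$), which makes each summand integrable. Next I would expand one increment via $M_m=M_{m-1}+U_m$:
\[
\phi_m-\phi_{m-1}=M_{m-1}^2\Big(\tfrac1{m^2}-\tfrac1{(m-1)^2}\Big)+\frac{2M_{m-1}U_m}{m^2}+\frac{U_m^2}{m^2}.
\]
Three observations close this step. (i) The first term is $\le 0$ since $\tfrac1{m^2}-\tfrac1{(m-1)^2}<0$ and $M_{m-1}^2\ge 0$. (ii) Multiplying the middle term by $\ind{N\ge m}$ and taking expectations yields $0$, because $\{N\ge m\}=\{N\le m-1\}^c\in\mathcal{F}_{m-1}$, the weight $2/m^2$ is deterministic, and $\mathbb{E}[U_m\mid\mathcal{F}_{m-1}]=0$; here it is essential that the weight $1/m^2$ is deterministic, a random weight such as $1/N^2$ would break this cancellation. (iii) The last term contributes at most $A^2/m^2$. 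Hence $\mathbb{E}[\ind{N\ge m}(\phi_m-\phi_{m-1})]\le A^2/m^2$, and summing gives $\mathbb{E}[\phi_{N\wedge n}]\le \mathbb{E}[\phi_{n_0}]+A^2\sum_{m>n_0}m^{-2}$ for every $n$.

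It then remains to bound the ``first $n_0$ steps'' contribution $\mathbb{E}[\phi_{n_0}]=\mathbb{E}[M_{n_0}^2]/n_0^2$: by orthogonality of the martingale increments, $\mathbb{E}[M_{n_0}^2]=\sum_{j=1}^{n_0}\mathbb{E}[U_j^2]\le n_0A^2$, so $\mathbb{E}[\phi_{n_0}]\le A^2/n_0$. Finally I would let $n\to\infty$: on $\{N<\infty\}$ one has $\phi_{N\wedge n}\to\phi_N=M_N^2/N^2$, so Fatou's lemma gives $\mathbb{E}[M_N^2/N^2]\le \liminf_n\mathbb{E}[\phi_{N\wedge n}]\le A^2\big(\tfrac1{n_0}+\sum_{m>n_0}m^{-2}\big)$, and $\mathrm{Var}(M_N/N)\le\mathbb{E}[(M_N/N)^2]$ concludes.

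The only real difficulty is bookkeeping rather than ideas: one must ensure every expectation appearing in the telescoping is finite so that the identity is valid (handled by truncating at $N\wedge n$ and invoking optional stopping plus increment-orthogonality), and one must justify the passage $n\to\infty$ (Fatou, using $N<\infty$ a.s.). The conceptual heart is simply the choice $\phi_n=M_n^2/n^2$, whose drift splits into a nonpositive curvature term, a zero-mean martingale term, and the tail $\sum_{m>n_0}A^2/m^2$, which together with the $A^2/n_0$ from the initial segment reproduces exactly the claimed bound.
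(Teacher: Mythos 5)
Your proof is correct, but it takes a genuinely different route from the paper's. The paper expands $\mathbb{E}[(M_N/N)^2]$ directly as a double sum $\mathbb{E}\bigl[\tfrac{1}{N^2}\sum_{i,j}U_iU_j\mathbf{1}\{N\geq\max(i,j)\}\bigr]$, bounds the diagonal terms by $\max(n_0,i)^{-2}\,\mathbb{E}[U_i^2]$, and annihilates the cross terms $i<j$ by conditioning on $\mathcal{F}_{j-1}$; you instead run a one-step drift computation on the Lyapunov function $\phi_n=M_n^2/n^2$ stopped at $N\wedge n$, splitting each increment into a nonpositive curvature term, a martingale term with \emph{deterministic} weight $1/m^2$, and a variance term, and then pass to the limit by Fatou. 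Both yield the identical bound $A^2\bigl(n_0^{-1}+\sum_{i>n_0}i^{-2}\bigr)$, and both invoke the same reduction to $\mathbb{E}[M_n]=0$ "without loss of generality." Your version buys a genuine gain in rigor: in the paper's cross-term step the prefactor $1/N^2$ is \emph{not} $\mathcal{F}_{j-1}$-measurable (only the event $\{N\geq j\}$ is), so the displayed identity $\mathbb{E}[\tfrac{1}{N^2}\mathbf{1}\{N\geq j\}U_iU_j]=\mathbb{E}[\tfrac{1}{N^2}\mathbf{1}\{N\geq j\}U_i\,\mathbb{E}[U_j\mid\mathcal{F}_{j-1}]]$ is not justified as written --- precisely the pitfall you flag when you insist that the weight be $1/m^2$ rather than $1/N^2$. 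Your telescoping (an Abel summation in disguise) repairs this at no cost and also makes the integrability and limit-passage issues explicit through the truncation at $N\wedge n$; the paper's computation is shorter but would need a similar summation-by-parts device to be made airtight.
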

\begin{proof}
	Assume without loss of generality that $E[M_n]=0=M_0$.
	We have, using the fact that the event $\{N\geq j\} = \{N<j\}^c$
	is $\mathcal{F}_{j-1}$-measurable since $N$ is a stopping time:
	\begin{align*}
	\e{M_N^2} = \e{\frac{1}{N^2} \sum_{i,j=1}^N U_iU_j}
	& = \e{\frac{1}{N^2} \sum_{i,j=1}^\infty U_iU_j \ind{N \geq \max(i,j)}}\\
	& = \e{\frac{1}{N^2} \paren{\sum_{i=1}^\infty U_i^2 \ind{N \geq i} + 2 \sum_{i<j}U_i U_j \ind{N \geq j}}}\\
	& \leq \sum_{i=1}^\infty \max(n_0,i)^{-2} \e{U_i^2} + 2 \sum_{i<j}\e{ \frac{1}{N^2} \ind{N \geq j} U_i \underbrace{\e{U_j|\mathcal{F}_{j-1}}}_{=0} }\\
	&\leq A^2 \sum_{i=1}^\infty \max(n_0,i)^{-2}.
	\end{align*}
\end{proof}

Finally, the set of selected features $U$ is not empty since the condition: $ \left|\tilde{Z}_{\hat{i},n_p}\right| > \frac{2}{1 - \mu} \text{ conf}(\hat{i}, n_p , \frac{\delta_k}{2^{p}})$ implies that the condition: 
$ \left|\tilde{Z}_{\hat{i},n_p}\right| - \text{ conf}(\hat{i},n_p , \frac{\delta_k}{2^{p}}) \ge \mu \left( \left|\tilde{Z}_{\hat{i},n_p}\right| + \text{ conf}(\hat{i}, , \frac{\delta_k}{2^{p}})\right)$ is satisfied. Therefore, $U$ contains at least $\hat{i}$. 

\section{Proof of Theorem~\ref{th:main}}\label{sec:th_proof}

Theorem~\ref{th:main} states that \selectproc$(S,\delta,1)$ is guaranteed to select a feature in $S^*$ with high probability if the support is not totally recovered. This part is directly implied by
Lemma~\ref{global-lem} and the fact that the proposed \optimproc and \tryselectproc subroutines satisfy
the optimization confidence property and the selection property, respectively, as established previously.

More importantly, the theorem gives an upper bound on the cumulative computational complexity of the sub-routines \tryselectproc and \optimproc. 

In what follows, following the same approach as in the rest of the paper, we concentrate on a
specific selection iteration (call to \selectproc) and consider $S \subsetneq S^{*}$ to be fixed.
We start by stating some technical lemmas useful for the proof of this theorem.

\subsection{Technical Result}

The following concentration inequality is a simple modification of the inequality presented in \cite{DBLP:conf/colt/MaurerP09}  Theorem 10, which consists in assuming that variables $(U_{j,i})_{j \in [n]}$ defined below belong to $[-B,B]$ instead of $[0,1]$.
\begin{lemma}\label{lem:vnbound}
	Consider a fixed $i\in [d]\setminus S$. Suppose Assumption~\ref{ass:ass2} holds with $\bm{X}$ and $\bm{Y}$ being centred random variables. Consider a set of i.i.d. data points $\left(\bm{X}_j, \bm{Y}_j\right)_{j \in [n]}$. Let $\beta \in \mathbb{R}^d$ such that $\|\beta\|_2 \le \frac{2}{\sqrt{\rho}}$ and $\text{supp}(\beta) \subseteq S$. 
	
	Define for a sample $(\bm{X}_j, \bm{Y}_j)$: $U_{j,i} = \left|\bm{X}_{j,i}(\bm{X}_j^{t}\beta -\bm{Y}_j) \right|$, where $\bm{X}_{j,i}$ is the $i^{th}$ feature of $\bm{X}_j$. Finally we define $\tilde{V}_{i,n}$ as: 
	\begin{equation}\label{eq:def_v}
	\tilde{V}_{i,n} = \frac{1}{n(n-1)} \sum_{1 \le l < j \le n} \left(U_{j,i} - U_{l,i} \right)^{2}.
	\end{equation}
	We have in the samples $(\bm{X}_j, \bm{Y}_j)_{j \in [n]}$:
	\begin{align*}
	\mathbb{P} \left( \sqrt{\mathbb{E}\tilde{V}_{i,n}} > \sqrt{\tilde{V}_{i,n} }+ B\sqrt{ \frac{2 \log(1/\delta)}{n-1}}   \right) &\le \delta; \\
	\mathbb{P} \left( \sqrt{\tilde{V}_{i,n}} > \sqrt{\mathbb{E}\tilde{V}_{i,n}} + B\sqrt{ \frac{2 \log(1/\delta)}{n-1}}   \right) &\le \delta, \\
	\end{align*}		
	where $B= M+2 \sqrt{\frac{k}{\rho}} M^2$.
\end{lemma}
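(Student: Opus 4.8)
The plan is to reduce the statement to the empirical-Bernstein bound on the sample standard deviation of Maurer and Pontil (\cite{maurer2009empirical}, Theorem~10), which is stated for summands valued in $[0,1]$. Only two things then need to be checked: that the summands $U_{j,i}$ are almost surely bounded by the announced constant $B = M + 2\sqrt{k/\rho}\,M^2$ (with $k=|S|$), and that the $[0,1]$ inequality rescales to summands valued in $[0,B]$ picking up a single factor $B$.

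For the boundedness I would simply invoke Claim~\ref{claim:bound}. Since $\beta$ has support contained in $S$ with $|S| = k$ and $\|\beta\|_2 \le 2/\sqrt{\rho}$, while Assumption~\ref{ass:ass2} gives $\|x\|_\infty < M$ and $|y| < 1$, the Cauchy--Schwarz inequality yields, for every sample index $j$,
\begin{equation*}
U_{j,i} = \bigl|\bm{X}_{j,i}\,(\bm{X}_j^{t}\beta - \bm{Y}_j)\bigr| \le M\Bigl(\sqrt{k}\,M\cdot \tfrac{2}{\sqrt{\rho}} + 1\Bigr) = B \qquad \text{a.s.}
\end{equation*}
As moreover $U_{j,i} \ge 0$ by definition, the variables $U_{j,i}$ take values in $[0,B]$, hence in particular in $[-B,B]$.

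I would then pass to the rescaled variables $\bar U_{j,i} := U_{j,i}/B \in [0,1]$, which are i.i.d.\ (being deterministic functions of the i.i.d.\ samples $(\bm{X}_j,\bm{Y}_j)$). Writing $\bar V_{i,n} := \frac{1}{n(n-1)}\sum_{1 \le l < j \le n}(\bar U_{j,i} - \bar U_{l,i})^2$, one has $\bar V_{i,n} = \tilde V_{i,n}/B^2$ and hence $\mathbb{E}\bar V_{i,n} = \mathbb{E}\tilde V_{i,n}/B^2$. Applying Theorem~10 of \cite{maurer2009empirical}, together with the matching upper-tail inequality for $\sqrt{\bar V_{i,n}}$ obtained by the same method, gives, each with probability at least $1-\delta$,
\begin{align*}
\sqrt{\mathbb{E}\bar V_{i,n}} &\le \sqrt{\bar V_{i,n}} + \sqrt{\frac{2\log(1/\delta)}{n-1}}, \\
\sqrt{\bar V_{i,n}} &\le \sqrt{\mathbb{E}\bar V_{i,n}} + \sqrt{\frac{2\log(1/\delta)}{n-1}}.
\end{align*}
Multiplying both inequalities by $B$ and substituting $\sqrt{\bar V_{i,n}} = \sqrt{\tilde V_{i,n}}/B$ and $\sqrt{\mathbb{E}\bar V_{i,n}} = \sqrt{\mathbb{E}\tilde V_{i,n}}/B$ yields exactly the two displayed bounds.

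The argument is essentially bookkeeping and I do not expect a genuine obstacle; the only points deserving a moment's care are that $\sqrt{V_n(\cdot)}$ is positively homogeneous of degree one, so the rescaling factor in the Maurer--Pontil bound is $B$ rather than $2B$ (which works precisely because the $U_{j,i}$ are non-negative, so the relevant range is $B$), and that Theorem~10 of \cite{maurer2009empirical} is invoked in both tail directions.
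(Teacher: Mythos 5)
Your proposal is correct and matches the paper's approach: the paper's own proof of Lemma~\ref{lem:vnbound} is simply a citation of Theorem~10 of \cite{maurer2009empirical} together with Claim~\ref{claim:bound} for the almost-sure bound $|U_{j,i}|\le B$, which is exactly the reduction you carry out (your explicit rescaling argument, and the remark that non-negativity of the $U_{j,i}$ gives range $B$ rather than $2B$, is if anything more careful than what the paper writes).
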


We refer to \cite{DBLP:conf/colt/MaurerP09}  Theorem 10, for a proof; recall that Claim~\ref{claim:bound} shows that $\left|U_{j,i} \right| <B$ almost surely.   
\begin{claim}\label{cl:evn}
	Let $i \in [d]\setminus S$. 
	Under the same assumptions as in Lemma~\ref{lem:vnbound}, 
	we have:
	\begin{equation*}
	\mathbb{E}\tilde{V}_{i,n} \le 20\frac{LM^2}{\rho},
	\end{equation*}
	where the expectation is taken with respect to the sample $(\bm{X}_j,\bm{Y}_j)_{j \in [n]}$.
	
\end{claim}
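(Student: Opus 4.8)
The plan is to use the i.i.d.\ structure of the sample to collapse $\mathbb{E}\tilde{V}_{i,n}$ to a single-draw variance, bound that variance by a second moment, strip off the factor $x_i^2 \le M^2$, and finally control the residual second moment $\mathcal{R}(\beta)$ through the RIP assumption and the a priori norm bounds on $\beta$ and $\beta^{S^*}$.

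First I would note that, the samples being i.i.d., the $U_{j,i}$ are i.i.d.\ copies of $U_i := \abs{x_i(x^t\beta - y)}$ for a generic draw $(x,y)$. Expanding $(U_{j,i}-U_{l,i})^2$ and taking expectations, each of the $\binom{n}{2}$ terms of the sum in~\eqref{eq:def_v} has expectation $2\,\mathrm{Var}(U_i)$, so the prefactor $\frac{1}{n(n-1)}$ is exactly cancelled and
\begin{equation*}
\mathbb{E}\tilde{V}_{i,n} = \mathrm{Var}(U_i) \le \mathbb{E}[U_i^2] = \mathbb{E}[x_i^2 (x^t\beta - y)^2] \le M^2\,\mathbb{E}[(x^t\beta - y)^2] = M^2\,\mathcal{R}(\beta),
\end{equation*}
the last inequality being Assumption~\ref{ass:ass2} ($\abs{x_i} \le \norm{x}_\infty < M$).

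Then I would bound $\mathcal{R}(\beta)$. Writing $y = x^t\beta^{S^*} + \epsilon$ with $\mathbb{E}[\epsilon\mid x] = 0$, the cross term vanishes and $\mathcal{R}(\beta) = \mathbb{E}[(x^t(\beta-\beta^{S^*}))^2] + \mathbb{E}[\epsilon^2]$. Since both $\beta$ and $\beta^{S^*}$ are supported in $S^*$, the first term is a quadratic form in $\Sigma_{S^*}$, bounded by $L\norm{\beta-\beta^{S^*}}_2^2$ via Assumption~\ref{ass:ass1}; and the triangle inequality with $\norm{\beta}_2 \le \frac{2}{\sqrt{\rho}}$ (hypothesis) and $\norm{\beta^{S^*}}_2 \le \frac{2}{\sqrt{\rho}}$ (Lemma~\ref{lem:tech}~(i)) gives $\norm{\beta-\beta^{S^*}}_2 \le \frac{4}{\sqrt{\rho}}$, so this term is at most $16L/\rho$. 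For the noise term, $\mathbb{E}[\epsilon^2] \le \mathbb{E}[y^2] \le 1$ (because $\mathbb{E}[y^2] = \mathbb{E}[(x^t\beta^{S^*})^2] + \mathbb{E}[\epsilon^2]$, and $\abs{y} < 1$ a.s.), and $1 \le L/\rho$ since $\rho \le \Lambda^{\min}_S \le \Lambda^{\max}_S \le L$. Hence $\mathcal{R}(\beta) \le 17L/\rho$, and combining with the previous display, $\mathbb{E}\tilde{V}_{i,n} \le 17LM^2/\rho \le 20LM^2/\rho$, as claimed.

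There is no real obstacle here — the argument is a short chain of elementary inequalities. The only spots needing care are the first step (correctly counting that the double sum has $\binom{n}{2}$ terms, each of expectation $2\,\mathrm{Var}(U_i)$, so that the normalization yields precisely $\mathrm{Var}(U_i)$) and the bookkeeping in the last step (expressing every contribution to $\mathcal{R}(\beta)$ as a multiple of $L/\rho$, in particular absorbing the $O(1)$ noise term using $L \ge \rho$, so that the numerical constant $20$ is not exceeded).
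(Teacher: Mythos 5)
Your proof is correct and follows essentially the same route as the paper's: reduce $\mathbb{E}\tilde{V}_{i,n}$ to a second moment of a single $U_i$, pull out $x_i^2\le M^2$, and bound the residual second moment via RIP together with the norm bounds of Lemma~\ref{lem:tech}. The only (harmless) differences are that you use the exact identity $\mathbb{E}\tilde{V}_{i,n}=\mathrm{Var}(U_i)$ where the paper settles for the pointwise bound $\tilde{V}_{i,n}\le\frac{2}{n}\sum_{j}U_{j,i}^2$, and that you split $\mathcal{R}(\beta)$ through the linear model (thus invoking Assumption~\ref{ass:ass4}) where the paper simply uses $(x^t\beta-y)^2\le 2(x^t\beta)^2+2y^2$; both land comfortably under the constant $20$.
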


\begin{proof}
	We have by a simple calculation:
	\begin{equation}\label{eq:prclaim}
	\tilde{V}_{i,n} \le \frac{2}{n} \sum_{j=1}^{n} U_{j,i}^{2}.
	\end{equation}
	Hence: 
	\begin{align*}
	\mathbb{E}\brac[1]{\tilde{V}_{i,n}} &\le 2\mathbb{E}_{(x,y)}[U_{1,i}^2]\\
	&\le 2M^2 \mathbb{E}_{(x,y)}[(x^{t}\beta-y)^{2}]\\
	&\le 4M^2 \mathbb{E}_{(x,y)} \left[ \left( x^t\beta\right)^2 + y^2 \right]\\
	&\le 4M^2 \left(\beta^t \Sigma \beta +1\right)\\
	&\le 4M^2 \left(L \norm{\beta}^2 +1\right)\\
	&\le 4M^2 \left(\frac{4L}{\rho} +1\right)\\
	&\le 20  \frac{LM^2}{\rho},
	\end{align*}
	where we used the assumption that $\|\beta\|_2 \le \frac{2}{\sqrt{\rho}}$ (Lemma~\ref{lem:tech}). 
	
\end{proof}

\begin{claim}\label{cl:calpure}
	Let $x \ge 1, c \in (0,1)$ and $y >0$ such that:
	\begin{equation}
	\label{eq:hypineq}
	\frac{\log(x/c)}{x} > y.
	\end{equation}
	Then:
	\begin{equation*}
	x < \frac{ 2\log\left( \frac{1}{cy} \right) }{ y}.
	\end{equation*}
	
\end{claim}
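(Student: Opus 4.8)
The plan is to reduce the statement to the elementary inequality $\log w \le w - 1$, valid for all $w > 0$, applied after a well-chosen rescaling. First I would rewrite hypothesis~\eqref{eq:hypineq} in additive form as $\log x + \log(1/c) > x y$.

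The key step is to linearize $\log x$. I would apply $\log w \le w - 1$ with $w = \frac{y}{2} x > 0$, which gives $\log\big(\frac{y}{2} x\big) \le \frac{y}{2} x - 1$, equivalently $\log x \le \frac{y}{2} x - 1 + \log(2/y)$. Substituting this into the rewritten hypothesis, the $\log x$ terms cancel and one is left with $x y < \frac{y}{2} x - 1 + \log(2/y) + \log(1/c)$, i.e. $\frac{y}{2} x < \log\big(\frac{2}{cy}\big) - 1$.

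Dividing by $y/2 > 0$ then gives $x < \frac{2}{y}\big(\log\frac{1}{cy} + \log 2 - 1\big)$, and since $\log 2 < 1$ the residual term $\log 2 - 1$ is negative and can be dropped, yielding $x < \frac{2\log(1/(cy))}{y}$, as claimed. As a consistency check one may note that, since $x \ge 1 > 0$, the displayed bound actually forces $\log(1/(cy)) > 1 - \log 2 > 0$, so the right-hand side is genuinely positive.

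I do not expect a real obstacle here; this is a routine calculus estimate. The only point requiring a little care is the choice of the rescaling factor $y/2$ inside the logarithm bound (as opposed to $y$): it is precisely this choice that produces the constant $2$ in front of $\log(1/(cy))$ while leaving the harmless negative remainder $\log 2 - 1$.
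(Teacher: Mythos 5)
Your proof is correct, and it takes a genuinely different route from the paper's. The paper starts from $x < \log(x/c)/y$, takes logarithms to get $\log(x/c) < \log(1/(cy)) + \log\log(x/c)$, and then controls the iterated logarithm via $\log t \le t/2$ applied to $t = \log(x/c)$; solving the resulting self-bounding inequality gives $\log(x/c) < 2\log(1/(cy))$, which is plugged back in. You instead linearize $\log x$ directly via $\log w \le w-1$ at the rescaled point $w = \tfrac{y}{2}x$, which turns the hypothesis into an explicit linear inequality in $x$ with no back-substitution needed. Both arguments hinge on a single elementary logarithm bound and are of comparable length; yours yields the marginally sharper conclusion $x < \tfrac{2}{y}\bigl(\log\tfrac{1}{cy} + \log 2 - 1\bigr)$ and, as a side benefit, does not actually use the hypothesis $x \ge 1$ (the paper's proof needs $x/c>1$ so that $\log\log(x/c)$ is defined and the bound $\log t \le t/2$ is applied to a positive argument). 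Your closing observation that the derived inequality forces $\log(1/(cy)) > 1-\log 2 > 0$ is a nice sanity check that the stated bound is nonvacuous under the hypothesis.
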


\begin{proof}
	Inequality~\eqref{eq:hypineq}
	implies
	\[
	x < \frac{\log(x/c)}{y},
	\]
	and further
	\[
	\log(x/c) < \log(1/yc) + \log \log(x/c) \leq \log(1/yc) + \frac{1}{2}\log(x/c),            
	\]
	since it can be easily checked that $\log(t) \leq t/2$ for all $t>0$.
	Solving and plugging back into the previous display leads to the claim.
\end{proof}

\subsection{Proof of Theorem~\ref{th:main}}

It has already been established based on Lemma~\ref{global-lem} that under Assumptions~\ref{ass:ass4},\ref{ass:ass3}, \ref{ass:ass1} and \ref{ass:ass2}, the set of features $U$ selected by $\selectproc(S, \delta, 1)$ belongs to $S^*$ with high probability, and based on Lemma~\ref{lem:tau} that $U \neq \emptyset$.We therefore now focus on the control of the computational complexity.

Let  $S \subsetneq S^{*}$ be a fixed subset and denote $k:=|S|$. Recall that running $\selectproc(S,\delta,1)$ results in executing \optimproc~and \tryselectproc~alternatively until a condition is verified, implying that at least one feature was selected (see Algorithm~\ref{algo:step}). We use the same notations as in Section~\ref{se:complexitytheory} to denote the computational complexities of \selectproc, \tryselectproc and \optimproc.

Lemma~\ref{lem:tau} shows that, unless interrupted, $\selectproc(S,\delta,1)$ terminates in finite time. Therefore, the number of calls to \optimproc~and \tryselectproc is finite. Let $p$ denote this (random) number.

Let us adopt the following additional notations: For $q\in [p]$, let $m^{(q)}$ denote the number of samples queried during the $q^{\text{th}}$ execution of \optimproc. Let, for $i\in [d]\setminus S$, $n_i^{(q)}$ denote the sample size used to compute $\tilde{Z}_{i}^S$ in the $q^{\text{th}}$ execution of \tryselectproc.

The following lemma provides upper bounds for $C_{\optimproc}$ and $C_{\tryselectproc}$.
\begin{lemma}\label{lem:upc}
	Suppose Assumptions~\ref{ass:ass1} and~\ref{ass:ass2} hold. Let $S\subsetneq S^*$, we have almost surely:
	\begin{enumerate}
		\item $C_{\optimproc}  \lesssim  \sum_{q=1}^{p} m^{(q)}k$ 
		\item $C_{\tryselectproc} \lesssim \sum_{q=1}^{p} \sum_{i\in [d]\setminus S} n_i^{(q)} $,
	\end{enumerate}
	where $\lesssim$ indicates inequality up to a numerical constant.
\end{lemma}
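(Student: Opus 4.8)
The strategy is to bound the computational cost of each subroutine call by the cost of the data queries it makes, and then sum over the (finitely many) calls. The key observation, built into the query model of Section~3.1, is that each call to \textbf{query-new}$(F)$ or \textbf{query-old}$(i,F)$ costs $\mathcal{O}(|F|)$ time, and these queries dominate the per-call bookkeeping.

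\begin{proof}[Proof sketch]
We treat the four items in turn; throughout, ``$\lesssim$'' hides an absolute numerical constant, and we repeatedly use that a call to \textbf{query-new}$(F)$ or \textbf{query-old}$(i,F)$ costs $\mathcal{O}(|F|)$.

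\emph{Items 1 and 2 (\optimproc).} In the $q$-th execution of \optimproc (Algorithm~\ref{alg:optim}) with $|S|=k$, each of the $m^{q,\cdot}$ iterations performs one \textbf{query-new}$(S\cup\{d+1\})$ at cost $\mathcal{O}(k)$, followed by the SAGD update $\gamma_{t+1}\gets\beta_t-2\eta_t(X^t\beta_t-Y)X$ and the projection $\Pi_{\mathcal X}$, all of which act on vectors of dimension $k$ (the ambient dimension is effectively reduced to $|S|$) and cost $\mathcal{O}(k)$. Hence the $q$-th call costs $\lesssim m^{q,\cdot}k$, and summing over $q\in[p]$ gives $C_{\optimproc}^{ds,db}\lesssim\sum_{q=1}^p m^{q,\cdot}k$. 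Since \optimproc behaves identically in both settings (it uses only \textbf{query-new}), items~1 and~2 follow simultaneously.

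\emph{Item 3 (\tryselectproc, Data Stream).} In the $q$-th execution of Algorithm~\ref{algo:select_mc_full}, each of the $n^{(q)}$ iterations makes one \textbf{query-new}$([d+1])$ at cost $\mathcal{O}(d)$, and then updates, \emph{for all} $i\in[d]\setminus S$, the running quantities $Z[i]$, the empirical variance accumulators $(m_i,s_i,v_i)$, and $\mathrm{conf}(i)$ — each update being $\mathcal{O}(1)$ per coordinate, hence $\mathcal{O}(d)$ total. Computing $i^*\in\mathrm{argmax}_i\{|Z[i]|+\mathrm{conf}(i)\}$ and checking the stopping conditions also costs $\mathcal{O}(d)$. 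So the $q$-th call costs $\lesssim dn^{(q)}$, and summing yields $C_{\tryselectproc}^{ds}\lesssim\sum_{q=1}^p dn^{(q)}$.

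\emph{Item 4 (\tryselectproc, Database).} This is the one requiring care, because of the asynchronous (UCB) update pattern of Algorithm~\ref{algo:select_db_full}. In the $q$-th execution, the initialization makes two calls to \textbf{query-new}$([d+1])$ at cost $\mathcal{O}(d)$. Thereafter, each iteration of the main loop updates exactly one arm $i^*$: it performs either a \textbf{query-new}$(S\cup\{i^*,d+1\})$ at cost $\mathcal{O}(k)$ (when $n_{i^*}=n^*$, which happens $\bar n^{(q)}$ times — once for each fresh sample used to extend the residual list) or a \textbf{query-old}$(n^*-n_{i^*},\{i^*\})$ at cost $\mathcal{O}(1)$ (the remaining times), plus an $\mathcal{O}(1)$ update of $Z[i^*]$, $v[i^*]$, $\mathrm{conf}(i^*)$; however, selecting $i^*\in\mathrm{argmax}_{i\in[d]\setminus S}\{|Z[i]|+\mathrm{conf}(i)\}$ costs $\mathcal{O}(\log d)$ per iteration if this argmax is maintained via a priority queue (or $\mathcal{O}(d)$ with a naive scan — either way absorbed by the stated bound, since $\sum_j n_j^{(q)}\geq \bar n^{(q)}\geq$ number of iterations and one can also charge the $\mathcal{O}(d)$ scan to the $\mathcal{O}(d)$ already paid by the $\bar n^{(q)}$ fresh queries; we use the $\log d$ version for the sharper statement). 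Summing the per-iteration costs over the $\sum_{j\in[d]\setminus S}n_j^{(q)}$ arm-updates in call $q$ gives a contribution $\lesssim k\,\bar n^{(q)}+\sum_{j=1}^d n_j^{(q)}\log d$, where the first term collects the $\bar n^{(q)}$ fresh \textbf{query-new}$(S\cup\{i^*,d+1\})$ calls of cost $\mathcal{O}(k)$ each and the second collects the $\mathcal{O}(\log d)$ argmax maintenance over all updates. Summing over $q\in[p]$ yields item~4.
\end{proof}

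\textbf{Anticipated main obstacle.} The delicate point is item~4: one must correctly account for the bookkeeping cost of the asynchronous UCB loop, in particular (i) separating the $\mathcal{O}(k)$ fresh-sample queries (which occur only $\bar n^{(q)}$ times per call, giving the $k\bar n^{(q)}$ term) from the $\mathcal{O}(1)$ \textbf{query-old} lookups, and (ii) justifying the $\log d$ factor by committing to a priority-queue implementation for maintaining the argmax of $|Z[i]|+\mathrm{conf}(i)$ over the $d-k$ candidate arms. Everything else is a routine ``queries dominate the per-step work'' argument using the $\mathcal{O}(|F|)$ query cost from the model.
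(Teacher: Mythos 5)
Your proposal is correct and follows essentially the same route as the paper's proof: per-iteration cost accounting dominated by the query costs, with the database case handled by maintaining the argmax of $|Z[i]|+\mathrm{conf}(i)$ in a logarithmic-time data structure (the paper uses a Red--Black search tree where you use a priority queue, an immaterial difference), yielding the $k\bar n^{(q)}$ term from the $\mathcal{O}(k)$ fresh queries and the $\sum_j n_j^{(q)}\log d$ term from the per-update maintenance. One caveat: your parenthetical claim that a naive $\mathcal{O}(d)$ scan per iteration would also be ``absorbed by the stated bound'' is not correct (it would contribute $d\sum_j n_j^{(q)}$, which the stated bound does not cover), but this does not affect your argument since you commit to the $\mathcal{O}(\log d)$ implementation.
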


\begin{proof}
	\begin{enumerate}
		\item 
		\optimproc was instantiated using the averaged stochastic gradient descent (Algorithm~\ref{alg:optim}), hence the computational complexity of the $q^{th}$ call of \optimproc is upper bounded by $|S|m^{(q)}$ (up to a numerical constant). Therefore:  
		\begin{equation*}
		C_{\optimproc} \lesssim \sum_{q=1}^{p} m^{(q)}k. 
		\end{equation*}
		
		\item Consider the procedure \tryselectproc given in Algorithm~\ref{algo:select_mc}. In one iteration, calling $\textbf{query-new}(L)$ costs $\mathcal{O}(\left|L\right|)$. Once a sample $(X,Y)$ is obtained, computing the residual $Y-X_{S}^{t}\tilde{\beta}$ costs $|S|$ and updating $\tilde{Z}, v_i$ and $\text{conf}(i)$ for all $i \in L$ costs $\mtc{O}(\left|L\right|)$. Finally, selecting the feature $i^*$ with the maximum $ \{ {| \tilde{Z}_{i}|}+{\text{conf}(i)}\}_{i\in L}$ costs $\mtc{O}(\left|L\right|)$. The cost of the last two tests is $\mtc{O}(\left|L\right|)$. Let $L_{q,t}$ denote the active set of features for the $t$-th iteration
		of \tryselectproc during its $q$-th call. We therefore have 
		\begin{equation*}
		C_{\tryselectproc} \lesssim  \sum_{q=1}^{p} \sum_{t=1}^\infty \abs{L_{q,t}} = \sum_{q=1}^{p} \sum_{i \in [d]\setminus S}
		\sum_{t=1}^\infty \ind{ i \in L_{q,t}}  =
		\sum_{q=1}^{p} \sum_{i \in [d]\setminus S} n_i^{(q)}. 
		\end{equation*}

	\end{enumerate}

\end{proof}

In order to provide a control on the computational complexity of $C_{\selectproc}$, we need to derive a control on the (random) quantities $p$, $m^{(q)}$ and $n_i^{(q)}$ for $1\le q \le p$ and $i \in [d] \setminus S$. 
In the remainder of this proof, $\kappa$ will refer to a constant depending only on $L,\rho$ and $M$. The value of $\kappa$ may change from line to line.

Recall the definition:	
\begin{equation}\label{eq:defconf}
\text{conf} \left(i, n, \delta\right) := \sqrt{\frac{8\tilde{V}_{i,n}^+ \log\left(2dn^2/\delta\right)}{n}} + \frac{28\tilde{B}\log\left(2dn^2/\delta\right)}{3(n-1)},
\end{equation}	
where $\tilde{B}:= M+ M^2 \|\tilde{\beta}^S\|_1  $  and $\tilde{V}_{i,n}^+$ is given by~\eqref{eq:def_v}. Since $\text{ conf}(.)$ is a data-dependent function, the claim below provides a deterministic upper bound.

\begin{claim}\label{cl:bconf}
	Suppose Assumption~\ref{ass:ass2} holds with $X$ and $Y$ being centered random variables. Let $B_k :=  M+ 2M^2 \sqrt{\frac{k}{\rho}}$ and define:
	\begin{equation}\label{eq:defbconf}
	\overline{\mathrm{conf}} \left( n, \delta\right) := 8\sqrt{\frac{LM^2 \log\left(2dn^2/\delta\right)}{\rho n}} + \frac{27B_k\log\left(2dn^2/\delta\right)}{n}.	
	\end{equation}
	
	Then, for all $\delta \in (0,1)$, with probability at least $1 - \delta$ we have: $\forall i \in [d] \setminus S$, $\forall n \ge 2$:
	\begin{equation*}
	\overline{\mathrm{conf}}(n,\delta) \geq \mathrm{conf} \left( i,n, \delta\right).
	\end{equation*}
	
\end{claim}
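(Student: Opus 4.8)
The plan is to bound the two summands of $\mathrm{conf}(i,n,\delta)$ separately: the ``Bernstein bias'' term $\tfrac{28\tilde{B}\log(2dn^2/\delta)}{3(n-1)}$ by purely deterministic means, and the ``variance'' term $\sqrt{8\tilde{V}_{i,n}^+\log(2dn^2/\delta)/n}$ on an event of probability at least $1-\delta$. For the bias term I would use that $\tilde{\beta}^S$ is produced by \optimproc and hence lies in the ball $\mathcal{B}_{|S|}(0,2/\sqrt{\rho})$; since its support is contained in $S$ this gives $\|\tilde{\beta}^S\|_1\le\sqrt{k}\,\|\tilde{\beta}^S\|_2\le 2\sqrt{k/\rho}$, so $\tilde{B}=M+M^2\|\tilde{\beta}^S\|_1\le M+2M^2\sqrt{k/\rho}=B_k$ deterministically; together with $n-1\ge n/2$ for $n\ge2$, the bias term is then at most $\tfrac{56}{3}B_k\log(2dn^2/\delta)/n$.

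The core of the argument is a high-probability bound on $\tilde{V}_{i,n}^+$ that is uniform over $i\in[d]\setminus S$ and $n\ge2$. I would start from Claim~\ref{cl:evn}, which bounds $\mathbb{E}\tilde{V}_{i,n}$ by a universal multiple of $LM^2/\rho$ (and on the event where \optimproc has succeeded, i.e. $\mathcal{R}(\tilde{\beta}^S)-\mathcal{R}(\beta^S)\le\xi\le1$, even by $2M^2$, since $\mathbb{E}\tilde{V}_{i,n}=\mathrm{Var}(U_{1,i})\le M^2\mathcal{R}(\tilde{\beta}^S)\le2M^2$). Then I would apply the second inequality of Lemma~\ref{lem:vnbound} (whose constant $B$ equals $B_k$) with confidence level $\delta_{i,n}:=\delta/(d\,c_n)$ for a summable sequence $c_n$ with $\sum_{n\ge2}c_n^{-1}\le1$ (e.g. $c_n=2n^2$), and take a union bound over the at most $d$ indices $i$ and the integers $n\ge2$. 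On the resulting event, for all $i,n$: $\sqrt{\tilde{V}_{i,n}}\le\sqrt{\mathbb{E}\tilde{V}_{i,n}}+B_k\sqrt{2\log(1/\delta_{i,n})/(n-1)}$. Since $\log(1/\delta_{i,n})$ is, up to a universal constant, at most $\log(2dn^2/\delta)$, and since applying $\sqrt{\cdot}$ to the floor $\tfrac{1}{1000}LM^2/\rho$ in the definition of $\tilde{V}_{i,n}^+$ is dominated by $\sqrt{\mathbb{E}\tilde{V}_{i,n}}$ (or else $\tilde{V}_{i,n}^+$ equals that tiny floor), this yields $\sqrt{\tilde{V}_{i,n}^+}\le c_1\sqrt{LM^2/\rho}+c_2B_k\sqrt{\log(2dn^2/\delta)/n}$ on this event, for universal constants $c_1,c_2$.

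Finally I would assemble the pieces: multiplying the last bound by $\sqrt{8\log(2dn^2/\delta)/n}$ and using $\sqrt{a+b}\le\sqrt{a}+\sqrt{b}$ splits the variance term of $\mathrm{conf}$ into a multiple of $\sqrt{LM^2\log(2dn^2/\delta)/(\rho n)}$ plus a cross term of order $B_k\log(2dn^2/\delta)/\sqrt{n(n-1)}=O\big(B_k\log(2dn^2/\delta)/n\big)$; adding the deterministic bias term, all $B_k$-type contributions collect into a single term proportional to $B_k\log(2dn^2/\delta)/n$, and using $\rho\le L$ to absorb any residual $M^2$-only terms into $LM^2/\rho$ one checks that the leading constant is at most $8$ and the $B_k$-constant at most $27$, with slack to spare so the inequality is strict — which is exactly $\overline{\mathrm{conf}}(n,\delta)>\mathrm{conf}(i,n,\delta)$. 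The only step needing real care is the union-bound bookkeeping: choosing $\delta_{i,n}$ so that $\sum_{i,n}\delta_{i,n}\le\delta$ while keeping every $\log(1/\delta_{i,n})$ comparable to $\log(2dn^2/\delta)$, since otherwise the extra logarithmic factor would spoil the announced closed form; everything else is routine manipulation of the elementary inequalities above.
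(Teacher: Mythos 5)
Your proposal is correct and follows essentially the same route as the paper: the deterministic bound $\tilde{B}\le B_k$ via $\|\tilde{\beta}^S\|_1\le\sqrt{k}\|\tilde{\beta}^S\|_2\le 2\sqrt{k/\rho}$, and the high-probability uniform bound on $\tilde{V}_{i,n}$ obtained by combining Claim~\ref{cl:evn} with Lemma~\ref{lem:vnbound} and a union bound over $i\in[d]\setminus S$ and $n\ge 2$ at levels $\delta/(2dn^2)$. Your extra care about the floor in $\tilde{V}_{i,n}^+$ and the constant bookkeeping only makes explicit what the paper leaves implicit.
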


\begin{proof}
	Let $\delta \in (0,1)$. Lemma~\ref{lem:vnbound} and Claim~\ref{cl:evn} show that with probability at least $1-\delta$, $\forall i \in [d]\setminus S$, $n\ge 2$:
	\begin{equation*}
	\sqrt{\tilde{V}_{i,n}} \le \sqrt{8\frac{LM^2}{\rho}} + B_k \sqrt{\frac{2 \log\left(2dn^2/\delta\right)}{n-1}}.
	\end{equation*}
	Moreover, recall that: $\tilde{B} = M^2 \norm[1]{\tilde{\beta}^S}_1 + M$. Since $\tilde{\beta}^S \in \mathcal{B}_k\left(0, \frac{2}{\sqrt{\rho}}\right)$, we have: $\norm[1]{\tilde{\beta}^S}_1 \le \sqrt{k} \norm[1]{\tilde{\beta}^S}_2 \le 2 \sqrt{\frac{k}{\rho}}$. Hence, we have almost surely: $\tilde{B} \le B_k$.
	Using the bound on $\tilde{B}$ and on $\tilde{V}_{i,n}$ we obtain the conclusion. 
\end{proof}

Let us denote $\delta_k := 1/(2(k+1)(k+2))$.
At each iteration of OOMP (Algorithm~\ref{algo:oomp}), the procedure \selectproc is called with inputs $\left(S, \delta_k, 1\right)$. Then \selectproc is run following Algorithm~\ref{algo:step} recursively until a condition, implying that at least an additional feature was selected, is verified. Thus, the inputs of the $q^{th}$ call to \selectproc are $\left(S, \delta_k/2^q, 1/4^q\right)$.

\paragraph{Computational complexity bounds:}\mbox{}\\

We define the following key quantities: for $q \ge 1$, for $i \in [d]\setminus S$, let:

\begin{equation}\label{eq:wi}
W_i := \max\left \lbrace \frac{\left|Z_{i^*}^S\right| - \left|Z_{i}^S\right|}{4}; \frac{1 - \mu}{3 - \mu} \left| Z_{i}^S\right| \right \rbrace,
\end{equation}

and 
\begin{equation}\label{eq:nq}
\bar{n}_{i}^{(q)} := \min \left\lbrace n >0: \overline{\text{conf}} \left(n, 2^{-q}{\delta_k}\right) < W_i \right \rbrace,
\end{equation}

where $i^* \in \text{argmax}_{i\in[d]} \left|Z_i^S\right|$.

The following argument proves the existence of $\bar{n}_i^{(q)}$: By assumption $S\subsetneq S^*$, Claim~\ref{lem:evid} shows that $\left| Z_{i^*}^S\right| > 0$, thus $W_1>0$ as well. Definition~\ref{eq:defbconf} shows that $\overline{\text{conf}}(.,\delta)$ is strictly decreasing and converges to $0$ when $n\to \infty$, which guarantees that $\bar{n}_i^{(q)}$ exists.

The technical result below gives an upper bound for $\bar{n}_i^{q}$:

\begin{lemma}\label{le:calcnq}
	Let $i \in [d]\setminus S$ and $\bar{n}_{i}^{(q)}$ be defined by~\eqref{eq:nq}. Let $W_i$ be the quantity defined by \eqref{eq:wi},
	We have:
	\begin{equation*}
	\bar{n}_i^{(q)} \le \kappa \max\left\lbrace \frac{1}{ W_{i}^2} , \frac{\sqrt{k}}{ W_{i}}
	\right\rbrace \log \left( \frac{B_kd2^q}{\delta_k W_{i}}\right),
	\end{equation*}
	
	where $\kappa$ depends only on $L$, $M$ and $\rho$, and $B_k :=  M+ 2M^2 \sqrt{\frac{k}{\rho}}$.

\end{lemma}

\begin{proof}
	By definition of $\bar{n}_i^{(q)}$ we have:
	\begin{equation*}
	\overline{\text{conf}} \left(\bar{n}_i^{(q)}-1, 2^{-q}{\delta_k}\right) \ge W_i.
	\end{equation*}	
	Using Definition~\ref{eq:defbconf} we have:
	\begin{equation*}
	8\sqrt{\frac{LM^2 \log\left(2d(\bar{n}_i^{(q)}-1)^2 2^q/\delta_k\right)}{\rho \left( \bar{n}_i^{(q)} -1\right)}} + \frac{27B_k\log\left(2d(\bar{n}_i^{(q)}-1)^2 2^q/\delta_k\right)}{\bar{n}_i^{(q)}-1} \ge W_i.
	\end{equation*}
	Now, using the fact that $a+b >c \implies \max\{a,b\}> c/2$:
	\begin{align}\label{eq:n/logn}
	\left\{
	\begin{aligned}
	\frac{ \log\left(2d(\bar{n}_i^{(q)}-1)2^q/\delta_k\right)}{\bar{n}_i^{(q)}-1}   &\ge \frac{\rho}{256 LM^2} W_i^2 \\
	&\text{or}\\
	\frac{\log\left(2d(\bar{n}_i^{(q)}-1)2^q/\delta_k\right)}{\bar{n}_i^{(q)}-1} &\ge \frac{1}{54B_k} W_i.\\
	\end{aligned}
	\right.
	\end{align}	
	Now we use Claim~\ref{cl:calpure}:
	\begin{align*}
	\left\{
	\begin{aligned}
	\bar{n}_i^{(q)}-1   &\le \frac{512LM^2}{\rho W_i^2} \log \left( \frac{128LM^2d2^q}{\rho \delta_k W_i^2}\right) \\
	&\text{or}\\
	\bar{n}_i^{(q)}-1 &\le \frac{108B_k}{W_i} \log \left( \frac{27B_kd2^q}{\delta_k W_i}\right). \\
	\end{aligned}
	\right.
	\end{align*}
	Finally, we upper bound $\bar{n}_i^{(q)}$ by the maximum of these bounds.
\end{proof}

For the rest of the proof, we upper bound the complexities of \tryselectproc and \optimproc using $\bar{n}_i^{(q)}$. The lemma below relates the quantities $n_i^{(q)}$ and $\bar{n}_i^{(q)}$.

\begin{lemma}\label{lem:boundnq}
	Under the assumptions of Theorem~\ref{th:main}: 
	\begin{equation*}
	\mathbb{P} \left( \forall q \le p, \forall i \in [d]\setminus S: n_i^{(q)} \le \bar{n}_i^{(q)} +1 \right) \ge 1-3\delta_k.
	\end{equation*}
\end{lemma}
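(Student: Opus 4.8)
The plan is to isolate a single event of probability at least $1-3\delta_k$ on which every recursive call of \selectproc$(S,\cdot,\cdot)$ behaves as designed — the optimization step meets its target accuracy, the empirical correlations concentrate, and $\mathrm{conf}$ is dominated by its deterministic upper bound $\overline{\mathrm{conf}}$ — and then to argue \emph{deterministically} on that event that the $q$-th execution of \tryselectproc must halt after at most $n_q+1$ queries.

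\textbf{The good event.} Recall that the $q$-th recursive call of \selectproc runs with parameters $(S,\delta_k 2^{-q},4^{-q})$: it calls $\optimproc(S,\delta_k2^{-q},4^{-q})$, producing $\tilde{\beta}^S_{(q)}$, and then $\tryselectproc(S,\delta_k2^{-q},\tilde{\beta}^S_{(q)},4^{-q})$. For each $q$ let $\mathcal G_q$ be the event that, whenever the corresponding calls take place: (i) $\mathcal R(\tilde{\beta}^S_{(q)})-\mathcal R(\beta^S)\le 4^{-q}$; (ii) the concentration bound~\eqref{eq:conczi} of Proposition~\ref{prop:conczi} holds for the $q$-th \tryselectproc with $\tilde{\beta}=\tilde{\beta}^S_{(q)}$, $\xi=4^{-q}$, $\delta=\delta_k2^{-q}$; and (iii) the bound $\mathrm{conf}(i,n,\delta_k2^{-q})<\overline{\mathrm{conf}}(n,\delta_k2^{-q})$ of Claim~\ref{cl:bconf} holds for all $i\in[d]\setminus S$, $n\ge 2$. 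By the subroutine framework of the paper, the data consumed by each call is i.i.d.\ conditionally on the history before the call, so: the optimization confidence property~\eqref{eq:condopt} (Proposition~\ref{prop:optim_prop}) makes (i) fail with probability at most $\delta_k2^{-q}$; conditionally on (i), Proposition~\ref{prop:conczi} makes (ii) fail with probability at most $\delta_k2^{-q}$; and Claim~\ref{cl:bconf} (which only uses the automatic bound $\|\tilde{\beta}^S_{(q)}\|_2\le 2/\sqrt\rho$, since \optimproc projects onto that ball) makes (iii) fail with probability at most $\delta_k2^{-q}$. Hence $\mathbb P(\mathcal G_q^c)\le 3\delta_k2^{-q}$, and $\mathcal G:=\bigcap_q\mathcal G_q$ satisfies $\mathbb P(\mathcal G)\ge 1-3\delta_k\sum_{q\ge1}2^{-q}=1-3\delta_k$.

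\textbf{The deterministic argument on $\mathcal G$.} Fix $q\le p$ and work on $\mathcal G$. Since $S\subsetneq S^*$, Claim~\ref{lem:evid} gives $\max_i|Z_i^S|>0$, so $n_q$ is well defined, and we fix $i^\star\in\mathrm{argmax}_{i\in[d]\setminus S}|Z_i^S|$. Put $m:=\max(n_q,2)$, and note $m\le n_q+1$ since $n_q\ge 1$. I claim the $q$-th \tryselectproc halts by iteration-count $m$. If the stopping test $2M\sqrt\xi>\min_i\mathrm{conf}[i]$ of Algorithm~\ref{algo:select_mc} fires at some count $\le m$, the routine returns after at most $m$ queries. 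Otherwise, at count $m$ we have $2M\sqrt{4^{-q}}\le\mathrm{conf}(i,m,\delta_k2^{-q})$ for every $i$, so (on $\mathcal G$) Proposition~\ref{prop:conczi} yields $|\tilde Z^S_{i,m}-Z_i^S|\le\tfrac12\mathrm{conf}(i,m,\delta_k2^{-q})+M\sqrt{4^{-q}}\le\mathrm{conf}(i,m,\delta_k2^{-q})$ for all $i\in[d]\setminus S$. Writing $i^*$ for the empirical $\mathrm{argmax}$ of $|\tilde Z^S_{\cdot,m}|+\mathrm{conf}(\cdot,m,\delta_k2^{-q})$ over $[d]\setminus S$, we get
\[
|\tilde Z^S_{i^*,m}|+\mathrm{conf}(i^*,m,\delta_k2^{-q})\ \ge\ |\tilde Z^S_{i^\star,m}|+\mathrm{conf}(i^\star,m,\delta_k2^{-q})\ \ge\ |Z_{i^\star}^S|=\max_i|Z_i^S|,
\]
while Claim~\ref{cl:bconf}, the (strict) decrease of $\overline{\mathrm{conf}}(\cdot,\delta)$ together with $m\ge n_q$, and the defining property~\eqref{eq:nq} of $n_q$ give
\[
\mathrm{conf}(i^*,m,\delta_k2^{-q})\ \le\ \overline{\mathrm{conf}}(m,\delta_k2^{-q})\ \le\ \overline{\mathrm{conf}}(n_q,\delta_k2^{-q})\ <\ \tfrac{1-\mu_{S^*}}{2}\max_i|Z_i^S|.
\]
Substituting the first display into the second and rearranging gives $|\tilde Z^S_{i^*,m}|>\tfrac{1+\mu_{S^*}}{1-\mu_{S^*}}\mathrm{conf}(i^*,m,\delta_k2^{-q})$, i.e.\ the success test of Algorithm~\ref{algo:select_mc} fires at count $m$ and the routine stops. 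In either case $n^{(q)}\le m\le n_q+1$; since this holds for every $q\le p$ on $\mathcal G$, the lemma follows.

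\textbf{Main obstacle.} The substantive content is the deterministic Step above; the rest is bookkeeping. The two points needing care are: (a) making the conditioning rigorous, so that the i.i.d.\ data guarantees of \optimproc and \tryselectproc (and hence Propositions~\ref{prop:optim_prop}--\ref{prop:conczi} and Claim~\ref{cl:bconf}) may be applied conditionally on the history at each recursive call and the union bound over $q$ is legitimate — this is exactly the role of the stopping-time setup introduced for subroutines; and (b) the boundary case $n_q\le 2$, which is precisely why the bound reads $n_q+1$ rather than $n_q$: when $n_q\le 2$ the quantity $\mathrm{conf}$ is infinite at the very first iteration, so the earliest count at which the reasoning above can be run is $2=\max(n_q,2)$.
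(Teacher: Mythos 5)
Your proof is correct and follows essentially the same route as the paper's: the same three probabilistic ingredients (the optimization confidence property, Proposition~\ref{prop:conczi}, and Claim~\ref{cl:bconf}), each charged $\delta_k 2^{-q}$ and summed over $q$ to give $3\delta_k$, and the same chain of inequalities relating the empirical argmax to the population argmax and then invoking the monotonicity of $\overline{\mathrm{conf}}$ together with the definition of $n_q$. The only difference is presentational — you argue forward (``by iteration $\max(n_q,2)$ the success test must fire'') where the paper argues from the failed test at iteration $n^{(q)}-1$ — and your explicit handling of the empirical-versus-population argmax and of the $n=1$ boundary case (the source of the $+1$) is if anything slightly more careful than the paper's.
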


\begin{proof}
	
	Let us fix $i \in [d]\setminus S$ and $q\in [p]$. We consider the iteration $n = n_i^{(q)}-1$ during the $q$-th call of \tryselectproc, and let $L$ denote the active set of features for this iteration.
	
	Let $\hat{i} \in \text{argmax}_{j \in L} \left\lbrace\left|\tilde{Z}_{j, n}\right| + \text{ conf}(j,n, {\delta_k}{2^{-q}} )\right\rbrace$. We have by design of Algorithm~\ref{algo:select_mc} (since $n < n_i^{(q)}$):
	\begin{equation*}
	\frac{2}{1-\mu} \text{conf} \left( \hat{i}, n, 2^{-q}{\delta_k} \right) > \left| \tilde{Z}_{\hat{i}, n}^S\right|,
	\end{equation*}
	hence:
	\begin{equation*}
	\frac{3-\mu}{1-\mu} \text{conf} \left( \hat{i}, n, 2^{-q}{\delta_k} \right) > \left| \tilde{Z}_{\hat{i}, n}^S\right|+\text{conf} \left( \hat{i}, n, 2^{-q}{\delta_k} \right).
	\end{equation*}
	We therefore have (by definition of $\hat{i}$):
	\begin{equation}\label{eq:bound}
	\frac{3-\mu}{1-\mu} \text{conf} \left( \hat{i},n, 2^{-q}{\delta_k} \right) > \left| \tilde{Z}_{i, n}^S\right|+\text{conf} \left( i, n, 2^{-q}{\delta_k} \right).
	\end{equation}
	As in the proof of Lemma~\ref{global-lem}, let us denote $B_{k,q}$
	the event ``the $q$-th call to \optimproc  during
	the $k$-th selection iteration, if it took place, returned $\tilde{\beta}^S$ such that~\eqref{eq:condopt} holds'' and recall that the
	optimization confidence property guarantees $\prob{B_{k,q}^c}\leq \delta_k 2^{-q}$. 
	Provided this control holds, recall that Proposition~\ref{prop:conczi} shows that
	\begin{equation} \label{eq:probprop4}
	\mathbb{P} \left(\forall m\geq 2, \forall j \in [d], \left| \tilde{Z}_{j, m}^S - Z_j^S\right|\leq \frac{1}{2}\text{conf} \left( j, m, 2^{-q}{\delta_k} \right) + M  2^{-q}  \Big| B_{k,q} \right) \ge 1-\delta_k2^{-q}.
	\end{equation}
	Let us denote by $A_{k,q}$ the event:
	\begin{equation}\label{eq:eventA}
	\forall m\ge 2, \forall j \in [d]\setminus S: \quad \left| \tilde{Z}_{j, m}^S - Z_j^S\right| \leq \text{conf} \left( j, m, 2^{-q}{\delta_k} \right)
	\end{equation}	
	Recall that at iteration $n$, we must have:
	\begin{equation*}
	\forall i \in [d]\setminus S: \qquad  \mathrm{conf}(i,n,2^{-q}\delta_k) \geq 2M2^{-q},
	\end{equation*}
	thus~\eqref{eq:probprop4} implies
	\begin{equation}\label{eq:probbis}
	\mathbb{P} \left( A_{k,q} \Big| B_{k,q}\right) \ge 1-\delta_k2^{-q},
	\end{equation}
	Using~\eqref{eq:bound}, we have:
	\begin{equation}\label{eq:boundhati}
	\mathbb{P} \left(\frac{3-\mu}{1-\mu} \text{conf} \left( \hat{i}, n, 2^{-q}{\delta_k} \right) > \left| Z_{i}^S \right|
	\Big| B_{k,q} \right) \ge 1-\delta_k 2^{-q}.
	\end{equation}
	Using Claim~\ref{cl:bconf}, it holds:
	\begin{equation}\label{eq:claim6}
	\mathbb{P} \left( \forall m \ge 2, \forall i \in [d]\setminus S: \overline{\text{conf}} \left( m, {\delta_k}{2^{-q}}\right) > \text{conf} \left( i, m, {\delta_k}{2^{-q}} \right) \right) \ge 1 - \delta_k 2^{-q},
	\end{equation}
	therefore, \eqref{eq:boundhati} gives:
	\begin{equation}\label{eq:boundf1}
	\mathbb{P} \left( \overline{\text{conf}} \left(n, 2^{-q}{\delta_k} \right) > \frac{1-\mu}{3-\mu}\left| Z_{i}^S \right|
	\Big| B_{k,q} \right) \ge 1-\delta_k 2^{-q}.
	\end{equation}
	Let $i^* \in \text{argmax}_{j \in [d]\setminus S} \left|Z_j^S\right|$. Suppose that event $A_{k,q}$ is true. Let us show that $i^* \in  L$. In fact, if $i^* \notin L$, we have by design of the procedure \tryselectproc: $\exists m <n$ and $\exists j \in [d]\setminus S$ such that:
	\begin{equation*}
	\left|\tilde{Z}^S_{i^*, m}\right| + \text{ conf}(i^*, m, {\delta_k}{2^{-q}}) < \left|\tilde{Z}^S_{j, m}\right| - \text{ conf}(j, m, {\delta_k}{2^{-q}})
	\end{equation*}
	By definition of event $A_{k,q}$ in \eqref{eq:eventA}. We conclude that:
	\begin{equation*}
	\left|Z_{i^*}^S\right| < \left| Z^S_j\right|,
	\end{equation*}
	which contradicts the definition of $i^*$. We therefore have: if $A_{k,q}$ is true then $i^* \in L$.
	
	Moreover, by design of \tryselectproc:
	\begin{align*}
	\left| \tilde{Z}_{i, n}^S\right| + \text{conf} \left( i, n, {\delta_k}{2^{-q}} \right) &\ge  \left| \tilde{Z}_{\hat{i}, n}^S \right| - \text{conf} \left( \hat{i}, n, {\delta_k}{2^{-q}} \right)  \\
	&=  \left| \tilde{Z}_{\hat{i}, n}^S \right|+ \text{conf} \left( \hat{i}, n, {\delta_k}{2^{-q}} \right) - 2\text{conf} \left( \hat{i}, n, {\delta_k}{2^{-q}} \right)  \\
	&\ge \left| \tilde{Z}_{i^*, n}^S \right|+ \text{conf} \left( i^*, n, {\delta_k}{2^{-q}} \right)- 2\text{conf} \left( \hat{i}, n, {\delta_k}{2^{-q}} \right)  
	\end{align*}	
	Therefore:
	\begin{equation*}
	\left| \tilde{Z}_{i, n}^S\right| - \text{conf} \left( i, n, {\delta_k}{2^{-q}} \right) + 2\text{conf} \left( i, n, {\delta_k}{2^{-q}} \right) \ge \left| \tilde{Z}_{i^*, n}^S \right|+ \text{conf} \left( i^*, n, {\delta_k}{2^{-q}} \right)- 2\text{conf} \left( \hat{i}, n, {\delta_k}{2^{-q}} \right).
	\end{equation*}	
	Since event $A_{k,q}$ is true, we upper bound the quantity : $\left| \tilde{Z}_{i, n}^S\right| - \text{conf} \left( i, n, {\delta_k}{2^{-q}} \right) $, and lower bound the quantity: $\left| \tilde{Z}_{i^*, n}^S \right|+ \text{conf} \left( i^*, n, {\delta_k}{2^{-q}} \right) $. We obtain:
	
	\begin{equation*}
	\left| Z_{i}^S \right| + 2\text{conf} \left( i, n, {\delta_k}{2^{-q}} \right) \ge \left| Z_{i^*}^S \right| - 2\text{conf} \left( \hat{i}, n, {\delta_k}{2^{-q}} \right).
	\end{equation*}
	As a conclusion, we have:
	\begin{equation*}
	\mathbb{P} \left(\left| Z_{i}^S \right| + 2\text{conf} \left( i, n, {\delta_k}{2^{-q}} \right) \ge \left| Z_{i^*}^S \right| - 2\text{conf} \left( \hat{i}, n, {\delta_k}{2^{-q}} \right)
	\Big| B_{k,q} \right) \ge 1-\delta_k 2^{-q},
	\end{equation*}	
	which leads to: 
	\begin{equation*}
	\mathbb{P} \left( 2\text{conf} \left( i, n, {\delta_k}{2^{-q}} \right) + 2\text{conf} \left( \hat{i}, n, {\delta_k}{2^{-q}} \right) \ge \left| Z_{i^*}^S \right| - \left| Z_{i}^S \right|
	\Big| B_{k,q} \right) \ge 1-\delta_k 2^{-q}.
	\end{equation*}
	Finally, we use \eqref{eq:claim6} to upper bound $\text{conf} \left( i, ., . \right)$ and $\text{conf} \left( \hat{i}, ., . \right)$ using $\overline{\text{conf}}(.)$:
	\begin{equation}\label{eq:boundf2}
	\mathbb{P} \left( 4\overline{\text{conf}} \left( n, {\delta_k}{2^{-q}} \right) \ge \left| Z_{i^*}^S \right| - \left| Z_{i}^S \right|
	\Big| B_{k,q} \right) \ge 1-\delta_k 2^{-q}.
	\end{equation}	
	We obtain, using \eqref{eq:boundf2} and \eqref{eq:boundf1}:
	\begin{equation}\label{eq:boundf3}
	\mathbb{P} \left( \overline{\text{conf}} \left( n, {\delta_k}{2^{-q}} \right) \ge W_i
	\Big| B_{k,q} \right) \ge 1-\delta_k 2^{-q};
	\end{equation}	
	furthermore by definition of $\bar{n}_i^{(q)}$ (see~\eqref{eq:nq}):
	\begin{equation}\label{eq:boundf4}
	\overline{\text{conf}} \left( \bar{n}_i^{(q)}, {\delta_k}{2^{-q}}\right) \le W_i.
	\end{equation}
	Using inequalities \eqref{eq:boundf3}-\eqref{eq:boundf4}, we have:
	
	\begin{equation*}
	\mathbb{P} \left( \overline{\text{conf}} \left( n^{(q)}_i - 1, {\delta_k}{2^{-q}}\right) \ge \overline{\text{conf}} \left( \bar{n}_i^{(q)}, {\delta_k}{2^{-q}}\right)\Big| B_{k,q} \right) \ge 1-2\delta_k 2^{-q}.
	\end{equation*}
	Denoting $D_{k,q}$ the event appearing above, we use
	$\prob{D_{k,q}^c} \leq \prob{D_{k,q}^c \cap B_{k,q}}  +
	\prob{B_{k,q}^c} \leq \prob{D_{k,q}^c | B_{k,q}}  +
	\prob{B_{k,q}^c} \leq 2\delta_k2^{-q}$ together with a union bound
	over $q\geq 1$ to get
	\[
	\mathbb{P} \left( \forall q \leq p: \overline{\text{conf}} \left( n_i^{(q)}-1, {\delta_k}{2^{-q}}\right) \ge \overline{\text{conf}} \left( \bar{n}_i^{(q)}, {\delta_k}{2^{-q}}\right)\right) \ge 1-3\delta_k.
	\]
	The result follows from the fact that the function $n \to \overline{\text{conf}}(n,\delta)$ is decreasing for all $\delta \in (0,1)$. 
\end{proof} 

In order to get an upper bound for the computational complexity of \selectproc, we now develop a
high probability bound on $p$ (the total number of calls of \tryselectproc and \optimproc during one call of $\selectproc\left(S, \delta_k, 1\right)$).

\begin{lemma}\label{lem:boundp}
	Suppose $p\ge 2$. Under the assumptions of Theorem~\ref{th:main}, $p$ satisfies the following inequality: 
	\begin{equation*}
	\mathbb{P} \left( 2^p \leq \kappa \max\left\lbrace \frac{1}{W_{i^*}}; \sqrt{\frac{B_k}{W_{i^*}}} \right \rbrace\right) \ge 1-3\delta_k,
	\end{equation*}
	where $\kappa$ only depends on $(\rho,L,M)$.
\end{lemma}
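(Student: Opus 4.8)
The plan is to bound $p$, the number of recursive calls to \selectproc during one execution of $\selectproc(S,\delta_k,1)$, by observing that the precision parameter fed to \tryselectproc halves (in $\sqrt{\xi}$) at each call, so that after enough calls the stopping test $2M\sqrt{\xi} > \mathrm{conf}$ that triggers a failure can no longer be satisfied, forcing a successful selection. Concretely, at the $q$-th recursive call \selectproc is invoked with $\xi_q = 4^{-q}$, so the failure condition checked in Algorithms~\ref{algo:select_mc}/\ref{algo:select_db} requires $2M\,2^{-q} > \mathrm{conf}(i^*,n,2^{-q}\delta_k)$ for the relevant candidate at the iteration where \tryselectproc gives up. First I would argue that if $p$ is the total number of calls, then the $(p-1)$-th call returned $\tt Success=False$, which (using~\eqref{sys:tau_ds} or~\eqref{sys:tau_db}, exactly as in the proof of Lemma~\ref{lem:tau}) yields that at its final iteration, for the index $i^*$,
\[
\frac{1+\mu_{S^*}}{1-\mu_{S^*}}\,\mathrm{conf}\paren[1]{i^*, n_{p-1}-1, 2^{-(p-1)}\delta_k} > \abs[1]{\tilde Z^S_{i^*,n_{p-1}-1}},
\]
together with $2M\,2^{-(p-1)} > \min_i \mathrm{conf}(i,n_{p-1},2^{-(p-1)}\delta_k)$ (resp.\ the single-arm version in the database case). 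Combining these two as in the derivation of~\eqref{eq:bound}–\eqref{eq:bound3} gives, on the good event where Proposition~\ref{prop:conczi} applies (which it does with probability $\geq 1-\delta_k 2^{-(p-1)}$ conditional on the relevant $B_{k,p-1}$),
\[
\frac{2}{1-\mu_{S^*}}\,\mathrm{conf}\paren[1]{i^*,n_{p-1}-1,2^{-(p-1)}\delta_k} > \abs[1]{Z^S_{i^*}}.
\]

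Next I would feed into this the fact that, at the iteration where \tryselectproc declared failure, the first inequality of~\eqref{sys:tau_ds}/\eqref{sys:tau_db} forces $\mathrm{conf}(i^*,n_{p-1}-1,\cdot)$ itself to be at most of order $2M\,2^{-(p-1)}$ — more precisely, since $\mathrm{conf}$ is decreasing in $n$ and the sample size at the final iteration is $n_{p-1}$, and the Bernstein term alone gives $2M 2^{-(p-1)} \gtrsim \tilde B \log(\cdots)/(n_{p-1}-1)$, one gets a lower bound $n_{p-1}\gtrsim 2^{p-1}$; plugging this back into the (thresholded) variance part of $\mathrm{conf}$ and using $\tilde V^+ \le \kappa$ a.s.\ (Claim~\ref{claim:bound}, as in Lemma~\ref{lem:tau}), one obtains
\[
\mathrm{conf}\paren[1]{i^*,n_{p-1}-1,2^{-(p-1)}\delta_k} \;\leq\; \kappa\paren[2]{ \sqrt{\frac{1}{2^{p-1}}} + \frac{B_k}{2^{p-1}}},
\]
up to the log factor $\log(2dn_{p-1}^2 2^{p-1}/\delta_k)$, which — because $n_{p-1}$ is already controlled by $2^{p-1}$ times a polynomial in the instance parameters — can be absorbed into $\kappa$ and a further factor that is itself polynomial; one then invokes Claim~\ref{cl:calpure} to swallow the remaining self-referential $\log(2^{p-1})$ term. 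Combining this upper bound on $\mathrm{conf}$ with the lower bound $\mathrm{conf}(i^*,n_{p-1}-1,\cdot) > \tfrac{1-\mu_{S^*}}{2}\abs{Z^S_{i^*}}$ from the previous display yields
\[
\frac{1-\mu_{S^*}}{2}\abs[1]{Z^S_{i^*}} \;<\; \kappa\paren[2]{ 2^{-(p-1)/2} + B_k\, 2^{-(p-1)} },
\]
and solving for $2^p$ (using $a+b \le 2\max(a,b)$ and $\sqrt{t}\le\max(t,1)$ to split into the two regimes) gives exactly
\[
2^p \leq \kappa \max\set[2]{ \frac{1}{(1-\mu_{S^*})\abs{Z^S_{i^*}}};\ \sqrt{\frac{B_k}{(1-\mu_{S^*})\abs{Z^S_{i^*}}}} }.
\]

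Finally I would assemble the probability bookkeeping: the two bad events used are the failure of Proposition~\ref{prop:conczi} at the $(p-1)$-th call and the failure of the \optimproc confidence property $B_{k,p-1}^c$, each of probability at most $\delta_k 2^{-(p-1)}$, plus (if one wants the bound uniformly rather than just at the last call, which is cleaner) a union bound over $q\ge1$ of these, contributing $\le 2\sum_{q\ge1}\delta_k 2^{-q} = 2\delta_k$; together with a possible extra $\delta_k$ from the event of Claim~\ref{cl:bconf} if $\overline{\mathrm{conf}}$ is used in place of $\mathrm{conf}$, this gives the stated $1-3\delta_k$. The main obstacle I anticipate is the bookkeeping of the logarithmic factors in $\mathrm{conf}$ — showing that $\log(2dn_{p-1}^2 2^{p-1}/\delta_k)$ can be absorbed into the constant $\kappa$ (which is only allowed to depend on $\rho,L,M$) without leaving residual dependence on $d$, $\delta_k$, or $k$. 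This is handled precisely by first getting the crude lower bound $n_{p-1}\gtrsim 2^{p-1}$ from the Bernstein term, then using Claim~\ref{cl:calpure} to close the resulting $x/\log x$-type inequality, exactly the device already used in Lemma~\ref{le:calcnq}; the statement's allowance of a $B_k$ (hence implicit $\sqrt{k}$) in the second branch of the max is what makes this go through.
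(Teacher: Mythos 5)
There is a genuine quantitative gap in your final step. Your chain combines the lower bound $\mathrm{conf}(i^*,n^{(p-1)}-1,\cdot) > \tfrac{1-\mu_{S^*}}{2}\abs{Z^S_{i^*}}$ with the upper bound $\mathrm{conf}(i^*,n^{(p-1)}-1,\cdot)\le \kappa\paren[1]{2^{-(p-1)/2}+B_k 2^{-(p-1)}}$, the latter obtained by first deriving $n^{(p-1)}\gtrsim 2^{p-1}$ from the \emph{Bernstein} ($1/n$) part of the failure test. But solving $\tfrac{1-\mu_{S^*}}{2}\abs{Z^S_{i^*}} \le 2\kappa\max\set[1]{2^{-(p-1)/2},\, B_k2^{-(p-1)}}$ for $2^p$ does \emph{not} give the stated bound: the first branch yields $2^{p}\lesssim (1-\mu_{S^*})^{-2}\abs{Z^S_{i^*}}^{-2}$ and the second $2^p \lesssim B_k(1-\mu_{S^*})^{-1}\abs{Z^S_{i^*}}^{-1}$, i.e.\ exactly the \emph{square} of the claimed quantity $\max\set[1]{(1-\mu_{S^*})^{-1}\abs{Z^S_{i^*}}^{-1},\,\sqrt{B_k(1-\mu_{S^*})^{-1}\abs{Z^S_{i^*}}^{-1}}}$. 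Since $2^{2p}$ enters the \optimproc complexity bound multiplicatively in the proof of Theorem~\ref{th:main}, this loss is not cosmetic, and your assertion that the final solve ``gives exactly'' the stated max is an algebra error.

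The missing ingredient is the variance floor in the definition of $\tilde V^+_{i,n}$. The paper's proof uses the failure test $2M\,2^{-(p-2)} > \mathrm{conf}(i,n^{(p-1)},\cdot) \ge \sqrt{8\tilde V^+_{i,n}\log(\cdots)/(n^{(p-1)}-1)}$ together with the lower bound $\tilde V^+_{i,n}\ge \tfrac{1}{1000}\tfrac{LM^2}{\rho}$ to obtain $4^{-p}\gtrsim \log(\cdots)/n^{(p-1)}$, hence $4^{p}\lesssim n^{(p-1)}/\log(\cdots)$; it then invokes Lemma~\ref{lem:boundnq} (so that $n^{(p-1)}\le n_{p-1}+1$ with probability at least $1-3\delta_k$, which is where the probability budget comes from) and the characterization~\eqref{eq:n/logn} of $n_{p-1}$, and only takes a square root at the very end. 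Your Bernstein-term route gives only $2^p\lesssim n^{(p-1)}/\log(\cdots)$, which is precisely where the square root is lost. The remaining bookkeeping in your proposal (the concentration step via Proposition~\ref{prop:conczi}, the conditioning on $B_{k,q}$, the union bound) is consistent with the paper's, but the quantitative conclusion as stated does not follow from your displayed inequalities; to repair the argument you must extract the $4^p$ versus $n^{(p-1)}$ relation from the variance term of $\mathrm{conf}$ rather than from the Bernstein term.
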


\begin{proof}
	By definition of $p$, the procedure \tryselectproc returns $\tt Success = False$ in its call number $p-1$. Then (see Algorithm~\ref{algo:select_mc}) $\exists i \in [d]\setminus S$ such that:
	\begin{equation*}
	2M\sqrt{\frac{1}{4^{p-2}}}  > \text{conf}\left( i, n_i^{(p-1)}, \frac{\delta_k}{2^{p-2}}\right).
	\end{equation*}
	Using Definition~\ref{eq:defconf} for $\text{conf}$, we deduce:	
	\begin{equation*}
	2M \sqrt{\frac{1}{4^{p-2}}}  > \sqrt{\frac{8 \tilde{V}_{i,n_i^{(p-1)}}^+ \log\left( 2^{p-1}d(n_i^{(p-1)}-1)^2/\delta_k\right)}{n_i^{(p-1)}-1}}.
	\end{equation*}
	Recall that by definition of $\tilde{V}_{i,n_i}^+$, it holds 
	\begin{equation*}
	\tilde{V}_{i,n_i}^{+} \ge \frac{1}{10^3} \frac{LM^2}{\rho},
	\end{equation*}
	therefore	
	\begin{equation*}
	2M \frac{1}{2^{p-2}}  > \frac{1}{11}\sqrt{ \frac{LM^2}{\rho \left(n_i^{(p-1)}-1\right)} \log\left( 2^{p-1} d\left(n_i^{(p-1)}- 1\right)^2/\delta_k\right)},
	\end{equation*}
	and finally	
	\begin{equation*}
	2^{p}  \leq c \sqrt{\frac{\rho( n_i^{(p-1)}-1)}{L\log\left( 2^pd(n_i^{(p-1)}-1)/\delta_k\right)}},
	\end{equation*}
	for $c$ an absolute numerical constant.
	
	Using Lemma~\ref{lem:boundnq} along with the fact that the function $n\to n/\log(an)$ is non-decreasing for $a>1$, we have:
	\begin{equation*}
	\mathbb{P} \left( 2^p \leq c \sqrt{\frac{\rho \bar{n}_i^{(p-1)}}{L\log\left( 2^pd\bar{n}_i^{(p-1)}/\delta_k\right)}}\right) \ge 1-3\delta_k.
	\end{equation*}
	Recall from~\eqref{eq:n/logn} that there is a numerical constant $c'$ such that:
	\begin{equation*}
	\frac{ \log\left(d(\bar{n}_i^{(p-1)}-1)2^q/\delta_k\right)}{\bar{n}_i^{(p-1)}-1}   \ge c' \max\left\lbrace \frac{\rho}{LM^2} W_i^2; \frac{1}{B_k} W_i \right\rbrace .
	\end{equation*}	
	Finally, it is elementary to check that $\forall x \in [0, \left|Z_{i^*}^S\right|]$:
	\begin{align*}
	\max\left\lbrace \frac{1}{4}\left( \left|Z_{i^*}^S\right| - x \right), \frac{1-\mu}{3 - \mu} x \right \rbrace &\ge \frac{3-\mu}{7 - 5\mu} \left|Z_{i^*}^S\right| \\
	&\ge \frac{2}{7} W_{i^*}.
	\end{align*}
	Hence, taking $x=\abs{Z_i^S}$ above, we get $W_i \ge \frac{2}{7} W_{i^*}$. As a conclusion, there exists a constant $\kappa$ depending only on $\rho, L$ and $M$ such that:
	\begin{equation*}
	\mathbb{P} \left( 2^p \leq \kappa \max\left\lbrace \frac{1}{W_{i^*}}; \sqrt{\frac{B_k}{W_{i^*}}} \right \rbrace\right) \ge 1-3\delta_k.
	\end{equation*}

\end{proof}

Recall that we have: $C_{\tryselectproc} \lesssim \sum_{q=1}^{p} \sum_{i \in [d]\setminus S}     n_i^{(q)}$ (Lemma~\ref{lem:upc}). Therefore, using Lemmas~\ref{le:calcnq}, \ref{lem:boundnq} and~\ref{lem:boundp} above, we have with probability at least $1 - 3 \delta_k$:
\begin{align*}
C_{\tryselectproc} &\lesssim \sum_{q=1}^{p} \sum_{i\in [d]\setminus S} n_i^{(q)}\\ 
&\lesssim \sum_{q=1}^{p} \sum_{i\in [d]\setminus S} \bar{n}_i^{(q)} \\
&\le \sum_{q=1}^{p} \sum_{i\in [d]\setminus S} \kappa \max\left\lbrace \frac{1}{W_i^2} , \frac{\sqrt{k}}{W_i}
\right\rbrace \log \left( \frac{B_kd2^q}{\delta_k W_i}\right)\\
&\le p\kappa \sum_{i\in [d]\setminus S} \max\left\lbrace \frac{1}{W_i^2} , \frac{\sqrt{k}}{W_i}
\right\rbrace \log \left( \frac{B_kd2^{p}}{\delta_k W_i}\right).
\end{align*}
In particular, Lemma~\ref{lem:boundp} shows that:
\begin{equation*}
\mathbb{P} \left( 2^p \lesssim  \max\left\lbrace \frac{1}{W_{i^*}}; \sqrt{\frac{B_k}{W_{i^*}}} \right \rbrace\right) \ge 1-3\delta_k.
\end{equation*}
Hence, with probability at least $1 - 3\delta_k$:
\begin{equation*}
\log(2^p) \le \kappa \log\left( \frac{k}{W_{i^*}}\right) .
\end{equation*}
We conclude after some elementary bounding that, with probability at least $1-6\delta_k$:
\begin{equation*}
C_{\tryselectproc}  \leq \kappa  \sum_{i \in [d]\setminus S} \max\left\lbrace \frac{1}{W_i^2} ; \frac{\sqrt{k}}{W_i} \right \rbrace \log \left( \frac{d}{ \delta_k W_{i^*}}\right)\log \left( \frac{k}{ W_{i^*}}\right),
\end{equation*}
where $\kappa$ is a constant depending only on $L,\rho$ and $M$.

Moreover, since the inputs of \optimproc at its $q^{th}$ call when executing $\selectproc\left(S, \delta_k, 1\right)$ are: $\left(S, \delta_k/ 2^{q}, 1/4^{q}\right)$. Hence, (by design of Algorithm~\ref{alg:optim}) we have:
\begin{equation} \label{eq:mqds}
m^{(q)} \le \kappa  k^2 4^q \log\left( \frac{2^q}{\delta_k}\right),
\end{equation} 
where $\kappa$ depends on $L$, $M$, and $\rho$. We therefore have:
\begin{align*}
C_{\optimproc} &\lesssim \sum_{q=1}^{p} k m^{(q)} \\
&\le \sum_{q=1}^{p} \kappa  k^3 2^{2q} \log\left( \frac{2^q}{\delta_k}\right) \\
&\le  \kappa k^3 2^{2(p+1)} \log\left(\frac{2^p}{\delta_k}\right).
\end{align*}
We conclude applying Lemma~\ref{lem:boundp}: with probability at least $1 - 3\delta_k$,
\begin{equation*}
C_{\optimproc} \le \kappa  k^3 \max\left\lbrace \frac{1}{W_{i^*}^2} , \frac{\sqrt{k}}{W_{i^*}} \right\rbrace \log \left( \frac{k}{ \delta_k W_{i^*}}\right),
\end{equation*}
%
%
where $\kappa$ is a factor depending only on $L$, $M$ and $\rho$.

\section{Lower bound on the scores $Z_i^S$:}\label{sec:pr_low_bound}

Let us denote $(\beta^{S^*}_{(i)})_i$ the reordered coefficients of $\beta^{S^*}$: $|\beta^{S^*}_{(1)}|\ge...\ge|\beta^{S^*}_{(s^*)}|$. Lemma~\ref{lem:ord} provides a lower bound for $ \max_{i \in [d] \setminus S} \left| Z_i^S\right|$. 

\begin{lemma}\label{lem:ord}
	Suppose Assumptions~\ref{ass:ass4},~\ref{ass:ass3},~\ref{ass:ass1} and~\ref{ass:ass2} hold. Assume that  $S \subsetneq S^{*}$ and denote $k:=|S|$, we have:
	\begin{equation*}
	\max_{i \in [d] \setminus S} \left| Z_i^S\right| \ge \sqrt{\frac{\rho^{3}}{L}} \frac{1}{\sqrt{s^* - k}} \norm{\beta^{S^*} - \beta^S}_2 \ge \sqrt{\frac{\rho^{3}}{L}} \frac{1}{\sqrt{s^* - k}} \norm{\beta^{S^*}_{S^* \setminus S}}_2.                  
	\end{equation*}
	
\end{lemma}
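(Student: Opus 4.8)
My plan is to exploit the identity established in the proof of Lemma~\ref{lem:ineqzi}, namely that for $S\subseteq S^*$,
\[
\max_{i\in S^*}\abs{Z_i^S} = \norm{\Sigma_{S^*}\paren{\beta^{S^*}_{S^*}-\beta^S_{S^*}}}_\infty,
\]
and to lower-bound the right-hand side. Write $w := \beta^{S^*}_{S^*}-\beta^S_{S^*}\in\mathbb{R}^{s^*}$, a vector supported (as an element of $\mathbb{R}^{s^*}$) and nonzero whenever $S\subsetneq S^*$ — indeed its coordinates on $S^*\setminus S$ equal those of $\beta^{S^*}$ there (since $\beta^S$ is supported on $S$), and these are nonzero by definition of $S^*$. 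The chain I want is
\[
\max_{i\in[d]\setminus S}\abs{Z_i^S}\ \ge\ \max_{i\in S^*\setminus S}\abs{Z_i^S}\ =\ \norm{\Sigma_{S^*}w}_\infty\ \ge\ \frac{1}{\sqrt{s^*-k}}\,\norm{\Sigma_{S^*}w}_2\ \ge\ \frac{\Lambda^{\min}_{S^*}}{\sqrt{s^*-k}}\,\norm{w}_2,
\]
where the first equality is from Lemma~\ref{lem:ineqzi}'s proof, the $\ell_\infty$-to-$\ell_2$ step uses that $\Sigma_{S^*}w$ has at most $s^*-k$ nonzero-relevant coordinates — or, more safely, just $\norm{v}_\infty\geq \norm{v}_2/\sqrt{s^*}$; I need to be slightly careful here, see below — and the last step is the RIP lower eigenvalue bound from Assumption~\ref{ass:ass1}. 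Then I still need to convert $\Lambda^{\min}_{S^*}\geq\rho$ into the claimed $\sqrt{\rho^3/L}$; that factor $\sqrt{\rho/L}\le 1$ must come from somewhere, so the bound as stated is actually weaker than $\rho/\sqrt{s^*-k}\norm{w}_2$, and I expect a cruder but more robust argument is intended.

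The cleaner route that naturally produces $\sqrt{\rho^3/L}$ is this: start from $\max_{i\in S^*\setminus S}\abs{Z_i^S}=\norm{\Sigma_{S^*}w}_\infty$, and bound $\norm{\Sigma_{S^*}w}_\infty \ge \frac{1}{\sqrt{s^*}}\norm{\Sigma_{S^*}w}_2$. Since $\norm{\Sigma_{S^*}w}_2^2 = w^t\Sigma_{S^*}^2 w \ge \Lambda^{\min}_{S^*}\, w^t\Sigma_{S^*} w \ge \rho\, w^t\Sigma_{S^*}w$, and also $w^t\Sigma_{S^*}w\ge \rho\norm{w}_2^2$, one gets $\norm{\Sigma_{S^*}w}_2\ge \rho\norm{w}_2$; combined this gives $\rho/\sqrt{s^*}$, still not matching. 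The appearance of $L$ strongly suggests the $\ell_\infty$ bound is replaced by something using the energy $w^t\Sigma_{S^*}w$ directly: from $\mathcal{R}(\beta^S)-\mathcal{R}(\beta^{S^*}) = w^t\Sigma_{S^*}w$, and noting that $\beta^S$ minimizes $\mathcal{R}$ over $\mathrm{supp}\subseteq S$, a coordinate-wise optimality / greedy-gain argument shows that for some $i\in S^*\setminus S$, adding $i$ decreases the risk by at least $\frac{1}{s^*-k}\,w^t\Sigma_{S^*}w$, and this decrease is comparable to $\abs{Z_i^S}^2/\Lambda^{\max}$; since $\Lambda^{\max}_{S^*}\le L$ this yields $\abs{Z_i^S}^2\ge \frac{\rho}{L(s^*-k)}\,w^t\Sigma_{S^*}w\ge \frac{\rho^2}{L(s^*-k)}\norm{w}_2^2$, i.e. exactly $\abs{Z_i^S}\ge\sqrt{\rho^3/L}\cdot\norm{w}_2/\sqrt{s^*-k}$ after one more factor — so I will build the proof around the greedy single-step gain bound $\max_{i\in S^*\setminus S}\abs{Z_i^S}^2 \gtrsim \frac{1}{s^*-k}\cdot(\text{risk gap})$.

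Concretely the steps are: (1) recall $\mathcal{R}(\beta^S)-\mathcal{R}(\beta^{S^*}) = w^t\Sigma_{S^*}w$ and that $w^t\Sigma_{S^*}w\ge\rho\norm{w}_2^2\ge\rho\norm{\beta^{S^*}_{S^*\setminus S}}_2^2$ (giving the second inequality of the lemma); (2) for each $i\notin S$, quantify the risk decrease of the best one-dimensional correction along coordinate $i$ as $\abs{Z_i^S}^2/(\Sigma_{S^*})_{ii}\ge\abs{Z_i^S}^2/L$ — this uses $\partial_i\mathcal{R}(\beta^S)=-2Z_i^S$ and strong convexity along $e_i$; (3) show that the sum over $i\in S^*\setminus S$ of these gains controls the total gap, up to the conditioning factor $\Lambda^{\min}/\Lambda^{\max}\ge\rho/L$, via the standard OMP-style inequality $w^t\Sigma_{S^*}w \le \frac{L}{\rho}\,\norm{(\Sigma_{S^*}w)_{S^*\setminus S}}_2^2/\text{(something)}$ — more directly, $\norm{(\Sigma_{S^*}w)_{S^*\setminus S}}_\infty$ relates to $\norm{\Sigma_{S^*}w}_2$ since the $S$-block of $\Sigma_{S^*}w$ vanishes (as $Z_i^S=0$ for $i\in S$, already noted in the paper); (4) combine and take the max. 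The main obstacle I anticipate is step~(3): getting the right power of $\rho/L$ in relating the per-coordinate greedy gains to the global risk gap, which is precisely the place where the factor $\sqrt{\rho/L}$ (versus the naive $\rho/\sqrt{s^*-k}\norm{w}_2$ one would get from a pure $\ell_\infty/\ell_2$ norm comparison using only $\Lambda^{\min}\ge\rho$) enters — I'll need the block structure observation that $(\Sigma_{S^*}w)_S = 0$ so that the $\ell_2$ mass of $\Sigma_{S^*}w$ is entirely carried by the $s^*-k$ coordinates in $S^*\setminus S$, which is what makes $\sqrt{s^*-k}$ rather than $\sqrt{s^*}$ the right dimensional factor.
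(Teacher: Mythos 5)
Your first chain is already a complete and correct proof, and it is genuinely different from (and simpler than) the paper's argument: since $(\Sigma_{S^*}w)_i = Z_i^S = 0$ for $i\in S$ by first-order optimality of $\beta^S$, the vector $\Sigma_{S^*}w$ (with $w=\beta^{S^*}_{S^*}-\beta^S_{S^*}$) is supported on the $s^*-k$ coordinates of $S^*\setminus S$, so $\max_{i\in S^*\setminus S}|Z_i^S| = \|\Sigma_{S^*}w\|_\infty \ge \|\Sigma_{S^*}w\|_2/\sqrt{s^*-k} \ge \rho\|w\|_2/\sqrt{s^*-k}$, which dominates the claimed bound because $\sqrt{\rho^3/L}\le\rho$; you need not have hedged with the $\sqrt{s^*}$ fallback, and your worry that "the factor $\sqrt{\rho/L}$ must come from somewhere" is resolved by noting the lemma is simply not stated with the sharpest constant. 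The paper instead takes the route you sketch in your second half: it proves the one-dimensional risk-decrease identity (your step (2), its Lemma on $\inf_\alpha\mathbb{E}[(y+\alpha x-z)^2]$) and then an averaged greedy-gain bound $\inf_{\alpha,\,i\in S^*\setminus S}\mathcal{R}(\beta^S+\alpha\beta^{S^*}_ie_i)\le\mathcal{R}(\beta^S)-\frac{1}{s^*-k}\frac{\rho}{L}\,w^t\Sigma_{S^*}w$ obtained by averaging over $i\in S^*\setminus S$ with weights $\beta_i^{S^*}$ and optimizing a common step size $\eta$ (this is where $\mathbb{E}[x_i^2]\le L$ and $w^t\Sigma_{S^*}w\ge\rho\|w\|_2^2$ enter, producing exactly $\sqrt{\rho^3/L}$); your step (3) is the vague point in your reconstruction, and the paper resolves it by this weighted-averaging trick adapted from Zhang's fixed-design analysis rather than by the block-support observation you propose. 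What each buys: the paper's route mirrors the classical OMP progress argument and transfers to settings where one tracks risk decrease per iteration; your direct $\ell_\infty$-to-$\ell_2$-to-eigenvalue chain is shorter, uses only the support structure of $\Sigma_{S^*}w$ plus Assumption~\ref{ass:ass1}, and yields the strictly better constant $\rho$ in place of $\sqrt{\rho^3/L}$. The second inequality of the lemma you handle correctly in both versions, since $(\beta^{S^*}-\beta^S)_{S^*\setminus S}=\beta^{S^*}_{S^*\setminus S}$.
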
 
In this section we prove Lemma~\ref{lem:ord}, we begin by presenting the following technical lemmas adapted from \cite{zhang2009consistency} to fit the random design.

\begin{claim}\label{cl:add}
	Suppose Assumptions~\ref{ass:ass4} and~\ref{ass:ass1} hold. Then	
	for all $i\in [d]$: $\rho \le \mathbb{E}[x_i^2] \le  L$.
\end{claim}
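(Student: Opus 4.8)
The plan is to reduce the claim to the Rayleigh-quotient characterization of the eigenvalues of a principal submatrix of the covariance matrix $\Sigma$. First I would note that since $\mathbb{E}[x]=0$ by Assumption~\ref{ass:ass4}, one has $\mathbb{E}[x_i^2]=\mathrm{Var}(x_i)=\Sigma_{ii}$, the $i$-th diagonal entry of $\Sigma$. Hence it is enough to bound the diagonal entries of $\Sigma$ between $\rho$ and $L$.

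Next, fix $i\in[d]$. Since $1\le s^*\le d$, I can pick a subset $S\subseteq[d]$ with $|S|=s^*$ and $i\in S$; let $j\in[s^*]$ be the rank of $i$ inside $S$ (in increasing order), so that, with the paper's notation for matrix restriction, $(\Sigma_S)_{jj}=\Sigma_{ii}$. Writing $e_j\in\mathbb{R}^{s^*}$ for the $j$-th canonical basis vector, this reads $\Sigma_{ii}=e_j^{t}\Sigma_S e_j$, i.e.\ a Rayleigh quotient of the symmetric positive semidefinite matrix $\Sigma_S$ evaluated at a unit vector.

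Finally I would invoke Assumption~\ref{ass:ass1}: since $|S|=s^*$, all eigenvalues of $\Sigma_S$ lie in $[\rho,L]$, so for every unit vector $v\in\mathbb{R}^{s^*}$ we have $\rho\le v^{t}\Sigma_S v\le L$; applying this with $v=e_j$ yields $\rho\le\Sigma_{ii}\le L$, which is the desired conclusion. There is essentially no obstacle in this argument — the only points requiring (routine) care are the existence of a size-$s^*$ subset of $[d]$ containing $i$ and the bookkeeping that identifies $\Sigma_{ii}$ with the appropriate diagonal entry of $\Sigma_S$.
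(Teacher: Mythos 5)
Your proposal is correct and follows essentially the same route as the paper, which likewise deduces the claim from the fact that $\mathbb{E}[x_i^2]$ is a diagonal entry of $\Sigma_S$ for some $S$ with $|S|=s^*$ together with the eigenvalue bounds of Assumption~\ref{ass:ass1}; you merely spell out the Rayleigh-quotient step that the paper leaves implicit.
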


Claim~\ref{cl:add} is a direct consequence of Assumption~\ref{ass:ass1} stating that the eigenvalues of $\Sigma_S$ are lower bounded by $\rho$ and upper bounded by $L$, and the observation that $\mathbb{E}\left[x_i^2\right]$ are the diagonal terms of $\Sigma_S$.

\begin{lemma}\label{lem:ord1}
	Let $x,y$ and $z$ be real valued bounded and centered random variables, such that $\mathbb{E}\left[x^2\right]=1$. We have:
	\begin{equation*}
	\inf_{\alpha \in \mathbb{R}} \mathbb{E} \left[\left( y+\alpha x-z \right)^2\right] = \mathbb{E} \left[ \left(y-z\right)^2\right] -  \frac{1}{\mathbb{E}\left[x^2\right]} \mathbb{E}\left[ x \left(y-z \right)\right]  ^2. 
	\end{equation*}
\end{lemma}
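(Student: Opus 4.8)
The plan is to recognise this as the elementary one-dimensional least-squares (orthogonal projection) identity and to prove it by completing the square in $\alpha$. Introduce $w := y-z$, which is again a bounded centred random variable, so that every expectation appearing below is finite and the manipulations are justified. The quantity to compute is then $\inf_{\alpha\in\mathbb{R}}\mathbb{E}\big[(w+\alpha x)^2\big]$.

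Expanding the square and using linearity of expectation,
\[
\mathbb{E}\big[(w+\alpha x)^2\big] = \mathbb{E}\big[w^2\big] + 2\alpha\,\mathbb{E}[wx] + \alpha^2\,\mathbb{E}\big[x^2\big].
\]
This is a quadratic polynomial in $\alpha$ whose leading coefficient $\mathbb{E}[x^2]=1$ is strictly positive, so it attains its minimum at $\alpha^\star = -\,\mathbb{E}[wx]/\mathbb{E}[x^2]$, and the minimal value equals
\[
\mathbb{E}\big[w^2\big] - \frac{\mathbb{E}[wx]^2}{\mathbb{E}[x^2]}.
\]
Substituting $w = y-z$ back in gives precisely the stated formula (keeping the harmless factor $1/\mathbb{E}[x^2]$ so the identity reads without using $\mathbb{E}[x^2]=1$).

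There is no genuine obstacle here: the only thing to check is that the expectations are finite and that the leading coefficient is nonzero, both of which are immediate from the hypotheses ($x,y,z$ bounded and $\mathbb{E}[x^2]=1$). One could alternatively phrase the argument as projecting $w\in L^2$ onto the one-dimensional subspace $\mathrm{span}(x)$ and invoking the Hilbert-space projection theorem, with the orthogonality relation $\mathbb{E}[x(w+\alpha^\star x)]=0$ recovering the same $\alpha^\star$, but the direct computation above is shorter and self-contained.
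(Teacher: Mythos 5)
Your proof is correct and follows essentially the same route as the paper, which simply notes that the result follows from elementary algebra with the minimum attained at $\alpha^\star = -\mathbb{E}[x(y-z)]/\mathbb{E}[x^2]$. You merely spell out the completion of the square that the paper leaves implicit.
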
 

\begin{proof}
	The proof follows from simple algebra, the minimum is attained for $\alpha = -\frac{\mathbb{E}\left[ x(y-z)\right]}{\mathbb{E}[x^2]}$. 
\end{proof}

\begin{lemma}\label{lem:ord2}
	Let Assumptions~\ref{ass:ass4}, \ref{ass:ass3}, \ref{ass:ass1} and~\ref{ass:ass2} hold, consider a fixed subset $S \subsetneq S^*$ and denote $k:=|S|$. We have the following:
	\begin{equation*}
	\inf_{\alpha \in \mathbb{R}, i \in S^*\setminus S} \mathbb{E} \left[ \left( x^t \beta^{S} + \alpha \beta^{S^*}_i x_i - y \right)^2\right] \le	\mathbb{E} \left[ \left(x^t\beta^S - y\right)^2\right] - \frac{1}{s^* - k} \frac{\rho}{L} \mathbb{E}\left[ \paren[1]{x^t(\beta^{S^*} - \beta^S)}^2\right].
	\end{equation*}
\end{lemma}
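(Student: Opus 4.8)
The plan is to apply Lemma~\ref{lem:ord1} one coordinate at a time and then convert the resulting ``gain'' into a bound involving the scores $Z_i^S$. First I would fix $i\in S^*\setminus S$. Since $i\in S^*$ means $\beta^{S^*}_i\neq 0$, the set $\{\alpha\beta^{S^*}_i:\alpha\in\mathbb{R}\}$ equals $\mathbb{R}$, so
\[
\inf_{\alpha\in\mathbb{R}}\mathbb{E}\left[\left(x^t\beta^S+\alpha\beta^{S^*}_i x_i-y\right)^2\right]=\inf_{\alpha\in\mathbb{R}}\mathbb{E}\left[\left(x^t\beta^S+\alpha x_i-y\right)^2\right].
\]
The three random variables $x_i$, $x^t\beta^S$ and $y$ are bounded (by Assumption~\ref{ass:ass2} and Lemma~\ref{lem:tech}, which gives $\|\beta^S\|_2\le 2/\sqrt{\rho}$) and centred (by Assumption~\ref{ass:ass4}, since $\mathbb{E}[x]=0$ and $\mathbb{E}[\epsilon]=0$), so Lemma~\ref{lem:ord1} applies and the right-hand side equals
\[
\mathbb{E}\left[\left(x^t\beta^S-y\right)^2\right]-\frac{\mathbb{E}\left[x_i\left(x^t\beta^S-y\right)\right]^2}{\mathbb{E}[x_i^2]}.
\]
Now $\mathbb{E}[x_i(x^t\beta^S-y)]=-\mathbb{E}[x_i(y-x^t\beta^S)]=-Z_i^S$, and $\mathbb{E}[x_i^2]\le L$ by Claim~\ref{cl:add}; hence for each $i\in S^*\setminus S$,
\[
\inf_{\alpha\in\mathbb{R}}\mathbb{E}\left[\left(x^t\beta^S+\alpha\beta^{S^*}_i x_i-y\right)^2\right]\le\mathbb{E}\left[\left(x^t\beta^S-y\right)^2\right]-\frac{(Z_i^S)^2}{L}.
\]
Taking the infimum over $i\in S^*\setminus S$ replaces $(Z_i^S)^2$ by $\max_{i\in S^*\setminus S}(Z_i^S)^2$.

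It remains to lower bound $\max_{i\in S^*\setminus S}(Z_i^S)^2$. Set $v:=\beta^{S^*}-\beta^S$, whose support is contained in $S^*$. The chain of equalities in the proof of Lemma~\ref{lem:ineqzi} shows that $Z_i^S=e_i^t\Sigma v$ for all $i$, so that $(Z_i^S)_{i\in S^*}=\Sigma_{S^*}v_{S^*}$; moreover the first-order optimality condition for $\beta^S$ gives $Z_i^S=0$ for $i\in S$. Therefore the vector $\Sigma_{S^*}v_{S^*}\in\mathbb{R}^{s^*}$ is supported on $S^*\setminus S$, a set of cardinality $s^*-k$, and hence
\[
\max_{i\in S^*\setminus S}(Z_i^S)^2\ge\frac{1}{s^*-k}\sum_{i\in S^*\setminus S}(Z_i^S)^2=\frac{1}{s^*-k}\,\|\Sigma_{S^*}v_{S^*}\|_2^2=\frac{1}{s^*-k}\,v_{S^*}^t\Sigma_{S^*}^2 v_{S^*}.
\]
By Assumption~\ref{ass:ass1}, $\Sigma_{S^*}\succeq\rho I$, and since $\Sigma_{S^*}$ is symmetric positive definite this yields $\Sigma_{S^*}^2\succeq\rho\,\Sigma_{S^*}$, so $v_{S^*}^t\Sigma_{S^*}^2 v_{S^*}\ge\rho\,v_{S^*}^t\Sigma_{S^*}v_{S^*}=\rho\,\mathbb{E}[(x^tv)^2]$. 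Combining with the previous display and dividing by $L$,
\[
\inf_{\alpha\in\mathbb{R},\,i\in S^*\setminus S}\mathbb{E}\left[\left(x^t\beta^S+\alpha\beta^{S^*}_i x_i-y\right)^2\right]\le\mathbb{E}\left[\left(x^t\beta^S-y\right)^2\right]-\frac{\rho}{L(s^*-k)}\,\mathbb{E}\left[\left(x^t(\beta^{S^*}-\beta^S)\right)^2\right],
\]
which is the claim.

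I expect the delicate point to be the final matrix inequality: the bound must retain exactly the factor $\rho/L$. Using the two cruder estimates $\|\Sigma_{S^*}v_{S^*}\|_2\ge\rho\|v_{S^*}\|_2$ and $\mathbb{E}[(x^tv)^2]\le L\|v_{S^*}\|_2^2$ in succession would only produce the weaker factor $\rho^2/L^2$; one genuinely needs the operator inequality $\Sigma_{S^*}^2\succeq\rho\,\Sigma_{S^*}$ (valid since all eigenvalues of $\Sigma_{S^*}$ are at least $\rho$). The remaining ingredients---identifying $\mathbb{E}[x_i(x^t\beta^S-y)]$ with $-Z_i^S$, knowing $Z_i^S=0$ on $S$, and bounding the maximum below by the average---are routine bookkeeping already available from earlier parts of the paper.
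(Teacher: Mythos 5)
Your proof is correct, but it takes a genuinely different route from the paper's. The paper keeps a \emph{common} step size $\eta$ for all candidate coordinates, averages the objective over $i\in S^*\setminus S$, uses the optimality of $\beta^S$ to identify the cross term with $\mathbb{E}[(x^t(\beta^{S^*}-\beta^S))^2]$, and only then optimizes over $\eta$; this produces the Rayleigh-type ratio $\bigl(\mathbb{E}[(x^tv)^2]\bigr)^2\big/\sum_i\mathbb{E}[x_i^2]v_i^2$, which is bounded below by $\tfrac{\rho}{L}\mathbb{E}[(x^tv)^2]$ using $\mathbb{E}[(x^tv)^2]\ge\rho\|v\|^2$ and $\mathbb{E}[x_i^2]\le L$. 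You instead optimize $\alpha$ coordinate-wise first (which is exactly the use of Lemma~\ref{lem:ord1} that the paper defers to the proof of Lemma~\ref{lem:ord}), bound the max of the per-coordinate gains $(Z_i^S)^2/L$ by their average, identify $\sum_{i\in S^*\setminus S}(Z_i^S)^2$ with $\|\Sigma_{S^*}v_{S^*}\|_2^2$, and close with the operator inequality $\Sigma_{S^*}^2\succeq\rho\,\Sigma_{S^*}$. Both yield the identical constant $\rho/(L(s^*-k))$; your intermediate bound is in fact no weaker (by Cauchy--Schwarz, $v^t\Sigma^2v\ge(v^t\Sigma v)^2/\|v\|^2$), and it has the advantage of passing explicitly through the scores $Z_i^S$, which is the form actually consumed in Lemma~\ref{lem:ord}. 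Two small points worth flagging: the reduction $\{\alpha\beta_i^{S^*}:\alpha\in\mathbb{R}\}=\mathbb{R}$ uses $\beta_i^{S^*}\neq 0$ for $i\in S^*$ (true, since $\beta^{S^*}=\beta^*$ under the exact linear model and Assumption~\ref{ass:ass1}), and Lemma~\ref{lem:ord1} is stated with the normalization $\mathbb{E}[x^2]=1$, which $x_i$ need not satisfy --- but its displayed identity already carries the general factor $1/\mathbb{E}[x_i^2]$ and its completing-the-square proof needs no normalization (only $\mathbb{E}[x_i^2]\ge\rho>0$ from Claim~\ref{cl:add}), so your invocation is legitimate.
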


\begin{proof}
	Let $\eta \in \mathbb{R}$, we have:
	\begin{align*}
	\min_{i \in S^*\setminus S} \mathbb{E} \left[ \left( x^t \beta^{S} + \eta \beta^{S^*}_ix_i - y \right)^2\right] & \le \frac{1}{s^* - k} \sum_{i \in S^* \setminus S} \mathbb{E} \left[ \left( x^t \beta^{S} + \eta  \beta^{S^*}_i x_i - y \right)^2\right] \\
	&\le \mathbb{E}\left[ \left(x^t\beta^S - y\right)^2\right] + \frac{1}{s^*-k} \sum_{i \in S^* \setminus S}  \eta^2 \left( \beta^{S^*}_i\right)^2 \mathbb{E}\left[x_i^2\right] \\
	+ &\frac{1}{s^* - k} \sum_{i \in S^* \setminus S} 2\eta\beta^{S^*}_i  \mathbb{E} \left[ x_i \left(x^t\beta^S - y\right)\right].
	\end{align*}
	Recall that optimality of $\beta^S$ implies that for all $i\in S$: $\mathbb{E}\left[x_i \left(x^t\beta^S - y\right)\right]=0$. Hence:
	\begin{align*}
	\sum_{i \in S^* \setminus S}\beta^{S^*}_i  \mathbb{E} \left[ x_i \left(x^t\beta^S - y\right)\right]
	&= \sum_{i \in S^* \setminus S}\left( \beta^{S^*}_i - \beta^{S}_i\right) \mathbb{E} \left[ x_i \left(x^t\beta^S - y\right)\right]\\
	&= \sum_{i \in S^* }\left( \beta^{S^*}_i - \beta^{S}_i\right) \mathbb{E} \left[ x_i \left(x^t\beta^S - y\right)\right] \\
	&= \sum_{i \in S^* }\left( \beta^{S^*}_i - \beta^{S}_i\right) \mathbb{E} \left[ x_i \left(x^t\beta^S - x^t\beta^{S^*}\right)\right] \\
	&= \mathbb{E} \left[ \left( \beta^{S^*} - \beta^S \right)^t x \left(x^t \beta^{S} - x^t \beta^{S^*}\right) \right]\\
	&= \mathbb{E} \left[ \left(x^t \left( \beta^{S^*} - \beta^S \right)\right)^2 \right].
	\end{align*}
	Therefore:
	\begin{multline*}
	\left(s^* - k\right)\min_{i \in S^*\setminus S} \mathbb{E} \left[ \left( x^t \beta^{S} + \eta  \beta^{S^*}_i x_i - y \right)^2\right]\\
	\begin{aligned}
	& \le \left(s^* - k\right) \mathbb{E} \left[ \left(x^t\beta^S -y\right)^2\right]  \\
	& \qquad +\eta^2 \sum_{i \in S^* \setminus S} \mathbb{E}\left[x_i^2\right]\left( \beta^{S^*}_i - \beta^{S}_i \right)^2 +2\eta \mathbb{E}\left[ \left(x^t\left( \beta^{S^*} - \beta^S \right) \right)^2 \right].
	\end{aligned}
	\end{multline*}
	
	
	Optimizing over $\eta$ we obtain:
	\begin{equation*}\label{ineq:lem}
	\min_{\eta \in \mbr,i \in S^*\setminus S} \mathbb{E} \left[ \left( x^t \beta^{S} + \eta \left( \beta^{S^*}_i - \beta^{S}_i\right)x_i - y \right)^2\right] \le \mathbb{E} \left[ \left( x^t\beta^S - y\right)^2\right] -\frac{1}{s^*-k}\frac{\mathbb{E}\left[\left(x^t\left( \beta^{S^*} - \beta^S \right) \right)^2\right]^2}{ \sum_{i \in S^*}\mathbb{E}\left[x_i^2\right] \left( \beta^{S^*}_i - \beta^{S}_i\right)^2}.
	\end{equation*} 
	Observe that: $\mathbb{E}\left[ \left(x^t \left(\beta^{S^*} - \beta^S\right)\right)^2\right] = \norm{\Sigma_{S^*}^{1/2} \left( \beta^{S^*}- \beta^{S}\right) }_2^2 \ge \rho \norm{\beta^{S^*}- \beta^{S}}_2^2$. Moreover, $\mathbb{E}\left[x_i^2\right] \le L$. We plug in this inequality into the above
	and obtain the announced conclusion. 
\end{proof}

Now we prove Lemma~\ref{lem:ord}. Using Lemma~\ref{lem:ord1} we have:
\begin{equation*}
\inf_{\alpha \in \mathbb{R}, i \in S^*\setminus S} \mathbb{E} \left[ \left( x^t \beta^{S} + \alpha \beta^{S^*}_i x_i - y \right)^2\right] = \mathbb{E} \left[ (y-x^t\beta^S)^2\right] - \max_{i \in S^*\setminus S} \frac{1}{\left(\beta_i^{S^*}\right)^2\mathbb{E}\left[x_i^2\right]}\mathbb{E}\left[\beta_i^{S^*}x_i \left(x^t\beta^{S}-y\right)\right]^2,
\end{equation*} 
which is equivalent to:
\begin{equation*}
\max_{i \in S^*\setminus S} \frac{1}{\sqrt{\mathbb{E}\left[x_i^2\right]}}\mathbb{E}\left[x_i \left(x^t\beta^{S}-y\right)\right] = \left( \mathbb{E} \left[ (y-x^t\beta^S)^2\right] - \inf_{\alpha \in \mathbb{R}, i \in S^*\setminus S} \mathbb{E} \left[ \left( x^t \beta^{S} + \alpha \left( \beta^{S^*}_i - \beta^{S}_i\right)x_i - y \right)^2\right]   \right)^{1/2}  
\end{equation*}
Using Lemma~\ref{lem:ord2}, we have:
\begin{equation}\label{eq:lem_z}
\max_{i \in S^*\setminus S} \frac{1}{\sqrt{\mathbb{E}\left[x_i^2\right]}}\mathbb{E}\left[x_i \left(x^t\beta^{S}-y\right)\right] \ge \left(  \frac{1}{s^* - k} \frac{\rho}{L} \mathbb{E}\left[ \paren[1]{x^t(\beta^{S^*} - \beta^S)}^2\right]  \right)^{1/2}.
\end{equation}
Now we use Claim~\ref{cl:add} and inequality~\eqref{eq:lem_z}:
\begin{align*}
\max_{i \in S^*\setminus S} \mathbb{E}\left[x_i \left(x^t\beta^{S}-y\right)\right] &\ge  \max_{i \in S^*\setminus S} \sqrt{\frac{\rho}{\mathbb{E}\left[x_i^2\right]}}\mathbb{E}\left[x_i \left(x^t\beta^{S}-y\right)\right]\\
&\ge \sqrt{\rho}\max_{i \in S^*\setminus S} \frac{1}{\sqrt{\mathbb{E}\left[x_i^2\right]}}\mathbb{E}\left[x_i \left(x^t\beta^{S}-y\right)\right]\\
&\ge \sqrt{\rho} \left( \frac{1}{s^*-k} \frac{\rho}{L}\mathbb{E}\left[ \paren[1]{x^t(\beta^{S}- \beta^{S^*})}^2\right]   \right)^{1/2} \\
&\ge \frac{\rho}{\sqrt{L}} \frac{1}{\sqrt{s^* - k}}  \norm{\Sigma^{1/2}_{S^*} \left( \beta^{S^*} - \beta^{S}\right)}_2 \\
&\ge \sqrt{\frac{\rho^{3}}{L}} \frac{1}{\sqrt{s^* - k}} \norm{\beta^{S^*} - \beta^S}_2.	
\end{align*}
The conclusion follows from the definition $Z_i^S = \mathbb{E} \left[ x_i \left(x^t\beta^S - y\right)\right]$.

\section{Computational Complexity Comparisons}\label{sec:comp_compare}

\subsection{Proof of Corollary~\ref{corollary1}:}

Suppose Assumptions~\ref{ass:ass4}, \ref{ass:ass3}, \ref{ass:ass1} and ~\ref{ass:ass2} hold. Consider the procedure \selectproc given by Algorithm~\ref{algo:step}, \tryselectproc given by Algorithm~\ref{algo:select_mc}, and \optimproc as in Algorithm~\ref{alg:optim}. Assume that $S \subsetneq S^*$ and denote $k:=|S|$. Using the result of theorem~\ref{th:main} we have with probability at least $1-\delta$:

\begin{align*}
C_{\optimproc}^S &\le \kappa k^3 \max\left\lbrace \frac{1}{Z_{i^*}^2} ; \frac{\sqrt{k}}{Z_{i^*}} \right\rbrace \log \left( \frac{\bar{k}}{ \delta\left| Z_{i^*}\right|}\right); \\
C_{\tryselectproc}^S  & \le \kappa d \max\left\lbrace \frac{1}{Z_{i^*}^2} ; \frac{\sqrt{\bar{k}}}{Z_{i^*}} \right \rbrace \log \left( \frac{d}{ \delta\left| Z_{i^*}\right|}\right)\log \left( \frac{\bar{k}}{  \left| Z_{i^*}\right|}\right);\\
\end{align*}
where $\left|Z_{i^*}\right| = \max_{i \in [d]} \left\{ \left|Z_i\right|\right\}$, and $\kappa$ is a constant depending on $\rho, L, M$ and $\mu$ (for which the value may vary from line to line).

We plug-in the inequality of lemma~\ref{lem:ord} and obtain:
\begin{align*}
C_{\optimproc}^S &\le \kappa k^3 \max\left\lbrace \frac{s^* - k}{\norm{\beta^{S^*}_{S^* \setminus S}}_2^2} ; \frac{\sqrt{k(s^* - k)}}{\norm{\beta^{S^*}_{S^* \setminus S}}_2} \right\rbrace \log \left( \frac{\bar{k}}{ \delta \norm{\beta^{S^*}_{S^* \setminus S}}_2}\right); \\
C_{\tryselectproc}^S  & \le \kappa d \max\left\lbrace \frac{s^* - k}{\norm{\beta^{S^*}_{S^* \setminus S}}_2^2} ; \frac{\sqrt{\bar{k(s^* - k)}}}{\norm{\beta^{S^*}_{S^* \setminus S}}_2} \right \rbrace \log \left( \frac{d}{ \delta \norm{\beta^{S^*}_{S^* \setminus S}}_2}\right)\log \left( \frac{\bar{k}}{  \norm{\beta^{S^*}_{S^* \setminus S}}_2}\right);\\
\end{align*}

Hence, using the fact that $\left|S^* \setminus S\right| = s^* - k$ and the definition of $\tilde{\beta}_{(k+1)}$:

\begin{align*}
C_{\optimproc}^S &\le \kappa k^3 \max\left\lbrace \frac{1}{\tilde{\beta}_{(k+1)}^2} ; \frac{\sqrt{k}}{\tilde{\beta}_{(k+1)}} \right\rbrace \log (\frac{\bar{k}}{\delta\tilde{\beta}_{(k+1)}^2}); \\
C_{\tryselectproc}^S  & \le \kappa d \max\left\lbrace \frac{1}{\tilde{\beta}_{(k+1)}^2} ; \frac{\sqrt{k}}{\tilde{\beta}_{(k+1)}} \right\rbrace \log^2 (\frac{\bar{k}}{\delta\tilde{\beta}_{(k+1)}^2});\\
\end{align*}

The following claim concludes the proof:

\begin{claim}
	Under the assumptions of theorem~\ref{th:main}:
	\begin{equation*}
	\tilde{\beta}_{(k+1)}  \le  \frac{1}{\sqrt{\rho s^*}}	
	\end{equation*}
\end{claim}
\begin{proof}
	We have by definition of $\tilde{\beta}_{(k+1)}$:
	\begin{align*}
	\tilde{\beta}_{(k+1)}^2 &= \frac{1}{s^* - k} \sum_{i = k+1}^{s^*} \beta^2_{(i)}\\ 
	&\le \frac{s^* - k}{s^*} \quad \frac{1}{s^* - k} \sum_{i = k+1}^{s^*} \beta^2_{(i)} + \frac{k}{s^*} \quad\frac{1}{k}\sum_{i=1}^{k} \beta^2_{(i)} \\
	& \le \frac{1}{s^*} \sum_{i=1}^{s^*} \beta^2_{(i)} = \frac{1}{\rho s^*}
	\end{align*}
\end{proof}

\subsection{Computational complexity of the Orthogonal Matching Pursuit} 

\label{se:complexity}

We consider OMP (Algorithm~\ref{algo:omp}) as a benchmark and show that OOMP is more efficient in time complexity. OMP was initially derived under the fixed design setting presented below:


Let $\bm X = [x_1,\dots,x_d] \in \mathbb{R}^{n \times d}$ an $n \times d$ data matrix and $\bm Y = [y_1,\dots,y_n]$ a response vector generated according to the sparse model:
\begin{equation*}
\bm Y = \bm X\beta^{S^*} + \bm \epsilon.
\end{equation*} 

Where $\bm \epsilon = [\epsilon_1,\dots,\epsilon_n]$ is a zero mean random noise vector and $\text{support}(\beta^{S^*}) = S^*$. Define the following quantities:
\begin{equation*}
\hat{\mu}_{S^{*}} = \max_{i \notin S^*} \norm{ \left( \bm X^t_{S^*}\bm X_{S^*} \right)^{-1} \bm X_{S^*}^t \bm x_i }_1,
\end{equation*}
and let $\hat{\rho}_{S^{*}}$ be the least eigenvalue of the empirical covariance matrix $\hat{ \bm \Sigma}_{S^*} = \frac{1}{n} \bm X_{S^*}^t \bm X_{S^*}$.

\paragraph{OMP theoretical guarantees}
\begin{assumption}\label{ass:omp}
	Assume that:
	\begin{itemize}
		\item $\hat{\mu}_{S^*} <1$ and $\hat{\rho}_{S^*} >0$.
		\item $\epsilon_i$, for $i\in [1,n]$ are i.i.d random variables bounded by $\sigma$.
	\end{itemize}
\end{assumption}

\begin{theorem}[\cite{zhang2009consistency}]\label{th:omp}
	Consider the OMP procedure (Algorithm~\ref{algo:omp}), suppose Assumption~\ref{ass:omp} holds. Then for all $\delta \in (0,1)$, if the sample size $n$ satisfies:
	\begin{equation}\label{eq:condn}
	n \ge \frac{18 \sigma^2 \log(4d/\delta)}{(1-\hat{\mu}_{S^*})^2 \hat{\rho}_{S^*}^2 \min_{i \in S^*} |\beta^{S^*}_i|^2},
	\end{equation}
	then the output of the procedure Algorithm~\ref{algo:omp} recovers $S = S^*$,
	with probability at least $1-\delta$.	
\end{theorem}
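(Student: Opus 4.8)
The statement essentially reproduces the guarantee of \cite{zhang2009consistency}, so the plan is to follow the classical two-part argument: condition on a single high-probability event controlling the noise, and then run a purely deterministic greedy induction, which is the fixed-design, empirical counterpart of the Oracle OMP analysis (Lemmas~\ref{lem:ineqzi} and~\ref{lem:ord}). Throughout, for $S\subseteq[d]$ let $P_S$ denote the orthogonal projector onto the column span of $\bm X_S$, and let $\widehat\beta^{\mathrm{orac}}:=(\bm X_{S^*}^t\bm X_{S^*})^{-1}\bm X_{S^*}^t\bm Y=\beta^{S^*}_{S^*}+(\bm X_{S^*}^t\bm X_{S^*})^{-1}\bm X_{S^*}^t\bm\epsilon$ be the oracle least-squares estimator. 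The key identity, valid for $S\subseteq S^*$, is the decomposition of the OMP residual
\begin{equation*}
\bm Y-\bm X\bar\beta^S=(I-P_S)\bm Y=(I-P_S)\bm X_{S^*}\widehat\beta^{\mathrm{orac}}+(I-P_{S^*})\bm\epsilon,
\end{equation*}
which holds because $(I-P_S)(I-P_{S^*})=I-P_{S^*}$ (as $\mathrm{span}(\bm X_S)\subseteq\mathrm{span}(\bm X_{S^*})$). Its purpose is that $(I-P_{S^*})\bm\epsilon$ and the oracle error $\widehat\beta^{\mathrm{orac}}-\beta^{S^*}_{S^*}$ are attached to the \emph{fixed} set $S^*$, so all the stochasticity needed in the analysis is carried by quantities supported on $S^*$, and no union bound over the (noise-dependent) sequence of selected sets is required.

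I would first fix the threshold as $\eta:=c\,\sigma\sqrt{n\log(4d/\delta)}$ for an appropriate numerical constant $c$, and define the good event $\mathcal{E}$ on which simultaneously: (i) $\|(\bm X_{S^*}^t\bm X_{S^*})^{-1}\bm X_{S^*}^t\bm\epsilon\|_\infty$ is at most $\sigma\sqrt{2\log(8d/\delta)/(n\widehat\rho_{S^*})}$; (ii) $\max_{j\notin S^*}|\bm X_{.j}^t(I-P_{S^*})\bm\epsilon|$ is at most $\sigma\sqrt{2n\log(8d/\delta)}$; and (iii) $\|P_{S^*}\bm\epsilon\|$ is of order $\sigma\sqrt{s^*}$. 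Each of (i)–(ii) follows from Hoeffding's inequality for the bounded noise applied to a fixed linear functional (whose $\ell_2$-coefficient norm is bounded using $\widehat\rho_{S^*}$ in (i) and by $\|\bm X_{.j}\|=\sqrt n$ in (ii)), followed by a union bound over at most $d$ coordinates; (iii) is a standard bound on a projection of a bounded vector. Then $\mathbb{P}(\mathcal{E})\ge 1-\delta$.

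On $\mathcal{E}$ I would prove, by induction on the current set $S$ with invariant $S\subseteq S^*$, that the algorithm (a) does not stop and selects a feature of $S^*\setminus S$ whenever $S\subsetneq S^*$, and (b) stops once $S=S^*$. For (a), apply the empirical irrepresentable inequality — for $j\notin S^*$ and any $v\in\mathrm{span}(\bm X_{S^*})$, $|\bm X_{.j}^tv|=|((\bm X_{S^*}^t\bm X_{S^*})^{-1}\bm X_{S^*}^t\bm x_j)^t\bm X_{S^*}^tv|\le\widehat\mu_{S^*}\max_{i\in S^*}|\bm X_{.i}^tv|$ — to $v=(I-P_S)\bm X_{S^*}\widehat\beta^{\mathrm{orac}}$, using that $\bm X_{.i}^t(I-P_{S^*})\bm\epsilon=0$ for $i\in S^*$ and that $\bm X_{.i}^t(I-P_S)=0$ for $i\in S$; this yields $\max_{j\notin S^*}|\bm X_{.j}^t(\bm Y-\bm X\bar\beta^S)|\le\widehat\mu_{S^*}\max_{i\in S^*\setminus S}|\bm X_{.i}^t(\bm Y-\bm X\bar\beta^S)|+\sigma\sqrt{2n\log(8d/\delta)}$. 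Next, an empirical version of Lemma~\ref{lem:ord} (mimicking Lemmas~\ref{lem:ord1}–\ref{lem:ord2} with $\Sigma_{S^*}$ replaced by $\tfrac1n\bm X_{S^*}^t\bm X_{S^*}$, the Schur-complement lower bound $\bm X_{S^*\setminus S}^t(I-P_S)\bm X_{S^*\setminus S}\succeq n\widehat\rho_{S^*}I$, and the control of the oracle error from $\mathcal{E}$) gives $\max_{i\in S^*\setminus S}|\bm X_{.i}^t(\bm Y-\bm X\bar\beta^S)|\ge n\widehat\rho_{S^*}\min_{i\in S^*}|\beta^{S^*}_i|-(\text{noise terms})$. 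Combining the two bounds with the sample-size hypothesis~\eqref{eq:condn} (which makes the noise terms smaller than a fixed fraction of $(1-\widehat\mu_{S^*})\widehat\rho_{S^*}\min_i|\beta^{S^*}_i|$) shows that the maximal correlation strictly exceeds both $\eta$ and $\widehat\mu_{S^*}$ times itself, so the greedy pick lies in $S^*\setminus S$. For (b), when $S=S^*$ the residual equals $(I-P_{S^*})\bm\epsilon$, whose correlation with every column is $\le\sigma\sqrt{2n\log(8d/\delta)}<\eta$ on $\mathcal{E}$, so the algorithm stops with $S=S^*$.

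The routine part is the deterministic greedy induction; the delicate part is the bookkeeping: making the residual decomposition and the "kill" identities carry every $S$-dependent noise contribution onto the fixed set $S^*$ (so that the union bound stays over $\le d$ events), and tracking the numerical constants through the empirical analogues of Lemmas~\ref{lem:ord1}–\ref{lem:ord2} and the two Hoeffding bounds so that the final threshold comes out exactly as $18\sigma^2\log(4d/\delta)/((1-\widehat\mu_{S^*})^2\widehat\rho_{S^*}^2\min_i|\beta^{S^*}_i|^2)$. For the exact constant I would simply invoke the corresponding theorem of \cite{zhang2009consistency} rather than re-optimize it.
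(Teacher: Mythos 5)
This statement is not proved in the paper at all: Theorem~\ref{th:omp} is imported verbatim from \cite{zhang2009consistency} as a benchmark for the complexity comparison, and the appendix section where it appears contains no argument for it, so there is no internal proof to compare your attempt against. Your sketch reconstructs the standard argument of that reference: the residual decomposition $\bm Y-\bm X\bar\beta^S=(I-P_S)\bm X_{S^*}\widehat\beta^{\mathrm{orac}}+(I-P_{S^*})\bm\epsilon$ correctly pushes all stochasticity onto quantities indexed by the fixed set $S^*$ (so the union bound stays over $d$ coordinates rather than over the random selection path), the empirical irrepresentable inequality via H\"older with $\|(\bm X_{S^*}^t\bm X_{S^*})^{-1}\bm X_{S^*}^t\bm x_j\|_1\le\hat{\mu}_{S^*}$ is the right fixed-design counterpart of Lemma~\ref{lem:ineqzi}, and the lower bound of order $n\hat{\rho}_{S^*}\min_i|\beta^{S^*}_i|$ on the maximal in-support correlation mirrors Lemma~\ref{lem:ord}; the scaling of these two bounds against a noise term of order $\sigma\sqrt{n\log(d/\delta)}$ is exactly what produces condition~\eqref{eq:condn}. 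Two caveats, neither fatal: the theorem as stated in the paper omits the choice of the stopping threshold $\eta$, which your proof must (and does) fix at the order $\sigma\sqrt{n\log(d/\delta)}$ for the induction and the stopping step to be consistent; and your proposal is explicitly a sketch that defers the constant $18$ and the constant-tracking through the empirical analogues of Lemmas~\ref{lem:ord1}--\ref{lem:ord2} to the cited reference, so it is an outline of the known proof rather than a self-contained one. As an outline it is sound and takes the same route as the source the paper relies on.
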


\paragraph{OMP computational complexity:}

We derive the computational complexity of OMP. Consider one iteration of Algorithm~\ref{algo:omp} and denote $k:=|S|$. We assimilate the command:
\begin{equation}\label{com:n1}
i \gets \text{argmax}_{j \notin S}|\bm{X}_{.j}^{t}(\bm{Y}-\bm{X}\bar{\beta})|
\end{equation}  
to \tryselectproc and denote $C_{\tryselectproc,k}^{omp}$ its computational complexity. Moreover, we assimilate the command:
\begin{equation}\label{com:n2}
\bar{\beta} \gets \underset{\text{supp}(\beta) \subseteq S}{\text{argmin}} \|\bm{X}\beta-\bm{Y}\|^{2}
\end{equation} 
to \optimproc and denote $C_{\optimproc,k}^{omp}$ its computational complexity.
We assume the OMP is run with $n^{\text{OMP}}$ prescribed by Theorem~\ref{th:omp} for exact support recovery. We introduce the following additional notation: $a \simeq b$ if there exists numerical constants $c_1$ and $c_2$ such that:
$a \le c_1 b$ and $b \le c_2 a$. 

\begin{lemma}
	Consider Algorithm~\ref{algo:omp} with inputs $(\bm X, \bm Y, \delta)$, and suppose assumption~\ref{ass:omp} holds. Then if $n$ satisfies~\eqref{eq:condn} we have:
	\begin{align*}
	C_{\optimproc,k}^{omp} &\simeq  \frac{\sigma^2 k \log(d/\delta)}{\left(1 - \hat{\mu}_{S^*}\right)^2 \hat{\rho}_{S^*}^2 \min_{i \in S^*} |\beta^{S^*}|^2};\\
	C_{\tryselectproc,k}^{omp} &\simeq \frac{\sigma^2 d \log(d/\delta)}{\left(1 - \hat{\mu}_{S^*}\right)^2 \hat{\rho}_{S^*}^2 \min_{i \in S^*} |\beta^{S^*}|^2}.
	\end{align*}
\end{lemma}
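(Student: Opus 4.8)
The plan is to reduce the statement to an elementary operation count. Writing $n^{\text{OMP}}$ for the prescribed sample size, Theorem~\ref{th:omp} gives that $n^{\text{OMP}}$ equals the right-hand side of~\eqref{eq:condn} up to a numerical constant, and $\log(4d/\delta)\simeq\log(d/\delta)$ in the regime of interest (where $d/\delta\ge e$), so
\[
n^{\text{OMP}} \simeq \frac{\sigma^2 \log(d/\delta)}{\left(1 - \hat{\mu}_{S^*}\right)^2 \hat{\rho}_{S^*}^2 \min_{i \in S^*} |\beta^{S^*}_i|^2}.
\]
It therefore suffices to prove that, at the iteration of Algorithm~\ref{algo:omp} with $|S|=k$, command~\eqref{com:n1} costs $C_{\tryselectproc,k}^{omp}\simeq d\, n^{\text{OMP}}$ and command~\eqref{com:n2} costs $C_{\optimproc,k}^{omp}\simeq k\, n^{\text{OMP}}$; the two displayed formulas then follow by substitution.

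For the \tryselectproc command~\eqref{com:n1}: since $\bar{\beta}$ is supported on $S$ with $|S|=k$, forming $\bm{X}\bar{\beta}=\bm{X}_S\bar{\beta}_S$ is a matrix–vector product in $O(nk)$ operations, computing the residual $\bm{Y}-\bm{X}\bar{\beta}$ costs a further $O(n)$, each of the $d-k$ inner products $\bm{X}_{.j}^{t}(\bm{Y}-\bm{X}\bar{\beta})$ costs $O(n)$, and the concluding $\mathrm{argmax}$ costs $O(d)$. As $k\le s^*\le d$, the total is $\Theta(nd)$; a matching lower bound is forced by the mere need to read the $n\times(d-k)$ block of $\bm{X}$ on the unselected coordinates. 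Hence $C_{\tryselectproc,k}^{omp}\simeq d\, n^{\text{OMP}}$.

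For the \optimproc command~\eqref{com:n2}, the one point requiring care is that re-solving the ordinary least squares problem on the $k$ active variables from scratch costs $\Theta(nk^2)$, a factor $k$ larger than claimed; the stated bound relies on the fact that across the outer loop of OMP the active set grows by exactly one index, so one maintains an incremental QR (equivalently Cholesky) factorization of $\bm{X}_S$. Appending the newly selected column and re-orthogonalizing it against the existing ones costs $O(nk)$, updating $\bm{X}_S^{t}\bm{Y}$ costs $O(n)$, and recovering $\bar{\beta}_S$ by back-substitution costs $O(k^2)$. Since running OMP already presupposes $n^{\text{OMP}}\ge s^*\ge k$ (needed e.g.\ for $\hat{\rho}_{S^*}>0$), we get $O(nk+k^2)=O(nk)$, with the orthogonalization forcing $\Omega(nk)$ arithmetic, so $C_{\optimproc,k}^{omp}\simeq k\, n^{\text{OMP}}$. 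Plugging the expression for $n^{\text{OMP}}$ into these two estimates yields exactly the claimed formulas; the incremental-factorization argument is the only non-routine ingredient, everything else being bookkeeping.
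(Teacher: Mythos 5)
Your proposal is correct and follows essentially the same route as the paper: the paper likewise costs the selection command at $dn^{\text{OMP}}$ (compute $\bm{X}^{t}(\bm{Y}-\bm{X}\bar{\beta})$ plus an argmax) and the least-squares command at $kn^{\text{OMP}}$ via a rank-one (incremental factorization) update, then substitutes the prescribed $n^{\text{OMP}}$ from Theorem~\ref{th:omp}. You merely spell out the operation counts and the matching lower bounds that the paper's one-line proof leaves implicit.
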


\begin{proof}
	Performing command~\eqref{com:n1} requires computing $\bm X^t \left( \bm Y - \bm X \bar{\beta}\right)$ and selecting the maximum of a list of (at most) $d$ elements, thus $C_{\tryselectproc,k}^{omp} \simeq dn^{\text{OMP}}$.
	Command~\eqref{com:n2} can be performed using a rank one update. Thus: $C_{\optimproc,k}^{omp} \simeq kn^{\text{OMP}}$. To conclude we use Theorem~\ref{th:omp}, which prescribes:
	\begin{equation*}
	n^{\text{OMP}} =  \frac{18 \sigma^2 \log(4d/\delta)}{(1-\hat{\mu}_{S^*})^2 \hat{\rho}_{S^*}^2 \min_{i \in S^*} |\beta^{S^*}_i|^2}.
	\end{equation*} 
\end{proof}

Hence, the computational complexity for full support recovery using OMP satisfies:

\begin{equation}\label{eq:comp}
C^{\text{OMP}} = \mathcal{O}\left( \frac{s^*d\log(d/\delta)}{\min_{i \in S^*} \{(\beta_i^*)^2\}}\right)
\end{equation}

\subsection{SSR computational complexity}
SSR (Streaming Sparse Regression) is an online procedure guaranteed to perform well under similar conditions to the Lasso \cite{steinhardt2014statistics}. Theoretical guarantees show that if the number of iterations is large enough the support recovery is achieved with high probability.

Theorem 8.2 in \cite{steinhardt2014statistics} states that, the output vector $\hat{\beta}_T$ satisfies with probability at least $1-5\delta$, $\text{supp}(\hat{\beta}_T) \subseteq S^*$ and:
\begin{equation}\label{eq:ssr}
\norm{\hat{\beta}_T - \beta^*}^2 = \mathcal{O}\left( \frac{(s^*)^2\log(d\log(T)/\delta)}{T}\right),
\end{equation} 

where we used the bound $B \le 6\sqrt{s^*} \frac{M^2}{\sqrt{\rho}}$.
Hence, a sufficient condition to achieve the full support recovery $\text{supp}(\hat{\beta}_T) = S^*$ is :  $\norm{\hat{\beta}_T - \beta^*}^2 \le \min_{i \in S^*} \{(\beta_i^*)^2\}$. Using \eqref{eq:ssr} leads to the following bound on the number of iterations to recover all the support of $\beta^*$:
\begin{equation*}
T = \mathcal{O} \left( \frac{(s^*)^2\log(d/\delta)}{\min_{i \in S^*} \{(\beta_i^*)^2\}}\right) 
\end{equation*}

One iteration of Algorithm 2 in \cite{steinhardt2014statistics} has a computational complexity of $\mathcal{O}(d)$. Hence, the total computational complexity for full support recovery $C^{\text{SSR}}$ satisfies:
\begin{equation}\label{eq:cssr}
C^{\text{SSR}} = \mathcal{O}\left( \frac{(s^*)^2d\log(d/\delta)}{\min_{i \in S^*} \{(\beta_i^*)^2\}}\right)
\end{equation} 

\subsection{Proof of Corollary~\ref{corollary2}}

Assuming that $d>(s^*)^3$, we have for every $S \subset S^*$: $C^S_{\optimproc} \le C^S_{\tryselectproc}$. Hence, using corollary~\ref{corollary1}, we have:
\begin{equation}\label{eq:coomp}
C^{OOMP} \le \kappa d \sum_{i=1}^{s^*} \frac{1}{\tilde{\beta}^2_{(s^*-i)}} \log\left(\frac{d}{\delta \beta_{(s^*)}^2}\right)\log\left(\frac{s^*}{\beta_{(s^*)}^2}\right)
\end{equation}   

We plug-in the bounds in \eqref{eq:comp} and \eqref{eq:cssr}:
\begin{align}\label{eq:compare_c}
C^{OOMP} &\le \kappa \sum_{i=1}^{s^*} \frac{\beta_{(s^*)}^2}{\tilde{\beta}^2_{(s^*-i)}} \log\left(\frac{d}{\delta \beta_{(s^*)}^2}\right)\log\left(\frac{s^*}{ \beta_{(s^*)}^2}\right)\frac{C^{\text{OMP}}}{s^*\log(d/\delta)}. \\
C^{OOMP} &\le \kappa \sum_{i=1}^{s^*} \frac{\beta_{(s^*)}^2}{\tilde{\beta}^2_{(s^*-i)}} \log\left(\frac{d}{\delta \beta_{(s^*)}^2}\right)\log\left(\frac{s^*}{ \beta_{(s^*)}^2}\right)\frac{C^{\text{SSR}}}{(s^*)^2\log(d/\delta)}. \\
\end{align}

Recall that:
\begin{equation*}
\frac{\log\left(\frac{d}{\delta \beta_{(s^*)}^2}\right) \log\left(\frac{s^*}{ \beta_{(s^*)}^2}\right)}{\log(d/\delta)} \le \log^2\left(\frac{s^*}{ \beta_{(s^*)}^2}\right).
\end{equation*}

We conclude that:
\begin{align*}
\frac{C^{\text{OOMP}}}{C^{\text{OMP}}} &\le  \kappa \log^2\left(\frac{s^*}{ \beta_{(s^*)}^2}\right)\frac{1}{s^*}\sum_{i=1}^{s^*} \frac{\beta_{(s^*)}^2}{\tilde{\beta}_{(i)}^2} \quad C^{\text{OMP}}; \\
\frac{C^{\text{OOMP}}}{C^{\text{SSR}}} &\le  \kappa \log^2\left(\frac{s^*}{ \beta_{(s^*)}^2}\right) \frac{1}{(s^*)^2}\sum_{i=1}^{s^*} \frac{\beta_{(s^*)}^2}{\tilde{\beta}_{(i)}^2} \quad C^{\text{SSR}}; \\
\end{align*}

where $\kappa$ is a constant depending only on $L,M, \rho$ and $\mu$. 

\subsection{A specific scenario: Polynomially decaying coefficients} 

We consider the case where the coefficients of $\beta^{*}$ are given by
\begin{equation}\label{eq:coefs_express}
\beta^*_q = \frac{1}{\sqrt{s^*}} \left(1 - \frac{q-1}{s^*}\right)^\gamma, \quad \text{for } q \in [s^*],
\end{equation}
with $\gamma > 0$. We omit the superscript $*$ to ease notations, in the remainder of this section, all the inequalities and equalities are up to
factors depending only only on $\rho, L, M$ and $\mu$. 

The following lemma provides a bound on the computational complexity of OOMP, OMP and SSR.

\begin{lemma}
	Under the assumptions of Theorem~\ref{th:main}, suppose that $d>(s^*)^3$ and the coefficients of $\beta^*$ are given by \eqref{eq:coefs_express}. Then with probability at least $1-\delta$:
	If $\gamma \neq \frac{1}{2}$:
	\begin{align*}
	C^{\text{OOMP}} &\le \kappa d \left\lbrace \frac{2\gamma \left(2 \gamma +1\right)}{\abs{2\gamma-1}} s^{2\gamma+1} + \frac{2\gamma+1}{\abs{2\gamma-1}} s^2 \right \rbrace \log\left(d/\delta\right)\log\left(s\right) \\
	C^{\text{OMP}} &\simeq  d s^{2\gamma+2} \log(d/\delta)  \\
	\end{align*}
	If $\gamma = \frac{1}{2}$:
	\begin{align*}
	C^{\text{OOMP}} &\le \kappa d  s^2 \log^2(s)\log\left(d/\delta\right) \\
	C^{\text{OMP}} &\simeq  d s^{3} \log(d/\delta)  \\	
	\end{align*} 
\end{lemma}

\begin{proof}
	Recall that $\tilde{\beta}_{(s-k+1)}^2 = \frac{1}{k} \sum_{i=s-k+1}^{s} \beta_i^2$.

	If $\gamma \neq \frac{1}{2}$:
	\begin{align*}
	\sum_{k=0}^{s-1} \frac{1}{\tilde{\beta}_{(s-k)}^2} &= \sum_{k=0}^{s-1} \frac{s-k}{\sum_{q=k+1}^{s}\beta_q^2} \\
	&\le \sum_{k=0}^{s-1} \frac{s-k}{\frac{1}{s} \sum_{q=k+1}^{s} \left(1 - \frac{q-1}{s}\right)^{2\gamma}} \\
	&\le \sum_{k=0}^{s-1} \frac{s^{2\gamma+1}\left(s-k\right)}{\sum_{q=1}^{s-k}q^{2\gamma}}\\
	&\le \sum_{k=0}^{s-1} \frac{s^{2\gamma+1}(s-k)}{\frac{1}{2\gamma +1}(s-k)^{2\gamma+1}} \\
	&\le (2\gamma+1) \sum_{k=0}^{s-1} \frac{s^{2\gamma+1}}{(s-k)^{2\gamma}}\\
	&\le (2\gamma+1) s \sum_{k=0}^{s-1} \left(1 - \frac{k}{s}\right)^{-2\gamma}\\
	&\le (2\gamma+1) s^{2} \left( \frac{1}{s} \sum_{k=0}^{s-2} \left(1 - \frac{k}{s}\right)^{-2\gamma} +  s^{2\gamma-1}\right)\\
	&\le (2\gamma+1) s^{2} \left( \frac{1}{2\gamma-1} \left(\frac{1}{s^{1-2\gamma}}-1\right) + s^{2\gamma-1}\right).
	\end{align*}
	If $\gamma = \frac{1}{2}$:
	
	\begin{align*}
	\sum_{k=0}^{s-1} \frac{1}{\tilde{\beta}_{(s-k)}^2} &= \sum_{k=0}^{s-1} \frac{s-k}{\sum_{q=k+1}^{s}\beta_q^2}\\
	&\le \sum_{k=0}^{s-1} \frac{s-k}{\frac{1}{s} \sum_{q=k+1}^{s} \left(1 - \frac{q-1}{s}\right)} \\
	&\le \sum_{k=0}^{s-1} \frac{s^{2}\left(s-k\right)}{\sum_{q=1}^{s-k}q}\\
	&\le \sum_{k=0}^{s-1} \frac{s^{2}(s-k)}{\frac{1}{2}(s-k)^{2}} \\
	&\le 2 \sum_{k=0}^{s-1} \frac{s^{2}}{(s-k)}\\
	&\le s^{2} \log\left(s\right),
	\end{align*}
	which gives the result.
	
\end{proof}

Using the lemma above, we conclude that, if $d>(s^*)^3$:
\begin{equation*}
\frac{C^{\text{OOMP}}}{C^{\text{OMP}}} \le \kappa \frac{\log^2(s)}{s^{\min\{2\gamma, 1\}}}
\end{equation*}

\end{document}